\definecolor{block-gray}{gray}{0.9}
\newtcolorbox{blockquote-orange}{colback=orange!15!white,grow to right by=-1mm,grow to left by=-1mm,boxrule=0pt,boxsep=0pt,breakable}
\newtcolorbox{blockquote-grey}{colback=block-gray,grow to right by=-1mm,grow to left by=-1mm,boxrule=0pt,boxsep=0pt,breakable}
\titlespacing{\paragraph}{%
  0pt  %
}{%
  -0.2em  %
}{%
  1em  %
}
\theoremstyle{plain}
\newtheorem{theorem}{Theorem}[section]
\newtheorem{proposition}[theorem]{Proposition}
\newtheorem{lemma}[theorem]{Lemma}
\theoremstyle{definition}
\newtheorem{definition}[theorem]{Definition}
\newtheorem{assumption}[theorem]{Assumption}
\newtheorem{remark}[theorem]{Remark}
\newtheorem{example}{Example}
\crefname{example}{example}{examples}
\Crefname{example}{Example}{Examples}
\newtheorem{claim}{Claim}
\crefname{claim}{Claim}{Claims}
\newcommand{\NA}{\text{\textcolor{red}{N/A}}}
\title{Scalable Exploration via Ensemble++}
\author{%
    Yingru Li$^{1, 2}$\thanks{Equal contribution. Code: \url{https://github.com/szrlee/Ensemble_Plus_Plus}.}~~\thanks{Corresponding author: \texttt{szrlee@gmail.com}.}~~~~~
    Jiawei Xu$^1$\footnotemark[1]~~~~~
  Baoxiang Wang$^1$~~~~~
  Zhi-Quan Luo$^{1, 2}$ \\
  $^1$The Chinese University of Hong Kong, Shenzhen \\
  $^2$Shenzhen Research Institute of Big Data \\
}
\begin{document}

\maketitle

\begin{abstract}
    Thompson Sampling is a principled method for balancing exploration and exploitation, but its real-world adoption faces computational challenges in large-scale or non-conjugate settings. While ensemble-based approaches offer partial remedies, they typically require prohibitively large ensemble sizes. We propose Ensemble++, a scalable exploration framework using a novel shared-factor ensemble architecture with random linear combinations. For linear bandits, we provide theoretical guarantees showing that Ensemble++ achieves regret comparable to exact Thompson Sampling with only $\Theta(d \log T)$ ensemble sizes--significantly outperforming prior methods. Crucially, this efficiency holds across both compact and finite action sets with either time-invariant or time-varying contexts without configuration changes. We extend this theoretical foundation to nonlinear rewards by replacing fixed features with learnable neural representations while preserving the same incremental update principle, effectively bridging theory and practice for real-world tasks. Comprehensive experiments across linear, quadratic, neural, and GPT-based contextual bandits validate our theoretical findings and demonstrate Ensemble++'s superior regret-computation tradeoff versus state-of-the-art methods.
\end{abstract}

\section{Introduction}
\label{sec:intro}

Balancing \emph{exploration} and \emph{exploitation} is a core challenge in sequential decision-making problems, with applications ranging from online recommendation systems, automated content moderation to robotics, personalized healthcare and computer-using agents. A prominent Bayesian solution is \emph{Thompson Sampling} (TS)~\citep{thompson1933likelihood,russo2018tutorial}, which maintains a posterior over unknown parameters (or reward functions). At each step, it samples a model hypothesis from this posterior and selects the action appearing optimal under that hypothesis, elegantly balancing exploration of uncertain actions and exploitation of seemingly high-reward ones.
Despite its elegant theory and strong empirical performance in simpler (conjugate) bandit scenarios, TS encounters serious \emph{scalability} hurdles in modern settings with high-dimensional or non-conjugate (e.g., neural) models. Maintaining exact posterior samples can be computationally prohibitive, often requiring iterative approximation methods (Laplace, MCMC, or variational inference) that become expensive as the time horizon \(T\) grows.

\paragraph{Ensemble-Based Approximate Sampling.}
A widely adopted alternative to full Bayesian updates is \emph{ensemble sampling}, which keeps \(M\) model replicas in parallel and randomly picks one each round to act, thus approximating Thompson Sampling's ``draw from the posterior'' step~\citep{osband2015bootstrapped,osband2016deep,osband2019deep}. However, prior theoretical results matching TS's optimal regret in linear bandits, such as \citet{qin2022analysis}, require an ensemble size \(M=\Omega(T \cdot |\mathcal{X}|)\), where \(\mathcal{X}\) is the action space. This large-\(M\) requirement is often infeasible in high-dimensional or long-horizon tasks. Moreover, many ensembles demand either repeated retraining or large architectural overhead, raising practical concerns in real-time or resource-constrained environments.

\begin{figure}[htbp]
  \centering
  \includegraphics[width=0.6\textwidth]{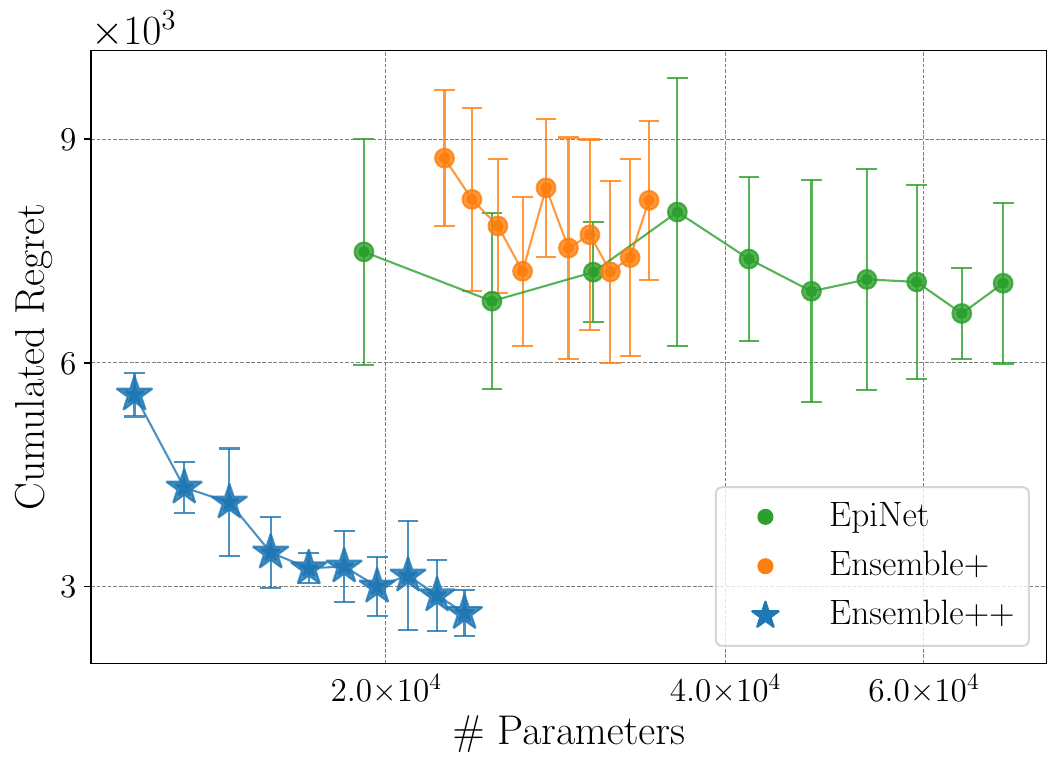}
  \vspace{-3mm}
  \caption{Preview of the regret-computation trade-off in a \emph{nonlinear} bandit (unknown quadratic reward). The x-axis is the number of model parameters, serving as a proxy for computational cost. \emph{Ensemble++} outperforms state-of-the-art baselines Ensemble+ and EpiNet. See details in \cref{sec:exp}.}
  \label{fig:nonlinear_frontier}
  \vspace{-4mm}
\end{figure}

Our approach to this challenge follows a principled strategy: we propose \textit{Ensemble++}, a general framework using incremental updates suitable for complex models with non-conjugate posteriors. To rigorously analyze its core sampling mechanism and establish theoretical guarantees (particularly regarding ensemble size), we first analyze its instantiation in the tractable linear setting. This analysis provides foundational evidence for the mechanism's efficiency. We then demonstrate that the general method, using incremental SGD updates and neural network, achieves strong performance in the challenging non-conjugate settings it was designed for—thereby bridging theoretical understanding with practical applicability.

\subsection{Key Contributions}
In this paper, we propose \emph{Ensemble++} for scalable approximate Thompson Sampling that obviates the need for large ensemble sizes or costly per-step retraining:

\begin{itemize}[leftmargin=*,noitemsep,topsep=0pt]
  \item \emph{Novel Approximation Mechanism}: We introduce Ensemble++, which maintains a single shared ensemble matrix factor incrementally updated via a random linear combination scheme. This fundamental innovation enables efficient approximate posterior sampling without large ensembles or costly per-step retraining.
  \item \emph{Theoretical Breakthrough in Linear Bandits}: We prove that \emph{Linear Ensemble++ Sampling}, with an ensemble size of only \(M = \Theta(d \log T)\), matches the regret order of exact Thompson Sampling. This is a foundational result for our broader approach. Specifically, it achieves regret of \(\Ocal(d^{3/2}\sqrt{T}(\log T)^{3/2})\) for compact action sets and \(\Ocal(d\sqrt{T\log|\mathcal{X}|}\log T)\) for finite action sets. Crucially, the \emph{same algorithm} applies to both compact and finite action sets, whether in invariant or varying contexts, without changing algorithmic configurations. This dramatically improves on prior analyses requiring, for example, \(M = \Omega(T \cdot |\mathcal{X}|)\)~\citep{qin2022analysis}.
  \item \emph{Neural Extension}: We extend these ideas by incorporating neural networks, replacing fixed linear features with a learnable neural feature extractor while keeping the same incremental-update principle. This yields a flexible approach for complex, high-dimensional reward functions.
  \item \emph{Empirical Validation}: Through comprehensive experiments on synthetic and real-world benchmarks---including \emph{quadratic} bandits and large-scale neural tasks involving GPTs---we demonstrate that Ensemble++ achieves superior regret-\emph{vs}-computation trade-offs compared to leading baselines such as Ensemble+~\citep{osband2018randomized,osband2019deep} and EpiNet~\citep{osband2023epistemic,osband2023approximate} (see~\cref{fig:nonlinear_frontier} and \cref{sec:exp}); and validate the theoretical results of linear Ensemble++ sampling.
\end{itemize}
This work both closes a longstanding theoretical gap in linear ensemble sampling and provides a flexible framework for deeper models.
We describe linear Ensemble++ sampling and neural extension in \cref{sec:ensemble++_algorithm}, provide full theoretical analysis of our linear scheme in \cref{sec:theoretical_analysis}, and present empirical evaluations in \cref{sec:exp}, building on the foundational concepts in \cref{sec:background}.

\section{Background and Related Work}
\label{sec:background}

\subsection{Sequential Decision Making under Uncertainty}
\label{sec:bandit_definition}
We consider a sequential decision-making problem over a time horizon \(T\). At each time step \(t\):
\begin{itemize}[leftmargin=*,noitemsep,topsep=0pt]
  \item The agent observes a \emph{decision set} \(\mathcal{X}_t \subseteq \mathcal{X}\), which may change over time (e.g., due to evolving \emph{context} or the appearance of new candidate actions).
  \item The agent selects action \(X_t \in \mathcal{X}_t\), based on history \(\mathcal{H}_t = \{\mathcal{X}_1, X_1, Y_1, \dots, \mathcal{X}_{t-1}, X_{t-1}, Y_{t-1}, \mathcal{X}_t\}\).
  \item It then receives a noisy reward \(Y_t = f^*(X_t) + \epsilon_t\), under unknown reward function \(f^*\) and noise \(\epsilon_t\).
\end{itemize}
The agent's \emph{cumulative regret} measures how much reward is lost by not picking the best action:
\begin{equation}
  \label{eq:regret_formula_bg}
  R(T)
  \;=\;
  \sum_{t=1}^T
  \Bigl[\max_{x \in \mathcal{X}_t} f^*(x)
    \;-\; f^*(X_t)\Bigr].
\end{equation}
A key challenge is \emph{learning} unknown reward function \(f^*\) (exploration) while simultaneously \textit{selecting} good actions from \(\mathcal{X}_t\) (exploitation) in \(T\) time periods.

\subsection{Thompson Sampling and the Scalability Dilemma}
\label{sec:ts-scalability_bg}
\paragraph{Principled exploration with Thompson Sampling.}
A popular Bayesian approach to sequential decision-making is \emph{Thompson Sampling (TS)}~\citep{thompson1933likelihood,russo2018tutorial}.
Given a posterior distribution over an unknown reward function \(f^*\) (or unknown parameters \(\theta^*\)), Thompson Sampling operates as follows at each time \(t\): (1) \emph{Sample} a hypothesis \(\theta_t\) from the current posterior, (2) \emph{Select} the action \(X_t = \arg\max_{x\in \mathcal{X}_t} f_{\theta_t}(x)\) under that hypothesis, (3) \emph{Observe} the reward \(Y_t\), and (4) \emph{Update} the posterior distribution given \((X_t, Y_t)\). By sampling from its posterior, TS naturally balances exploration and exploitation.

\paragraph{The scalability dilemma.}
While elegant, TS faces scalability hurdles. When an environment model belongs to a \emph{conjugate} family (e.g., linear--Gaussian), posterior updates are tractable. However, in high-dimensional or \emph{non-conjugate} cases (e.g., neural nets), \emph{exact} Bayesian updates become expansive or intractable~\citep{russo2018tutorial}. \emph{Approximate} methods (Laplace, MCMC, variational inference) can introduce large computational overheads and/or biased uncertainty estimates~\citep{mackay1992practical,welling2011bayesian,blei2017variational,xu2022langevin}. These challenge motivate scalable approximate posterior approaches.

\subsection{Approximation Techniques for Thompson Sampling}
\label{sec:approx_ts_techniques}
\paragraph{Local perturbation.}
For \emph{linear--Gaussian} bandits, \emph{local perturbation}~\citep{papandreou2010gaussian} offers an \(\Ocal(d^2)\) per-step update to emulate posterior samples. It incrementally maintains \(\tilde{A}_t = \Sigma_t (\Sigma_{t-1}^{-1}\tilde{A}_{t-1} + X_t Z_t)\), where \(\tilde{A}_0 \sim \mathcal{N}(0,\Sigma_0)\) and \(Z_s\sim \mathcal{N}(0,1)\). If actions \(\{X_s\}\) were non-adaptive, \(\mu_t + \tilde{A}_t\) would be an exact posterior draw. However, adaptive action selection introduces \emph{sequential dependencies}, biasing these draws as \(\tilde{A}_t \mid \{X_s\}_{s \le t}\) no longer matches \(\mathcal{N}(0,\Sigma_t)\). Resampling all perturbations \(\{Z_s\}\) from scratch at each step resolves this but is computationally prohibitive~\citep{osband2019deep,kveton2020perturbed}.

\paragraph{Ensemble sampling.}
\emph{Ensemble sampling}~\citep{osband2015bootstrapped, osband2016deep, lu2017ensemble} is a widely used alternative, maintaining \(M\) models and randomly selecting one per round to act. While empirically effective with moderate \(M\), theoretical guarantees often lag. For instance, \citet{qin2022analysis} showed that matching exact TS's \(\sqrt{T}\)-type regret in linear bandits requires \(M=\Omega(T\cdot|\mathcal{X}|)\). Recent work by \citet{lee2024improved} also explores ensemble methods for linear bandits, contributing to the understanding of ensemble size requirements under different analytical approaches. A key open question has been whether TS-comparable regret can be achieved with a practically small ensemble size, especially one that does not scale with \(T\) or \(|\mathcal{X}|\).

\paragraph{Approaches for Non-Linear Models.}
In non-linear or neural settings, methods like Ensemble+~\citep{osband2018randomized,osband2019deep} and EpiNet~\citep{osband2023epistemic,osband2023approximate} adapt ensemble ideas, often by training members on perturbed data or using architectural modifications to inject uncertainty. While empirically successful, these methods typically rely on large ensembles or lack the rigorous ensemble size and regret guarantees established in linear settings. The challenge remains to develop methods that are both theoretically grounded in simpler settings and practically scalable to complex, non-conjugate domains. Our work, Ensemble++, is designed to address this gap.

\section{Ensemble++ for Scalable Thompson Sampling}
\label{sec:ensemble++_algorithm}

We now introduce \emph{Ensemble++}, a \emph{unified} and \emph{scalable} approach to approximate Thompson Sampling in both \emph{linear} and \emph{nonlinear} bandit environments. The key technical novelty is to maintain a \emph{shared ensemble factor} incrementally, thereby approximating posterior covariance (in the linear case) or capturing epistemic uncertainty (in the neural case) without requiring a large ensemble size or repeated retraining from scratch. We begin with \emph{Linear Ensemble++ Sampling} (\cref{sec:ensemble++_linear}), describing its incremental matrix-factor updates and explaining how it approximates Thompson Sampling with only \(M \approx d \log T\) ensemble directions.
We then extend these ideas to general \emph{Ensemble++} (\cref{sec:ensemblepp:neural}), using the same \emph{symmetrized regression} principle (\cref{sec:ensemble++_symmetrized_linear}) but replacing linear features with a trainable neural representation.

\subsection{Linear Ensemble++ Sampling}
\label{sec:ensemble++_linear}

Consider a \emph{linear contextual bandit}, where each action \(X_t\in\R^d\) and reward
\(
Y_t \;=\; \langle \theta^*,\,X_t\rangle + \epsilon_t, \epsilon_t \sim \mathcal{N}(0,1).
\)
Let \((\mu_t,\Sigma_t)\) be the usual ridge-regression posterior updates:
\begin{align}
  \label{eq:incre-posterior}
  \Sigma_t^{-1}  = \Sigma_{t-1}^{-1} + X_t X_t^\top,
  \quad
  \mu_t         = \Sigma_t\Bigl(\Sigma_{t-1}^{-1}\mu_{t-1} + X_t Y_t \Bigr).
\end{align}
The \(\Ocal(d^3)\) cost for standard TS comes from matrix factorization for sampling.
\emph{Linear Ensemble++ Sampling} avoids this by maintaining a  matrix \(\mathbf{A}_t\in \R^{d\times M}\) that approximates \(\Sigma_t^{1/2}\) incrementally:

\paragraph{Initialization.}
Construct \(\mathbf{A}_{0} \!= \!\tfrac{1}{\sqrt{M}}\bigl[\tilde{A}_{0,1}, \dots, \tilde{A}_{0,M}\bigr]\) with each ensemble \(\tilde{A}_{0,m}\sim \mathcal{N}(0,\Sigma_0)\).

\paragraph{Per-step procedure.} (\(t=1,\ldots,T\)):

\begin{enumerate}[leftmargin=*,noitemsep,topsep=0pt]
  \item \emph{Action selection}:
        Sample a ``reference'' vector \(\zeta_t\in\R^M\) from $P_\zeta$ (e.g., Gaussian; detailed in Appendices~\ref{subsec:index_distribution} and \ref{sec:concentration}). Form
        \begin{align}
          \label{eq:ensemble++_theta_linear}
          \theta_t(\zeta_t) \;=\; \mu_{t-1} \;+\; \mathbf{A}_{t-1}\,\zeta_t,
        \end{align}
        via a random linear combination of the columns,
        then choose
        \(
        X_t = \arg\max_{x\in \mathcal{X}_t} \langle x,\;\theta_t(\zeta_t)\rangle.
        \)
  \item {Observe reward} $Y_t$, sample a ``perturbation'' vector $\mathbf{z}_t \in \R^M$ from $P_{\mathbf{z}}$, and \emph{update} $\mu_t$ via \cref{eq:incre-posterior} and update ensemble matrix $\mathbf{A}_t = [ \rmA_{t, 1}, \ldots,  \rmA_{t, M} ] $ as following:
        \begin{align}
          \label{eq:incre-matrix-factor}
          \mathbf{A}_{t, m} = \Sigma_t\Bigl(\Sigma_{t-1}^{-1}\mathbf{A}_{t-1, m}
          + X_t \,\mathbf{z}_{t, m} \Bigr), \forall m \quad \Leftrightarrow \quad
          \mathbf{A}_t = \Sigma_t\Bigl(\Sigma_{t-1}^{-1}\mathbf{A}_{t-1}
          + X_t \,\mathbf{z}_t^\top \Bigr),
        \end{align}
\end{enumerate}

\paragraph{Approximate Posterior Sampling.}
With $M= \Theta(d\log T)$, our analysis (see \cref{sec:theoretical_analysis}) shows that
\(
\tfrac12\,\Sigma_t
\preccurlyeq
\mathbf{A}_t\,\mathbf{A}_t^\top
\preccurlyeq
\tfrac32\,\Sigma_t,
\forall \,0 \le t\le T,
\)
with high probability. Hence, for \(\zeta\sim \mathcal{N}(0,I_M)\), the random vector \(\mu_t + \mathbf{A}_t \zeta\) serves as an \emph{approximate} sample from $\mathcal{N}(\mu_t,\Sigma_t)$, enabling near-Thompson Sampling performance.
The key insight of Linear Ensemble++ is that a relatively small number of properly updated ensemble directions can capture the essential uncertainty structure needed for effective exploration, without requiring full independence between ensemble members. By maintaining a shared factor incrementally, we avoid both large storage requirements and costly recomputation from scratch at each step.
We emphasize that $P_{\zeta}$ and $P_{\mathbf{z}}$ can be chosen from distributions like Gaussian, uniform-on-sphere, coordinate or cube, each with different performance (\cref{sec:theoretical_analysis,sec:exp}).

\subsection{A Symmetrized Ridge-Regression View}
\label{sec:ensemble++_symmetrized_linear}
An alternative perspective derives Ensemble++ from a ridge-regression objective.
First, note that:
(1) The \emph{base} parameter $\mu_t$ solves the usual \emph{ridge regression} objective
\(
\min_{b}
\sum_{s=1}^t
\bigl[
  Y_s
  -
  \langle b,X_s\rangle
  \bigr]^2
\;+\;\lambda \|b\|^2.
\)
(2) Each column in $\mathbf{A}_t$ can be seen as a \emph{perturbed} ridge solution that includes random offsets $\rvz_{s,m}$ for each data.
Combining them yields a single objective for all parameters:
\begin{align}
  \label{eq:ensemble++_linear_loss}
  \min_{b,\{\theta_m\}}
  \sum_{s=1}^t
  \Bigl(Y_s  - \langle b,\,X_s\rangle\Bigr)^2
  ~+~
  \sum_{m=1}^M
  \Bigl(\rvz_{s,m} - \langle \theta_m,\,X_s\rangle\Bigr)^2  ~+~
  \lambda\Bigl(\|b\|^2 + \sum_{m=1}^M \|\theta_m\|^2\Bigr).
\end{align}
Assuming $\mu_0 = 0$, $\Sigma_0 = \frac{1}{\lambda} I$ and $\tilde{\theta}_{0,m} = \frac{1}{\sqrt{M}}\tilde{A}_{0,m}$, the closed-form solution $(b^*_t, \{\theta^*_{t,m}\})$ coincides with the incremental updates in \cref{eq:incre-posterior} and \cref{eq:incre-matrix-factor} when we identify
\begin{align}
  \label{eq:optimal-solution}
  \mu_t = b^*_t,\quad
  \mathbf{A}_{t,m} = \theta^*_{t,m} + \tilde{\theta}_{0,m}.
\end{align}

\paragraph{Symmetrized Loss.}
Finally, from a random linear combination view (c.f. \cref{eq:ensemble++_theta_linear}) and \cref{eq:optimal-solution}, define the ensemble prediction function
\begin{equation}
  \label{eq:ensemble++_linear_f} %
  f_{\theta}^{\mathrm{linear}}(x,\zeta)
  \;=\;
  \Bigl\langle x,\;
  b
  \;+\;
  \sum_{m=1}^M
  \zeta_m
  \,\bigl(\theta_m + \tilde{\theta}_{0,m}\bigr)
  \Bigr\rangle,
\end{equation}
and let $D = \{(X_s, Y_s, \mathbf{z}_s)\}_{s=1}^t$ with $\mathbf{z}_s=(\rvz_{s,1},\dots,\rvz_{s,M})$. A \emph{symmetrized} objective includes both $(+ \rvz_{s,m})$ and $(-\rvz_{s,m})$ for each data point can be defined:
\begin{align}
  \label{eq:ensemble++_symmetrized_loss}
  L(\theta; D, f) := \sum_{m=1}^M \sum_{s\in D} \sum_{\beta \in \{\pm 1\}}
  \Bigl(
  Y_s + \beta \rvz_{s,m}
  -
  f\bigl(X_s,\;\beta e_m\bigr) %
  \Bigr)^2
  \;+\;
  \lambda \|\theta\|^2. %
\end{align}
Minimizing \cref{eq:ensemble++_symmetrized_loss} recovers the same solution as \cref{eq:ensemble++_linear_loss} since the symmetrized slack variable $\beta$ cancels out the cross-term. This ``two-sided'' perturbation perspective extends naturally to \emph{Ensemble++}.

\subsection{Ensemble++ for Nonlinear Bandits}
\label{sec:ensemblepp:neural}

Real-world tasks frequently require \emph{nonlinear} function approximators (e.g., neural networks) for high-dimensional inputs or complex reward structures $f^*$.
\emph{Ensemble++} retains the same ``shared ensemble factor'' principle but replaces linear features with a learnable network.

\paragraph{Model Architecture.}
We generalize \cref{eq:ensemble++_linear_f} by letting $h(x;w)$ be a \emph{neural} feature extractor:
\[
  f_{\theta}(x,\zeta)
  \;=\;
  \bigl\langle h(x;w),\,
  b + \sum_{m=1}^M \zeta_m\,(\theta_m + \tilde{\theta}_{0,m})
  \bigr\rangle,
\]
where $\theta=(w,b,\{\theta_m\})$ are learnable parameters, and $\{\tilde{\theta}_{0,m}\}$ are fixed random ``prior'' directions.
Different from Linear Ensemble++ Sampling, there is no closed-form update for $b \text{ and }\{\theta_m\}$.

\paragraph{Symmetrized Loss and SGD.}
Define the same symmetrized objective $L(\theta;D,f_\theta)$ as in \cref{eq:ensemble++_symmetrized_loss}, except that $f_\theta$ is now a neural mapping.
At time step $t$, we store $(X_t,Y_t,\mathbf{z}_t)$ in a FIFO buffer $D$ (capacity $C$), then run a fixed number $G$ of SGD steps to update:
\(
\theta
\;\leftarrow\;
\theta
\;-\;
\eta\,\nabla_{\theta}L\bigl(\theta;D,f_\theta\bigr).
\)

\begin{algorithm}[h]
  \caption{Ensemble++}
  \label{alg:neural_ens++}
  \begin{algorithmic}[1] %
    \STATE Initialize $\theta = (w, b, \{\theta_m\})$, prior ensemble $\{\tilde{\theta}_{0,m}\}$, FIFO buffer $D$ of capacity $C$
    \FOR{$t=1$ to $T$}
    \STATE Sample $\zeta_t \sim P_\zeta$;
    select action \(
    X_t
    \,=\,
    \arg\max_{x\in \mathcal{X}_t}
    f_{\theta}(x,\,\zeta_t)
    \)
    \STATE Observe reward $Y_t$; sample $\mathbf{z}_t \sim P_{\mathbf{z}}$;~~add \(\bigl(X_t,\, Y_t,\, \mathbf{z}_t\bigr)\) to buffer $D$ (pop oldest if $|D|>C$)
    \STATE Perform SGD w.r.t. $L\bigl(\theta; D,\,f_\theta\bigr)$ (c.f. \cref{eq:ensemble++_symmetrized_loss}) up to $G$ steps
    \ENDFOR
  \end{algorithmic}
\end{algorithm}

\cref{alg:neural_ens++} summarizes:
by capping $C$ and $G$, the agent ensures constant-time updates even as $t$ grows.
Through comprehensive empirical studies in \cref{sec:exp,sec:add_exp}, Ensemble++ demonstrates strong performance on complex and high-dimensional reward functions.
For clarity and reproducibility, a detailed implementation of Ensemble++ is provided in \cref{appendix:ensemble++_details}.

\section{Theoretical Analysis}
\label{sec:theoretical_analysis}

We now analyze the Linear Ensemble++ Sampling to provide foundational theoretical justification. While derived for the linear setting, these results establish the fundamental efficacy of our core mechanism -- the ensemble factor with incremental updates -- which we leverage in non-conjugate environments in \cref{sec:ensemblepp:neural}.
W.L.O.G., we impose the following mild assumption.

\begin{assumption}
  \label{asmp:linear}
  The random noise \(\epsilon_t\) satisfies
  \(
  \mathbb{E}\left[\exp\{s \,\epsilon_t\} \,\middle|\; \mathcal{H}_t,\, X_t \right]
  \;\le\;
  \exp\!\Bigl(\tfrac{s^2}{2}\Bigr),
  \quad \forall\,s \in \mathbb{R},
  \)
  where \(\mathcal{H}_t\) is the history up to time \(t\).
  In addition, all actions satisfy \(\|x\|_2 \le 1\) for \(x \in \mathcal{X}\).
\end{assumption}

\subsection{Key Lemma: Covariance Tracking under Sequential Dependence}
\label{sec:ch4-lemma}

A critical step in analyzing Linear Ensemble++ Sampling is to ensure that its incremental updates accurately track the true posterior covariance \(\mSigma_t\). Specifically, recall the Ensemble++ update for the matrix \(\rmA_t\) (\cref{eq:incre-matrix-factor}), which aims to approximate \(\mSigma_t^{1/2}\) even when actions \(X_t\) are chosen adaptively based on prior \(\{\rmA_s\}_{s<t}\). The following lemma, a variant of a sequential Johnson-Lindenstrauss type result, establishes that provided \(M\) is on the order of \(d \log T\), \(\rmA_t \rmA_t^\top\) remains a constant-factor approximation to \(\mSigma_t\) (posterior covariance defined in \cref{eq:incre-posterior}).

\begin{lemma}[Covariance Tracking via Sequential JL Variant]
  \label{lem:incre-uncertainty}
  Let \(\{\rvz_t\}_{t=1}^T\) be random vectors in \(\R^M\) such that they are conditionally \(\sqrt{1/M}\)-sub-Gaussian and are unit-norm almost surely. Define \(\rmA_t\) via the recursive update~\cref{eq:incre-matrix-factor} in Linear Ensemble++, and let \(\mSigma_t\) be the exact ridge posterior covariance from \cref{eq:incre-posterior}. If the ensemble size \(M\) satisfies
  \begin{align}
    \label{eq:M-bound}
    M
    \;\ge\;
    320
    \Bigl(
    d \,\log\Bigl(\tfrac{2 + \frac{96}{s_{\min}}\sqrt{s_{\max}^2 + T}}{\delta}\Bigr)
    \;+\;
    \log\Bigl(1 + \tfrac{T}{s_{\min}^2}\Bigr)
    \Bigr)\;\simeq\; d\Bigl(\log \tfrac{1}{\delta} + \log T\Bigr),
  \end{align}
  where \(s_{\min}^2 = \inf_{\|a\|=1} \norm{a}^2_{\mSigma_0^{-1}} \) and \(s_{\max}^2 = \sup_{\|a\|=1} \norm{a}^2_{\mSigma_0^{-1}} \). Then with probability at least \(1-\delta\),
  \[
      \forall\,t\le T: \quad
      \tfrac12\,\mSigma_t
      \;\preccurlyeq\;
      \rmA_t\,\rmA_t^\top
      \;\preccurlyeq\;
      \tfrac32\,\mSigma_t.
  \]
\end{lemma}

\paragraph{Significance.}
\cref{lem:incre-uncertainty} ensures that the Ensemble++ “covariance factor” \(\rmA_t \rmA_t^\top\) tracks the true posterior \(\mSigma_t\) to within constant factors, \emph{uniformly} for all \(t \le T\). Thus, the variance estimates used by Ensemble++ remain trustworthy at each decision point, despite the \emph{sequential} dependencies in how actions are chosen.

\paragraph{Proof sketch.} 
The central challenge is that standard dimensionality reduction techniques like the Johnson-Lindenstrauss (JL) lemma assume independent projections, but our bandit setting creates sequential dependencies between actions and parameters. Our approach introduces a novel sequential JL variant that accounts for this adaptive data collection process.

The proof proceeds in two main steps. First, by projecting the matrix updates onto an arbitrary direction, we can apply a sequential JL theorem to establish a concentration bound, guaranteeing the approximation holds for any single, fixed direction. Second, we employ a variance-aware covering argument to extend this per-direction guarantee to a uniform guarantee over the entire action space. This discretization step is crucial for bounding the operator norm of the error and ensuring the covariance approximation holds universally. Formal details are deferred to \cref{sec:key-lemma-proof}.

\subsection{Regret Bound for Linear Ensemble++ Sampling}

Building on \cref{lem:incre-uncertainty}, we now show that \emph{Linear Ensemble++ Sampling} attains near-optimal regret comparable to linear Thompson Sampling (TS). Let $P_{\zeta}$ be the \emph{reference distribution} used in sampling $\zeta_t$ for action selection (\cref{eq:ensemble++_theta_linear}).

\begin{theorem}[Distribution-dependent Regret for Linear Ensemble++]
  \label{thm:regret-linear-Ensemble++}
  Suppose \cref{asmp:linear} holds, and \cref{lem:incre-uncertainty} applies (i.e., \(M\) satisfies~\cref{eq:M-bound}). Then Linear Ensemble++ Sampling, using reference vectors \(\zeta_t \sim P_\zeta\), achieves the following regret bound with probability at least \(1-2\delta\):
  \[
    \mathrm{Regret}(T) \leq \mathcal{O}\left( \left( \frac{ \rho(P_{\zeta})}{p(P_{\zeta} )} +\rho(P_{\zeta}) \right) \sqrt{T \cdot \left( \log \frac{1}{\delta} + d\log \frac{T}{d \lambda} \right) \cdot d\log \frac{T}{d \lambda}} + \frac{1}{p(P_{\zeta})} \sqrt{T \log\frac{1}{\delta}}\right)
  \]
  where \(\rho(P_{\zeta})\) and \(p(P_{\zeta})\) are distribution-dependent constants related to the concentration and anti-concentration properties of \(P_\zeta\). See \cref{sec:regret} for full proof details.
\end{theorem}
This result demonstrates that our approximation mechanism achieves regret comparable to exact Thompson Sampling while maintaining computational efficiency—a key insight that guides our approach in more complex settings. The specific big-O scaling of the regret depends on the interplay of \(d, T, |\mathcal{X}|\) and the choice of \(P_\zeta\), as detailed in \cref{tab:regret_comparison_main_analysis}.

\paragraph{Reference Distribution Choices.}
A crucial design element in Ensemble++ is the choice of sampling distribution $P_{\zeta}$.
We can sample $\zeta_t \sim P_{\zeta}$ from, e.g., Gaussian distribution
\(\mathcal{N}(0,\,I_M)\), Sphere distribution \(\sqrt{M}\cdot \mathcal{U}(\mathbb{S}^{M-1}) \), Cube distribution \(\mathcal{U}(\{\pm 1\}^M)\), or Coordinate distribution \(\mathcal{U}\bigl(\{\pm \sqrt{M} e_1,\,\dots,\,\pm \sqrt{M} e_M\}\bigr)\) (scaled for unit variance projection). Each choice impacts \(\rho(P_\zeta)\) and \(p(P_\zeta)\); see \cref{tab:zeta_properties} for examples and \cref{sec:concentration} for formal definitions.

\begin{table}[ht]
  \centering
  \vspace{-2mm}
  \caption{Representative values of \(\rho(P_{\zeta})\) and \(p(P_{\zeta})\) (or their relevant scalings) for typical distributions. The ratio \(\frac{\rho(P_{\zeta})}{p(P_{\zeta})}\) influences the regret constant in \cref{thm:regret-linear-Ensemble++}.
    Here, $a \wedge b$ denotes the $\min(a, b)$.}
  \label{tab:zeta_properties} %
  \vspace{0.5em}
  \resizebox{0.8\linewidth}{!}{
    \begin{tabular}{lcccc}
      \toprule
      $P_{\zeta}$                                  & $\mathcal{N}(0,I_M)$                              & $\sqrt{M}\cdot \mathcal{U}(\mathbb{S}^{M-1})$     & $\mathcal{U}(\{\pm 1\}^M)$                        & $\mathcal{U}(\{\pm \sqrt{M} e_i\})$ \\
      \midrule
      \(\rho(P_\zeta)\)                            & $\Ocal(\sqrt{M} \wedge \sqrt{\log|\mathcal{X}|})$ & $\Ocal(\sqrt{M} \wedge \sqrt{\log|\mathcal{X}|})$ & $\Ocal(\sqrt{M} \wedge \sqrt{\log|\mathcal{X}|})$ & $\Ocal(\sqrt{M})$                   \\
      $p(P_\zeta)$                                 &
      $ \frac{1}{4 \sqrt{e \pi}}$                  &
      $\frac{1}{2} - \frac{e^{1/12}}{\sqrt{2\pi}}$ &
      $7/32$                                       &
      $\frac{1}{2M}$                                                                                                                                                                                                                                 \\
      \bottomrule
    \end{tabular}}
  \vspace{-1mm}
\end{table}

\paragraph{Discussion of Reference Distributions.}
Continuous-support distributions (e.g., Gaussian or uniform on the sphere) often yield a more favorable ratio \(\rho(P_{\zeta}) / p(P_{\zeta})\) (e.g., \(\Ocal(\sqrt{M})\) or \(\Ocal(\sqrt{\log|\mathcal{X}|})\)) than discrete distributions like uniform on coordinate vectors (which can lead to an \(\Ocal(M^{3/2})\) factor for this ratio). This can translate to tighter regret constants. For finite action sets, the \(\sqrt{\log |\mathcal{X}|}\) term typically arises from \(\rho(P_\zeta)\) via covering arguments.

\subsection{Computational Cost, Regret Guarantees, and Comparative Analysis}
\label{sec:comparisons_implications}

\paragraph{Computational Cost.}
Linear Ensemble++ Sampling has a per-step computational cost of \(\Ocal(d^2 M)\) for updating \(\mu_t\) and \(\rmA_t\). Given \(M = \Theta(d \log T)\) from \cref{lem:incre-uncertainty}, the complexity is \(\Ocal(d^3 \log T)\). While standard Linear Thompson Sampling has an \(\Ocal(d^3)\) cost, the \(\log T\) factor for Ensemble++ is a modest price for a mechanism that readily extends to non-conjugate settings using incremental SGD updates, as discussed in \cref{sec:ensemblepp:neural}. In contrast, Langevin-based methods like LMC-TS~\citep{xu2022langevin} can incur per-step costs \(\Ocal(d^2 T)\).

\paragraph{Inflation for Frequentist Regret.}
We note that our theoretical analysis, like other related works providing frequentist bounds for TS~\citep{abeille2017linear}, relies on an \emph{inflation parameter} (detailed in Appendix \cref{sec:regret}) to scale the sampled posterior variance. This scaling is crucial for ensuring sufficient exploration and achieving the stated regret guarantees, thus avoiding the potential for linear regret known to affect vanilla (non-inflated) TS~\citep{hamidi2020frequentist,abeille2025and}. This parameter is implicitly accounted for in the bounds presented in \cref{tab:regret_comparison_main_analysis}, and we make its role explicit here to clarify the theoretical basis of our guarantees.

\paragraph{Regret and Ensemble Size Comparison.}
\cref{tab:regret_comparison_main_analysis} summarizes the regret bounds and ensemble sizes for Ensemble++ against prior analysis of ensemble sampling in linear bandits.

\begin{table}[htbp]
  \centering
  \vspace{-5mm}
  \caption{Comparison of Regret Bounds and Ensemble Sizes for Linear Bandits. \(\mathcal{X}\) denotes the action set, \(d\) the dimension, and \(T\) the horizon. ``Inv.'' refers to invariant contexts and ``Var.'' to varying contexts. Ensemble++ achieves its regret bounds with an ensemble size of \(\Theta(d\log T)\) across all listed settings. Citations for Linear TS: \citet{agrawal2013thompson,abeille2017linear}.}
  \label{tab:regret_comparison_main_analysis}
  \resizebox{\textwidth}{!}{%
    \begin{tabular}{lccccc}
      \toprule
      \textbf{Method}            & \textbf{Inv. \& Compact}                 & \textbf{Var. \& Compact}                 & \textbf{Inv. \& Finite}                                           & \textbf{Var. \& Finite}                     & \textbf{Ensemble Size}          \\
      \midrule
      Linear TS                  & \(\Ocal(d^{3/2}\sqrt{T}\log T)\)         & \(\Ocal(d^{3/2}\sqrt{T}\log T)\)         & \(\Ocal(d\sqrt{T\log|\mathcal{X}|}\log T)\)                       & \(\Ocal(d\sqrt{T\log|\mathcal{X}|}\log T)\) & \NA                             \\
      \arrayrulecolor{black!30}\midrule
      \citet{qin2022analysis}    & \NA                                      & \NA                                      & \(\Ocal(\sqrt{dT\log|\mathcal{X}|}\log\frac{|\mathcal{X}|T}{d})\) & \NA                                         & \(\Omega(|\mathcal{X}|T)\)      \\
      \arrayrulecolor{black!30}\midrule
      \citet{janz2024ensemble}    & $\mathcal{O}\left((d \log T)^{\frac{5}{2}} \sqrt{T}\right)$                                      & $\mathcal{O}\left((d \log T)^{\frac{5}{2}} \sqrt{T}\right)$          & \NA & \NA                                         & {\(\Theta(d\log T)\)}      \\
      \arrayrulecolor{black!30}\midrule
      \citet{lee2024improved}    & \NA                                      & \NA                                      & \(\Ocal(d^{3/2}\sqrt{T}(\log T)^{3/2})\)                          & \NA                                         & \(\Omega(|\mathcal{X}|\log T)\) \\
      \arrayrulecolor{black!30}\midrule
      \rowcolor{gray!10}
      \textbf{Ensemble++ (Ours)} & \(\Ocal(d^{3/2}\sqrt{T}(\log T)^{3/2})\) & \(\Ocal(d^{3/2}\sqrt{T}(\log T)^{3/2})\) & \(\Ocal(d\sqrt{T\log|\mathcal{X}|}\log T)\)                       & \(\Ocal(d\sqrt{T\log|\mathcal{X}|}\log T)\) & \textbf{\(\Theta(d\log T)\)}    \\
      \bottomrule
    \end{tabular}%
  }
\end{table}

\paragraph{Discussion of Comparative Results.}
\begin{itemize}[leftmargin=*,noitemsep,topsep=0pt]
  \item \textbf{vs. \citet{qin2022analysis}}: Ensemble++ offers a significant improvement, particularly an exponential reduction in ensemble size ($\Theta(d \log T)$ vs. $\Omega(|\mathcal{X}|T)$). This makes Ensemble++ far more practical and scalable for large action spaces or long horizons, addressing a key limitation of prior ensemble sampling analysis to achieve TS-comparable regret.
  
  \item \textbf{vs. \citet{ash2022anti}}: For the finite arm setting, \citet{ash2022anti} provide an excellent specialized result. Our work complements theirs in two key ways: (1) We provide, to our knowledge, the first rigorous regret bounds for an \emph{incremental} ensemble update in the linear bandit setting, resolving a conjecture posed by \citet{ash2022anti} (who analyzed the incremental case only for MABs). This incremental nature is vital for practical, non-conjugate extensions. (2) Ensemble++ is a \emph{single, unified algorithm} that applies to both finite and compact action sets, whereas their analysis is specialized for the finite-arm case.

  \item \textbf{vs. \citet{janz2024ensemble}}: For the compact action set, our $\mathcal{O}(d^{3/2}\sqrt{T}(\log T)^{3/2})$ regret bound is a significant improvement over the $\mathcal{O}\left((d \log T)^{\frac{5}{2}} \sqrt{T}\right)$ bound from \citet{janz2024ensemble}, while using a comparable ensemble size (i.e., $\Theta(d \log T)$ vs. their $\tilde{\mathcal{O}}(d)$). This theoretical improvement stems directly from our novel shared-factor architecture and the tighter concentration bounds derived from our sequential Johnson-Lindenstrauss variant (Lemma 4.2).

  \item \textbf{vs. \citet{lee2024improved}}: This concurrent work also analyzes ensemble sampling for linear bandits. While employing different analytical techniques, Ensemble++ provides distinct advantages:
        \begin{itemize}[leftmargin=*, itemsep=0pt, topsep=1pt]
          \item \emph{Ensemble Size Scaling}: Our $\Theta(d \log T)$ scales with dimension, whereas their $\Omega(|\mathcal{X}| \log T)$ scales with action set size. When $d \ll |\mathcal{X}|$ (common in practice, e.g., recommender systems), Ensemble++ requires a significantly smaller ensemble.
          \item \emph{Regret for Finite Actions}: For finite action sets, our regret $\Ocal(d\sqrt{T\log|\mathcal{X}|}\log T)$ is sharper than their $\Ocal(d^{3/2}\sqrt{T}(\log T)^{3/2})$ (only applicable in the invariant-context case) whenever $|\mathcal{X}| < 2^d T$.
        \end{itemize}
        
  \item \textbf{Generality and Strengths}: Ensemble++ is the first approximate TS method to achieve near-optimal regret matching exact linear TS across all four common setups (Invariant/Varying contexts \& Compact/Finite action sets, as shown in \cref{tab:regret_comparison_main_analysis}) with a single, unified algorithm and analysis framework. This flexibility across action space structures and contexts, without requiring algorithmic changes, is a key strength. Furthermore, it achieves this with a computationally feasible ensemble size of $\Theta(d \log T)$ and a practical per-step complexity of $\Ocal(d^3 \log T)$. The core mechanism's design also prioritizes extensibility to non-linear models, as demonstrated in \cref{sec:ensemblepp:neural}.
\end{itemize}

Overall, these theoretical results for Linear Ensemble++ Sampling confirm that it effectively balances statistical efficiency (near-optimal regret) with computational tractability (small ensemble size, manageable per-step cost) and broad applicability. This provides a solid foundation for its extension to more complex, non-linear domains.

\section{Experiments}
\label{sec:exp}

In this section, we investigate the efficiency and scalability of Ensemble++ in varying \emph{linear} and \emph{nonlinear} contextual bandits as introduced in~\cref{sec:background}. To empirically validate our theoretical insights, we first evaluate its performance in linear bandit settings.

\subsection{Empirical Study on Linear Ensemble++ Sampling}
\label{sec:res_linear}

We construct the \emph{Finite-action Linear Bandit} task following prior research~\citep{russo2018learning}. In this setup, the finite decision set $\mathcal{X}$ is formed by uniformly sampling actions from $[-1/ \sqrt{d}, 1/ \sqrt{d}]^d$, where $d$ is the ambient dimension. The reward function is linear, perturbed by an additive Gaussian noise term. A detailed implementation is provided in \cref{ap:exp_linear}.

\begin{figure}[h]
  \centering
  \vspace{-5mm}
  \subfigure[]{
    \includegraphics[width=0.45\linewidth]{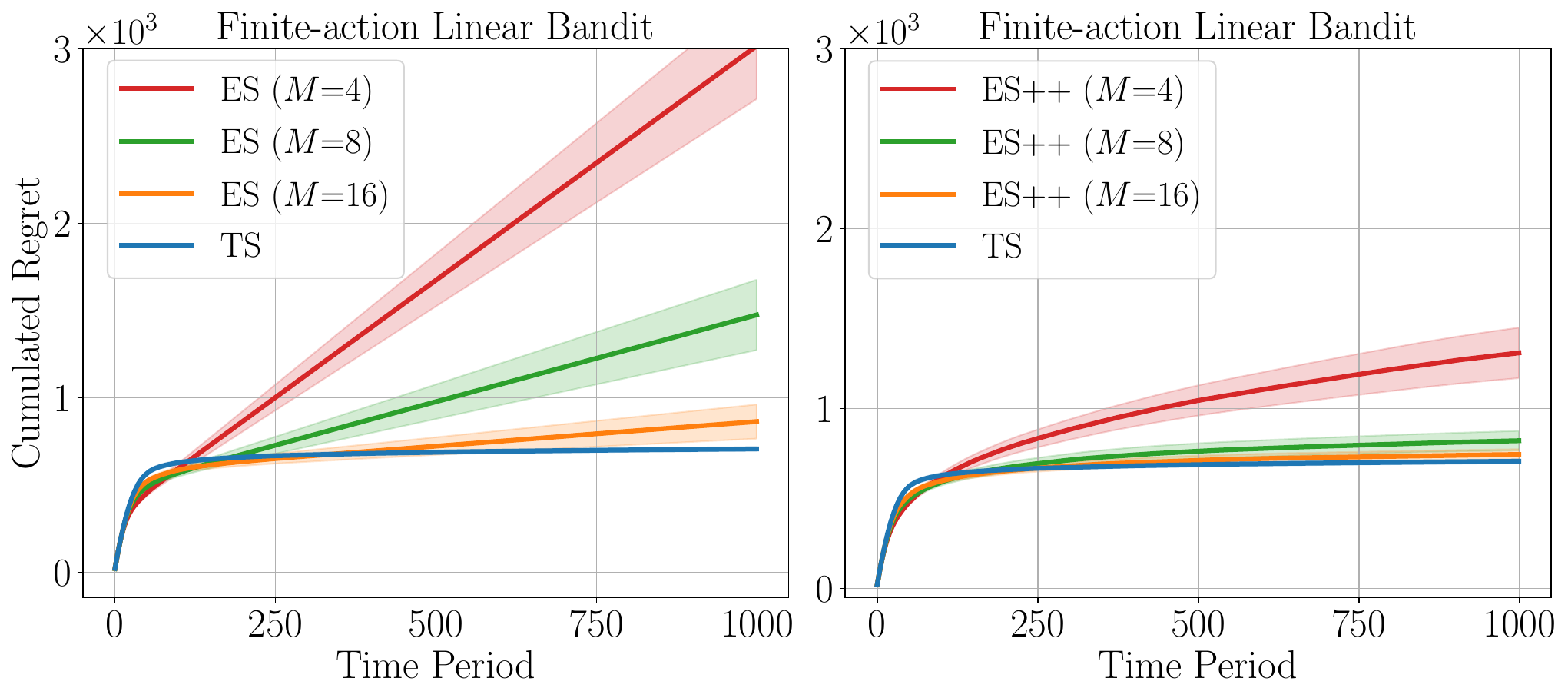}}
  \subfigure[]{
    \includegraphics[width=0.50\linewidth]{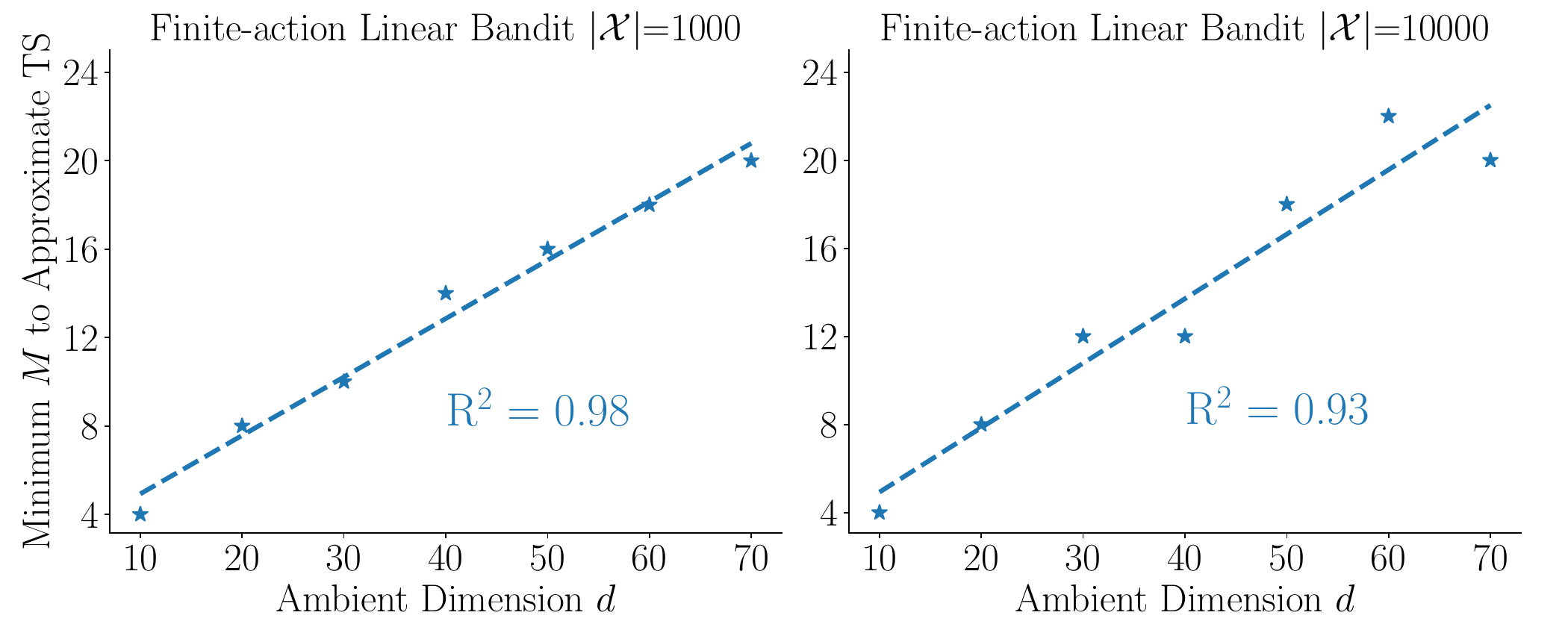}}
  \vspace{-3mm}
  \caption{Performance in Finite-action Linear Bandit. All experiments are repeated over 200 random runs to ensure robust results. (a) Comparison results under $d=50$ and $|\mathcal{X}|=10,000$. ES denotes Linear Ensemble Sampling, ES++ denotes Linear Ensemble++ Sampling, and TS refers to Thompson Sampling. (b) Minimum ensemble size $M$ required for Linear Ensemble++ Sampling to match the regret performance of TS.}
  \label{fig:linear_baseline}
\end{figure}

\paragraph{Advantage over Ensemble Sampling.}
We consider a special case of Linear Ensemble++ Sampling using a coordinate reference distribution, which effectively performs uniform sampling among symmetrized ensemble members, akin to vanilla Linear Ensemble Sampling~\citep{lu2017ensemble}. We compare the regret of Linear Ensemble++ Sampling (with a Gaussian reference distribution) against Linear Ensemble Sampling across different ensemble sizes $M$ in \cref{fig:linear_baseline}(a). For a fair comparison, both methods utilize the same spherical perturbation distribution. The results in \cref{fig:linear_baseline}(a) demonstrate that Linear Ensemble++ Sampling significantly outperforms Linear Ensemble Sampling across various ensemble sizes. Notably, Linear Ensemble++ Sampling can nearly match the performance of TS with $M=8$, achieving this with half the computational cost of Linear Ensemble Sampling.

\paragraph{Optimal Ensemble Size Scaling.}
To empirically validate our theoretical prediction of $M = O(d \log T)$, we investigate the minimal ensemble size $M$ required for Linear Ensemble++ Sampling to match the performance of TS. The minimal $M$ is determined by the criterion:
$M = \min \left\{ M : \frac{\left| \text{Regret}(\textit{Ensemble++}(M), T) - \text{Regret}(\textit{TS}, T) \right|}{T} \le 0.02 \right\}$. We set a fixed, finite time horizon ($T=1000$), which renders this metric a non-asymptotic goal. We evaluate this across varying decision set sizes $|\mathcal{X}|$ and ambient dimensions $d$.
As depicted in~\cref{fig:linear_baseline}(b), the minimal $M$ exhibits a \textit{linear} relationship with $d$ and remains largely \textit{unaffected} by $|\mathcal{X}|$, strongly supporting our theoretical claims.

\subsection{Ensemble++ for Nonlinear Bandits}
\label{sec:res_nonlinear}

To evaluate Ensemble++ (c.f.~\cref{alg:neural_ens++}) in more complex scenarios, we consider several nonlinear contextual bandits:
(1) \emph{Quadratic Bandit}: Adapted from~\cite{zhou2020neural}, with reward $f(x) = 10^{-2}(x^\top \Theta \Theta^\top x)$, where $x \in \mathbb{R}^d$ is the action feature and $\Theta \in \mathbb{R}^{d \times d}$ contains random variables from $\mathcal{N}(0,1)$.
(2) \emph{Neural Bandit}: A binary classification task adapted from~\cite{osband2022neural,osband2023epistemic}, using 2-layer MLPs (50 units, ReLU) with two logit outputs. Bernoulli rewards $r \in \{0, 1\}$ are sampled via softmax probabilities.
(3) \emph{UCI Shuttle}: Following~\cite{riquelme2018deep,kveton2020randomized}, we create contextual bandits for $N$-class classification using the UCI Shuttle dataset~\cite{asuncion2007uci}.
(4) \emph{Online Hate Speech Detection}: Built using a language dataset\footnote{\url{https://huggingface.co/datasets/ucberkeley-dlab/measuring-hate-speech}}. The agent decides to publish (reward 1 for ``free'', -0.5 for ``hate'') or block content (reward 0.5).

Detailed descriptions of these bandits are in \cref{ap:exp_nonlinear}. For all algorithms, we use 2-layer MLPs (64 units) as the backbone for the first three tasks, and GPT-2\footnote{\url{https://huggingface.co/openai-community/gpt2}} for the last. Implementation details for each algorithm are in~\cref{appendix:epinet_comparison}.

\begin{figure}[ht]
  \centering
  \includegraphics[width=1.0\linewidth]{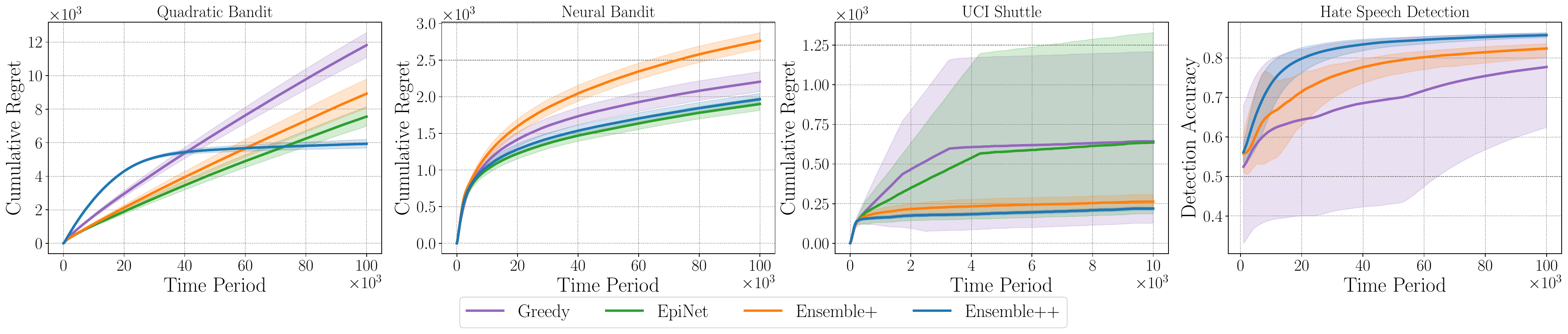}
  \vspace{-3mm}
  \caption{Comparison results across various nonlinear bandits.}
  \label{fig:nonlinear_baseline_all}
\end{figure}

\paragraph{Comparison Results.}
We benchmark Ensemble++ against Ensemble+~\citep{osband2018randomized,osband2019deep}, EpiNet~\citep{osband2023epistemic,osband2023approximate}, and a Greedy baseline (to highlight exploration needs). As shown in \cref{fig:nonlinear_baseline_all}, Ensemble++ consistently achieves sublinear regret and higher accuracy. In the Quadratic Bandit, Ensemble++ converges rapidly while other baselines exhibit linear regret. For the Hate Speech Detection task, Ensemble++ outperforms Ensemble+ by 5\%, showcasing its scalability with complex networks like Transformers. This task's framework, extendable to applications like recommendation systems and content moderation (see \cref{sec:cm}), highlights Ensemble++'s real-world utility. Note that EpiNet could not be applied to the Hate Speech Detection task due to implementation details discussed in \cref{appendix:epinet_comparison}. Moreover, comprehensive comparisons with LMCTS~\citep{xu2022langevin} detailed in \cref{ap:exp_nonlinear} confirm that Ensemble++ consistently achieves sublinear, smaller regret with bounded and lower per-step computation costs across a variety of nonlinear bandits.

\paragraph{Regret vs. Computation Trade-off.}
Ensemble++ achieves sublinear regret with moderate computation cost, as demonstrated in the Quadratic Bandit (results in \cref{fig:nonlinear_frontier}). We measure computation cost by the number of network parameters and evaluate methods in the Quadratic Bandit ($d=100, |\mathcal{X}|=1000$), ensuring fair comparison by using identical hidden network architectures and optimization schedules for all algorithms. Additional results in the Neural Bandit (\cref{fig:nonlinear_frontier_neural}, \cref{ap:exp_nonlinear}) corroborate this: across various ensemble sizes $M$, Ensemble++ surpasses baselines like EpiNet and Ensemble+ on the regret–compute frontier, affirming the cost-effectiveness of its random linear combinations plus a shared base. \cref{appendix:epinet_comparison} further discusses the relationship between ensemble size and network parameter size for Ensemble++ and baselines.

\begin{figure}[htbp]
  \centering
  \vspace{-3mm}
  \includegraphics[width=0.6\textwidth]{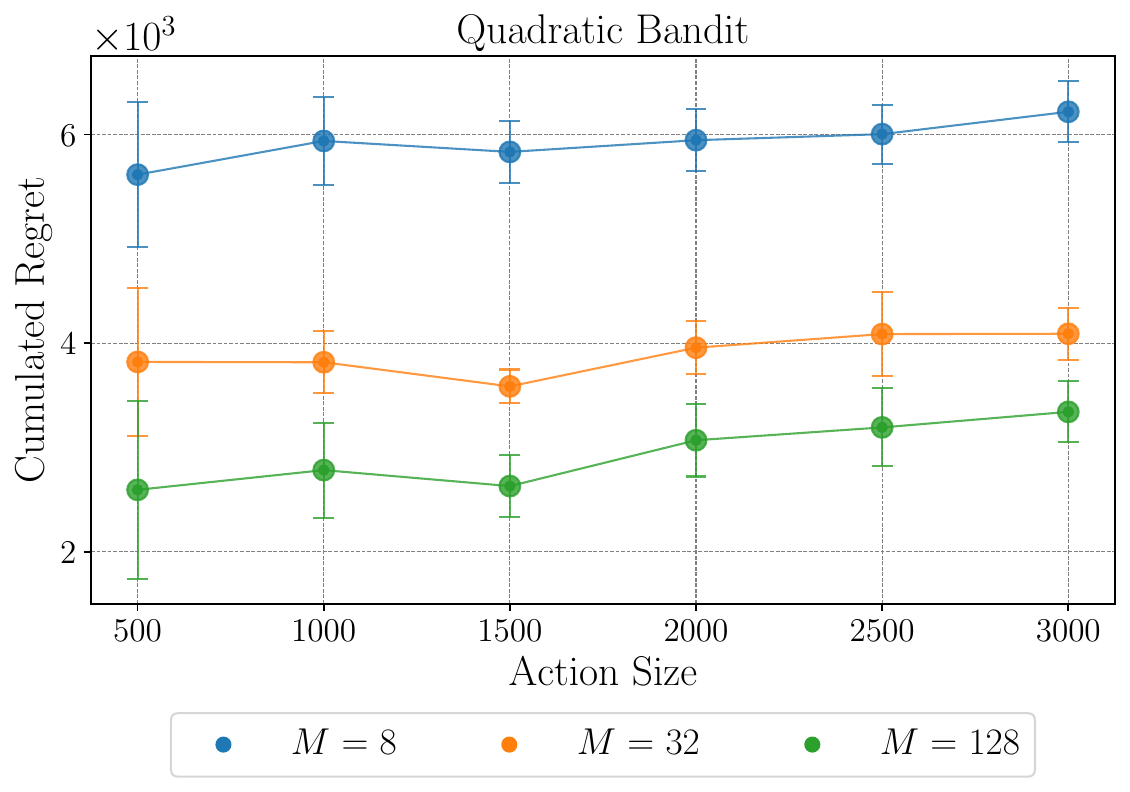}
  \vspace{-3mm}
  \caption{Scalability of Ensemble++ with varying decision set sizes ($|\mathcal{X}|$).}
  \label{fig:nonlinear_ablation}
\end{figure}

\paragraph{Ablation Studies on Scalability.}
Consistent with linear bandit findings, the regret performance of Ensemble++ in nonlinear bandits is also largely \textit{unaffected} by decision set size, as shown in \cref{fig:nonlinear_ablation}. This further supports our theoretical analysis in~\cref{sec:theoretical_analysis}. Larger ensemble sizes $M$ yield additional benefits, mirroring observations in linear bandits.
Furthermore, we provide more detailed empirical studies in~\cref{sec:add_exp}.
Specifically,  we verify that Ensemble++ maintains comparable performance even with buffer capacity significantly smaller than the total time horizon in \cref{fig:buffer_ablation}.
Additionally, \cref{fig:quad_ablation} provides guidance on choosing the sampling distribution $P_{\zeta}$ and suggests that the choice of perturbation distribution has a negligible impact on performance when using neural network.

\section{Concluding Remarks}
\label{sec:conclusion}

Thompson Sampling's principled treatment of exploration has inspired extensive research into Bayesian methods for sequential decision-making. Yet large-scale and non-conjugate contexts remained a hurdle: exact posteriors are infeasible, and naive ensemble approximations often demand huge ensemble sizes or high retraining cost.
We introduced \emph{Ensemble++}, showing how to:
\begin{itemize}[leftmargin=*,noitemsep,topsep=0pt]
  \item maintain a shared ensemble matrix $\mathbf{A}_t$ in the \emph{linear} case, with ensemble size $M = \Theta(d \log T)$,
  \item achieve incremental $\mathcal{O}(d^2 M)$-time updates that approximate the posterior covariance square root $\Sigma_t^{1/2}$ despite adaptively collected data,
  \item unify base and ensemble parameters within a \emph{symmetrized regression objective} that admits a closed-form solution (linear) or an SGD solution (neural).
\end{itemize}
As a result, \emph{Linear Ensemble++ Sampling} achieves Thompson-like $\sqrt{T}$ regret across diverse linear bandit settings (c.f. \cref{sec:comparisons_implications,tab:regret_comparison_main_analysis}) without the previous $M$-vs-$T$ scalability conflict typical of ensemble methods. This strong theoretical and practical foundation is seamlessly extended to the \emph{neural} case, where the same symmetrized regression objective guides a trainable feature extractor, enabling broad applicability in complex, high-dimensional problems. Our experiments further demonstrate strong empirical performance, often superior to alternative ensemble methods, attributed to (i) a computationally feasible ensemble size and (ii) efficient incremental updates.

\paragraph{Future directions.}
Ongoing work focuses on providing tighter theoretical bounds for the neural extension. A particularly promising direction is applying Ensemble++ to LLM agents. The sequential decision-making involved in multi-step reasoning and tool use presents a high-dimensional, non-conjugate challenge where managing uncertainty is critical. Naive ensembling of foundation models is computationally prohibitive, but the parameter-efficient, shared-architecture design of Ensemble++ offers a scalable method to approximate Thompson Sampling over an agent's policies. We are actively exploring how this framework can manage epistemic uncertainty in an LLM's reasoning paths, effectively allowing an agent to \emph{explore} which strategies or tools lead to the best outcomes. By addressing the critical interplay between Bayesian exploration and scalable computation, Ensemble++ opens the door for more principled and effective online learning and decision-making in the next generation of AI agents.

\begin{ack}
The work of Z.-Q. Luo was supported in part by the Guangdong Major Project of Basic and Applied Basic Research (No. 2023B0303000001), the Guangdong Provincial Key Laboratory of Big Data Computing, and the National Key Research and Development Project (No. 2022YFA1003900).
The work of B. Wang was supported in part by the National Natural Science Foundation of China (No. 72394361) and an extended support project from the Shenzhen Science and Technology Program.
\end{ack}

\bibliography{ref}
\bibliographystyle{plainnat}

\newpage

\tableofcontents

\newpage

\appendix
\section{Additional Discussions on Related Works}
\label{sec:additional_related_work}

This appendix provides further background and motivation for the techniques discussed in the main text, focusing on Thompson Sampling and its limitations, local perturbation for Gaussian posteriors, and ensemble-based methods. We also compare Ensemble++ with advanced neural architectures such as Ensemble+~\citep{osband2018randomized,osband2019deep} and EpiNet~\citep{osband2023approximate}.

\subsection{Thompson Sampling (TS)}
\label{sec:TS}

Thompson Sampling is a Bayesian approach for balancing exploration and exploitation in bandit or sequential decision-making problems~\citep{thompson1933likelihood,russo2018tutorial,li2024prior}.
It maintains a posterior distribution over unknown parameters (or functions) and selects actions by sampling from this posterior.

\paragraph{Methodology.}
At each time step \(t\), with history \(\mathcal{H}_t\), TS does:
\begin{enumerate}[leftmargin=1.2em]
  \item \textbf{Sample a model:} Draw a parameter \(\theta_t \sim P(\theta \mid \mathcal{H}_t)\).
  \item \textbf{Select action:}
        \(X_t = \mathop{\arg\max}\limits_{x \in \mathcal{X}_t}\,f_{\theta_t}(x)\),
        where \(f_{\theta}(\cdot)\) represents the expected reward under model \(\theta\).
  \item \textbf{Observe reward:} Receive \(Y_t\).
  \item \textbf{Update posterior:} Incorporate \((X_t,Y_t)\) into the posterior \(P(\theta \mid \mathcal{H}_{t+1})\).
\end{enumerate}
Because TS samples from a posterior that encodes the agent’s uncertainty, it naturally allocates exploration to regions (actions) that are less well understood, while exploiting current knowledge of high-reward actions.

\paragraph{Conjugate Settings.}
In special “conjugate” scenarios, posterior updates are tractable:
\begin{itemize}[leftmargin=*]
  \item \textbf{Beta-Bernoulli Bandits:} A Beta prior for each arm remains Beta after seeing Bernoulli rewards.
  \item \textbf{Linear-Gaussian Bandits:} With Gaussian priors and Gaussian noise, the posterior remains Gaussian with updated mean and covariance.
\end{itemize}
Here, each TS update is fast. However, in high-dimensional or non-conjugate (e.g., neural network) reward models, exact posterior inference becomes intractable.

\subsection{Challenges in Scaling Thompson Sampling}

While Thompson Sampling has strong theoretical properties (e.g., near-optimal regret in finite action or linear bandit scenarios), it faces two main challenges when extended to large-scale or complex environments: 
\textbf{(1) intractability in non-conjugate models}, and \textbf{(2) the computational overhead of approximate inference methods}.

\paragraph{Non-Conjugate Models.}
Many real-world applications (e.g., deep neural networks, highly structured rewards) do not admit closed-form updates, which is the first challenge of intractability. Direct posterior sampling in these cases requires approximate Bayesian techniques.

\subsubsection{Approximate Bayesian Inference}
\label{sec:approxBayes}

\paragraph{Motivation.}
This leads to the second challenge: the computational overhead of such approximations. To preserve the essence of TS (sampling from a distribution over plausible models), various approximate inference methods aim to produce posterior-like samples at each step. However, many of these approaches suffer from high computational overhead.

\paragraph{Prominent Approximate Methods.}
\begin{itemize}[leftmargin=*]
  \item \textbf{Laplace Approximation}~\citep{mackay1992practical}: Approximates the posterior around its mode with a Gaussian. Scales poorly if the parameter dimension is large.
  \item \textbf{Variational Inference (VI)}~\citep{blei2017variational}: Uses a parametric distribution and minimizes a KL divergence. Handles larger dimensions than Laplace but involves biases based on chosen family.
  \item \textbf{MCMC Methods}~\citep{welling2011bayesian}: Iteratively generate samples from the true posterior. Accurate but often expensive in high dimensions or real-time tasks.
  \item \textbf{Langevin Monte Carlo (LMC)}~\citep{xu2022langevin}: A gradient-based MCMC approach adding noise to gradient steps. Its iterative nature can be costly in long horizons. More precisely, the per-step computation complexity grows linearly with the size of the history interaction data set.
\end{itemize}

\paragraph{Key Limitations.}
\begin{itemize}[leftmargin=*]
  \item \emph{Biased Uncertainty}: Approximate posteriors may misestimate uncertainty, harming TS’s exploration.
  \item \emph{Iterative Overheads}: Repeated passes over the entire history per step become impractical as \(T\) grows large.
  \item \emph{Scalability}: Quadratic/cubic scaling in model dimension is prohibitive for large networks.
\end{itemize}
Thus, while approximate methods broaden TS’s applicability, their computational or memory costs remain problematic in large-scale, non-conjugate settings.

\subsection{Gaussian Sampling via Local Perturbation}
\label{sec:localPerturbation}

An alternative for \emph{linear–Gaussian} environments is \textbf{local perturbation}~\citep{papandreou2010gaussian}, which incrementally updates posterior samples in \(\Ocal(d^2)\) per step—avoiding \(\Ocal(d^3)\) matrix factorizations.

\paragraph{Idea.}
Suppose \(\theta^*\sim \mathcal{N}(\mu_0,\Sigma_0)\) and observations
\[
  Y_s \;=\; X_s^\top\,\theta^* + \omega_s^*,
  \quad
  \omega_s^*\sim \mathcal{N}(0,1).
\]
Then the posterior after \(t\) observations is \(\mathcal{N}(\mu_t,\Sigma_t)\). Rather than factor \(\Sigma_t\) at each step, local perturbation maintains
\[
  \tilde{A}_t
  \;=\;
  \Sigma_t\!
  \Bigl(
  \Sigma_0^{-1}\tilde{A}_0
  \;+\;
  \sum_{s=1}^t X_s \,Z_s
  \Bigr),
\]
with \(\tilde{A}_0 \sim \mathcal{N}(0,\Sigma_0)\) and \(Z_s\sim \mathcal{N}(0,1)\). Under a \emph{fixed} (non-adaptive) design \(\{X_s\}\), \(\tilde{A}_t\sim \mathcal{N}(0,\Sigma_t)\), hence
\[
  \tilde{\theta}_t = \mu_t + \tilde{A}_t
  \;\;\;
  \text{is an exact draw from }
  \mathcal{N}(\mu_t,\Sigma_t).
\]
Both \(\mu_t\) and \(\tilde{A}_t\) update incrementally in \(\Ocal(d^2)\).

\subsubsection{Distribution Matching Proof Outline}
\label{sec:ch2-dist-match}
For completeness, we briefly sketch why local perturbation yields an \emph{exact} posterior draw in the non-adaptive case.

Let \(D_t=\{(X_s,Y_s)\}_{s=1}^t\). Then:
\[
  \E{\tilde{A}_t \mid D_t}=0,
  \quad
  \Cov(\tilde{A}_t\mid D_t)=\Sigma_t,
\]
implying \(\mu_t+\tilde{A}_t\sim \mathcal{N}(\mu_t,\Sigma_t)\). The key steps:
\paragraph{Mean argument.}
For each $s$, $Z_s \sim N(0,1)$ is independent of $D_t$, so $\E{Z_s | D_t}=0$.
Similarly, $\tilde{A}_0\sim N(0,\Sigma_0)$ is independent of $D_t$ and $\E{\tilde{A}_0 \mid D_t}=0$.
Hence,
\[
  \E{\tilde{A}_t \mid D_t}
  \;=\;\Sigma_t \Bigl(\Sigma_0^{-1}\E{\tilde{A}_0\mid D_t}
  +\sum_{s=1}^t X_s\,\E{Z_s\mid D_t}\Bigr)=0.
\]
\paragraph{Covariance argument.}
Because $Z_s\sim N(0,1)$ i.i.d.\ and $\tilde{A}_0\sim N(0,\Sigma_0)$,
\begin{align*}
  \Cov[\tilde{A}_t \mid D_t]
   & = \Sigma_t\Bigl(\Sigma_0^{-1}\,\Cov[\tilde{A}_0\mid D_t]\,\Sigma_0^{-1}
  + \sum_{s=1}^t X_s X_s^\top \E{Z_s^2}\Bigr)\Sigma_t                                                  \\
   & = \Sigma_t \Bigl(\Sigma_0^{-1}\,\Sigma_0\,\Sigma_0^{-1} + \sum_{s=1}^t X_s X_s^\top\Bigr)\Sigma_t \\
   & = \Sigma_t\bigl(\Sigma_t^{-1}\bigr)\Sigma_t
  = \Sigma_t.
\end{align*}
Thus $\tilde{A}_t\sim \mathcal{N}(0,\Sigma_t)$ \emph{conditionally on $D_t$}.
Therefore,
$\tilde{\theta}_t:=\mu_t+\tilde{A}_t$ \emph{has mean $\mu_t$ and covariance $\Sigma_t$}
and matches exactly \(\mathcal{N}(\mu_t,\Sigma_t)\).

\subsection{Recursive Randomized Least Squares (RRLS)}
\label{sec:RRLS}

\paragraph{Motivation.}
Motivated by bounded per-step computation requirement, one could attempt to update the parameter vector \(\theta_t\) in an incremental, \emph{recursive} manner:
\begin{align}
  \label{eq:recursive-update}
  \theta_t
  \;=\;
  \Sigma_t
  \Bigl(\Sigma_{t-1}^{-1}\theta_{t-1} + X_t\,\bigl(Y_t + Z_t\bigr)\Bigr),
\end{align}
where \(Z_t\sim \mathcal{N}(0,1)\) is a fresh random perturbation at each time \(t\). This yields the \emph{Recursive RLS} (RRLS) algorithm:

\begin{algorithm}[H]
  \caption{Recursive Randomized Least Squares (RRLS)}
  \label{alg:rrls_appendix}
  \begin{algorithmic}[1]
    \STATE Initialize $\theta_0 \sim \mathcal{N}(\mu_0, \Sigma_0)$
    \FOR{$t = 1$ to $T$}
    \STATE $X_t = \arg\max_{x \in \mathcal{X}_t} \langle \theta_{t-1}, x \rangle$
    \STATE Observe $Y_t$
    \STATE Sample $Z_t \sim \mathcal{N}(0,1)$
    \STATE Update $\theta_t$ via \eqref{eq:recursive-update}
    \ENDFOR
  \end{algorithmic}
\end{algorithm}

\paragraph{Sequential Dependency}
However, RRLS introduces \emph{sequential dependency} because the action \( X_t \) chosen at time \( t \) depends on the previous parameter estimate \( \theta_{t-1} \), which itself depends on all past perturbations \( Z_s \) and past actions \(X_s\) for \( s < t \). Due to this sequential dependency, the conditional expectation and covariance of \( \theta_t \) no longer match those of the posterior distribution \( \theta^* \mid D_t \).
This is because when conditioning on \( D_t \), the perturbations \( Z_1, \ldots, Z_t \) are no longer independent and identically distributed (i.i.d.) as Normal random variables. This results in biased estimates and ineffective exploration, giving linear regret in some scenarios.
This sequential dependency is illustrated in Figure~\ref{fig:dependence-graph-incre}.

\begin{figure}[H]
  \centering
  \begin{tikzpicture}[node distance=2cm, auto, scale=1, transform shape]
    \node (empty) {};
    \node (Z0) [below of=empty] {\( {\theta}_0 \)};
    \node (x1) [right of=empty] {\( X_1 \)};
    \node (z1) [below of=x1] {\( Z_1 \)};
    \node (x2) [right of=x1] {\( X_2 \)};
    \node (z2) [below of=x2] {\( Z_2 \)};
    \node (x3) [right of=x2] {\( X_3 \)};
    \node (z3) [below of=x3] {\( Z_3 \)};
    \node (xdot) [right of=x3] {\( \ldots \)};
    \node (zdot) [below of=xdot] {\( \ldots \)};
    \node (xt) [right of=xdot] {\( X_t \)};
    \node (zt) [below of=xt] {\( Z_t \)};
    \node (xtplus1) [right of=xt] {\( X_{t+1} \)};

    \draw[->] (Z0) -- (x1);
    \draw[->] (x1) -- (x2);
    \draw[->] (z1) -- (x2);
    \draw[->] (x2) -- (x3);
    \draw[->] (z2) -- (x3);
    \draw[->] (x3) -- (xdot);
    \draw[->] (z3) -- (xdot);
    \draw[->] (xdot) -- (xt);
    \draw[->] (zdot) -- (xt);
    \draw[->] (xt) -- (xtplus1);
    \draw[->] (zt) -- (xtplus1);

    \node[above, font=\footnotesize] at (x1.north) {Time \( t=1 \)};
    \node[above, font=\footnotesize] at (x2.north) {Time \( t=2 \)};
    \node[above, font=\footnotesize] at (x3.north) {Time \( t=3 \)};
    \node[above, font=\footnotesize] at (xt.north) {Time \( t \)};
    \node[above, font=\footnotesize] at (xtplus1.north) {Time \( t+1 \)};
  \end{tikzpicture}
  \caption{Sequential dependence due to the interplay between recursive updates and sequential decision-making. \( Z_1, \ldots, Z_t \) are no longer independent and identically distributed (i.i.d.) when conditioned on the data ${X_{t+1}}$.}
  \label{fig:dependence-graph-incre}
\end{figure}
One potential workaround is to \emph{resample} fresh random perturbations $\{Z_s\}_{s\le t}$ and re-fit from scratch each step (e.g., storing all historical data) to restore independence, but this is computationally and memory expensive in practice \citep{osband2019deep,kveton2020perturbed} and defeats the purpose of a cheap incremental method.

The next subsection describes an approach—\emph{ensemble sampling}—that mitigates sequential dependency via multiple parallel parameter vectors, each incrementally updated.
\subsection{Ensemble Sampling (ES)}
\label{sec:ensemble_sampling_appendix}

\paragraph{Principle.}
Ensemble Sampling~\citep{osband2015bootstrapped,osband2016deep,lu2017ensemble} keeps $M$ independent parameter vectors (or models). At time $t$, it uniformly picks one model $m_t$ to decide $X_t$ and then updates all $M$ models incrementally and recursively. Intuitively, if these $M$ vectors approximate $M$ draws from the posterior, the overall policy resembles Thompson Sampling.

\paragraph{Algorithm Outline.}
\begin{itemize}[leftmargin=*]
  \item \emph{Initialization:} $\theta_{0,m}\sim \mathcal{N}(\mu_0,\Sigma_0)$ for $m=1,\ldots,M$.
  \item \emph{Action Selection:}
        \[
          m_t \sim \text{Uniform}\{1,\dots,M\},
          \qquad
          X_t = \arg\max_{x\in \mathcal{X}_t}\,\langle\theta_{t-1,m_t},x\rangle.
        \]
  \item \emph{Model Updates:} Each $\theta_{t,m}$ is updated in an RRLS-like manner, but with fresh noise $Z_{t,m}$.
\end{itemize}
This balances memory usage ($M\ll T$) against sequential dependency.
\begin{table}[H]
  \centering
  \caption{Comparison of methods for addressing sequential dependency.}
  \begin{tabular}{lccc}
    \toprule
    \textbf{Method}    & \textbf{Computation per Step} & \textbf{Memory Usage} & \textbf{Sequential Dependency} \\
    \midrule
    RLS (\( M = T \))  & \( O(T) \)                    & High                  & None                           \\
    ES (\( M \ll T \)) & \( O(M) \)                    & Moderate              & Reduced                        \\
    RRLS (\( M = 1 \)) & \( O(1) \)                    & Low                   & High                           \\
    \bottomrule
  \end{tabular}
  \label{tab:comparison}
\end{table}
\paragraph{Empirical Trade-offs.}
\begin{itemize}[leftmargin=*]
  \item \emph{If $M=T$,} and each model is selected exactly once at time \( t \) (i.e., \( m_t = t \)), the Ensemble Sampling method becomes equivalent to original randomized least squares (RLS) with perturbations resampling and model retraining at each time step. This approach eliminates sequential dependency entirely but requires huge computation and memory overhead.
  \item \emph{If $M=1$,} it degenerates to RRLS with minimal memory but strong sequential dependency.
        Hence, by choosing $M\ll T$, one often obtains good practical performance (Fig.~\ref{fig:es_performance}). This suggest, empirically, ES with moderate $M$ can achieve performance comparable to TS while only paying a factor of $M$ overhead in memory and a moderate per-step cost.
\end{itemize}

\begin{figure}[ht]
  \centering
  \includegraphics[width=0.55\textwidth]{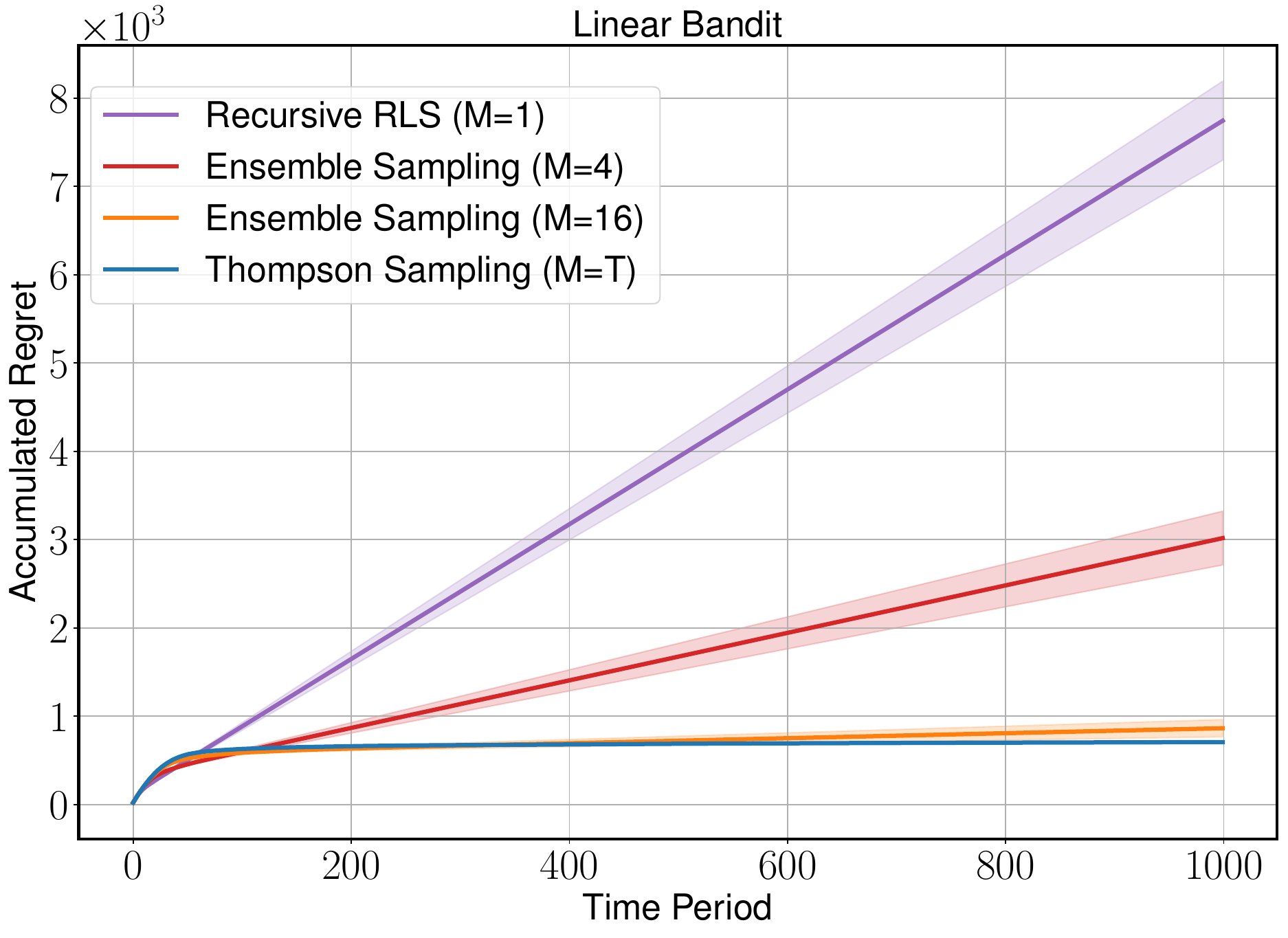}
  \caption{Ensemble Sampling (ES) with moderate $M$ achieves near-TS performance. Setup: $|\mathcal{X}|=10{,}000$ and dimension $d=50$.}
  \label{fig:es_performance}
\end{figure}

\paragraph{Theoretical Limitations}
\citet{qin2022analysis} provide a rigorous regret analysis for linear ensemble sampling that could match the regret order of exact Thompson sampling but require $M=O(\abs{\mathcal{X}} T)$ to maintain $\sqrt{T}$ scaling in Bayesian regret, a major barrier in practical large-scale problems. This suggests \emph{naively} we might need an ensemble size that scales linearly with $T$ or $|\mathcal{X}|$—infeasible for large action sets and long horizons tasks, contradicting with the empirical findings of a moderate size of ensembles.

\begin{remark}
  \citet{qin2022analysis} consider a \( d \)-dimensional linear bandit problem with an action set \(\mathcal{X}\). When the true parameter follows a standard normal distribution \( \theta^* \sim \mathcal{N}(0, I_d) \), the Bayesian regret is bounded by:
  \[
    BR(T) \leq C\sqrt{dT\log \abs{ \mathcal{X} }} + CT\sqrt{\frac{ \abs{\mathcal{X}} \log(MT)}{M}}(d \wedge \log \abs{\mathcal{X}})
  \]
  where \( C > 0 \) is a universal constant. This bound has two significant limitations:
  \begin{enumerate}
    \item To achieve the desired \( \sqrt{T} \) scaling in Bayesian regret (ignoring constant and logarithmic factors), the ensemble size \( M \) must grow linearly with the time horizon \( T \). This requirement undermines the computational efficiency that ensemble sampling aims to achieve.
    \item To maintain a logarithmic dependence on \( \abs{\mathcal{X}}\) in the bound, the ensemble size \( M \) must scale linearly with the number of actions \( \abs{\mathcal{X}} \).
  \end{enumerate}
  These limitations become particularly problematic when dealing with compact action spaces. For instance, consider a bandit problem where \( \mathcal{X} = \mathbb{B}_2^d \) (the \( d \)-dimensional unit ball). To achieve a small discretization error, we need approximately \( 2^{d-1} \) discrete actions. Consequently, following Qin et al.'s bound, the required ensemble size \( M \) would grow exponentially with dimension \( d \).
\end{remark}

\paragraph{Ensemble Sampling Beyond Linear Models}
For a general function class \( \mathcal{F} \), Ensemble Sampling can be extended to approximate the posterior distribution of the optimal function \( f^* \in \mathcal{F} \), e.g. \emph{Bootstrapped Ensemble}~\citep{osband2016deep} and \emph{Ensemble+}~\citep{osband2018randomized,osband2019deep}. The agent maintains \( M \) models, each representing a hypothesis about \( f^* \) based on historical data. At each time step \( t \), the agent samples a model \( m_t \) uniformly from \( \{1, 2, \ldots, M\} \) and selects an action:
\(
X_t = \arg\max_{x \in \mathcal{X}_t} f_{\theta_{t, m_t}}(x),
\)
where \( f_{\theta_{t, m_t}}(x) \) is the prediction of ensemble member \( m_t \) for action \( x \). After observing the reward \( Y_t \), each ensemble member \( m \) updates its parameters \( \theta_{t+1, m} \) by performing stochastic gradient descent on the loss~(\cref{eq:ensemble_training}) starting from the previous iterate \( \theta_{t, m} \):
\begin{equation}\label{eq:ensemble_training}
  L_m(\theta; D) = \sum_{s=1}^t \left( Y_s + Z_{s,m} - f_{\theta}(A_s) \right)^2 + \Psi(\theta)
\end{equation}
where \( D = \mathcal{H}_t \) and \( Z_{s,m} \) are independent random perturbations added to encourage diversity among ensemble members, and \( \Psi(\theta_{t+1,m}) \) is a regularization term. This perturbed training procedure ensures that each ensemble member captures different aspects of the uncertainty in \( f^* \), representing different plausible hypotheses consistent with the history. The random perturbations \( Z_{s,m} \) are independent across time index \( s \) and model index \( m \). Once realized, \( Z_{s,m} \) are fixed throughout the rest of the training, enabling incremental updates for real-time adaptation. This is a key computational feature compared to methods like Randomized Least Squares (RLS) or Perturbed History Exploration (PHE). In RLS and PHE, fresh independent perturbations for all historical data are introduced at each time \( t \), and the model requires full retraining from scratch to ensure diverse exploration of different plausible hypotheses.
Yet, it is important to note that the theoretical analysis of Ensemble Sampling beyond linear models remains an open research question.

\subsection{Concluding Remarks and Forward Outlook}
\label{sec:remarks_appendix}

Local perturbation methods (RLS, RRLS) and ensemble-based approximations collectively aim to solve large-scale or non-conjugate posterior sampling in an \emph{online} manner. Yet:

\begin{itemize}[leftmargin=*]
  \item \textbf{Recursive RLS (RRLS)} is cheap to update but suffers from \emph{sequential dependency} bias, often giving linear regret in adaptive settings.
  \item \textbf{Ensemble Sampling} lessens sequential dependency empirically with moderate size of ensembles. How, current theory suggest ensemble sampling may require $M\propto T$ or $|\mathcal{X}|$ in worst-case analyses, which is computationally or memory-intensive. Moreover, maintaining $M$ independent neural network ensembles is also computationally prohibitive for large models, even with moderate size $M = 10 ~ 100$.
\end{itemize}

We propose Ensemble++, which addresses these drawbacks by maintaining a single shared factor for covariance approximation, with incremental updates in $\Ocal(d^2\,M)$ and a rigorous proof that $M \approx d\log T$ suffices to achieve near-optimal regret that matches exact Thompson sampling. Before concluding, we briefly compare with broader ensemble-based research, including Ensemble+~\citep{osband2018randomized,osband2019deep} and EpiNet~\citep{osband2023epistemic}.

\subsection{Ensemble Methods in Broader Context}
\label{sec:ensemble_broader_context}

\paragraph{History of Ensemble Approaches.}
Ensemble methods date back to the \emph{Ensemble Kalman Filter}~\citep{evensen1994sequential,evensen2003ensemble} or Bayesian bootstrap~\citep{rubin1981bayesian}. In modern literature, \emph{Bootstrapped Ensemble}~\citep{osband2015bootstrapped,lu2017ensemble} introduced multiple models updated with random perturbations or “bootstrap” samples of data. \emph{Ensemble+}~\citep{osband2018randomized,osband2019deep} introduced the randomized prior ensembles to enhance the exploration efficiency.

\paragraph{Hypernetworks and EpiNet.}
Some architectures, like \emph{Hypermodels}~\citep{dwaracherla2020hypermodels} or \emph{Epistemic Neural Networks (EpiNet)}~\citep{osband2023epistemic}, treat the ensemble index or random seed as additional network inputs, effectively learning a mapping from random “epistemic index” to parameter space. Although conceptually appealing, they typically lack any rigorous understanding and proven regret bounds and may suffer from large parameter counts, as we discuss next.

\subsubsection{Detailed Comparison with EpiNet and Ensemble+}
\label{appendix:epinet_comparison}

\paragraph{EpiNet Overview.}
EpiNet is designed to estimate epistemic uncertainty in neural networks by injecting an “epistemic index” \(z \in \mathbb{R}^M\) into an MLP layer. Its final output is a combination of:
\begin{itemize}[leftmargin=1.5em,noitemsep]
  \item A \emph{base} prediction \(\mu_{\zeta}(x)\) on the raw input \(x\),
  \item An \emph{epinet} MLP \(\sigma_{\eta}^L([x, \tilde{x}, z])\) that processes the concatenation of raw input \(x\), the hidden representation \(\tilde{x} \in \mathbb{R}^D\) of \(x\) and the random index \(z\),
  \item A \emph{fixed prior} \(\sigma^P(x, z)\), typically a collection of \(M\) small MLPs with raw input \(x\) as the input, each producing a per-class offset and combined with random \(z\) as the output.
\end{itemize}

Hence, the final model is
\[
  f_{\mathrm{EpiNet}}(x, z)
  \;=\;
  \mu_\zeta(x)
  \;+\;
  \sigma_\eta^L\bigl([x, \tilde{x}, z]\bigr)
  \;+\;
  \sigma^P(x, z).
\]
This architecture can learn uncertainty-aware predictions but often suffers from a large parameter footprint (due to multiple MLPs) and lacks any proven regret guarantees in bandit settings.
Additionally, because both the \emph{epinet} MLP and the \emph{fixed prior} must take the raw input $x$, it is challenging to apply EpiNet to more complex networks such as Transformers. Therefore, we do not compare EpiNet in the Hate Speech Detection task.

\paragraph{Ensemble+ Overview.}
Ensemble+ extends \emph{ensemble sampling} to deep neural networks using a \emph{randomized prior} approach~\citep{osband2018randomized,osband2019deep}. Concretely:
\begin{itemize}[leftmargin=1.5em,noitemsep]
  \item A \emph{shared} feature extractor processes inputs $x$ into some hidden representation \(\tilde{x}\). There are \(M\) heads \(\{\theta_{m}\}\), each a simple linear layer that predicts the reward from \(\tilde{x}\).
  \item Additionally, a \emph{fixed prior network} is maintained as a separate feature extractor: $x \mapsto \hat{x}$. Also, there are $M$ unique random prior heads that fixed after random initialized, each predicting the additive prior reward from \(\hat{x}\)
\end{itemize}
This design helps capture model uncertainty by mixing learned features with a distinct randomized prior in each ensemble head. However, as with EpiNet, Ensemble+ can become large in parameter count (due to separate prior modules) and currently lacks theoretical regret bounds in deep or high-dimensional bandits.

\paragraph{Parameter Counts.}
We compare the number of parameters of each method. Assuming the number of parameters of the hidden feature extractor is \(H\), we analyze how many additional parameters each method allocates beyond a single hidden feature extractor network.

\begin{itemize}[leftmargin=*]
  \item \textbf{EpiNet:}
        \begin{itemize}[leftmargin=1.2em,noitemsep]
          \item The \emph{epinet} MLP has hidden layers that receive \([x, \tilde{x}, z] \in \mathbb{R}^{d+D+M}\) as input and output \(\mathbb{R}^{M \times C}\) (for \(C\) classes or outputs). Following \cite{osband2023approximate, osband2023epistemic}, we use 2-layer MLPs with 15 units and bias to construct this epinet MLP. Therefore, we count the parameters of this part as:
                \begin{equation}
                  \begin{aligned}
                     & (d+D+M + 1) \times 15 + (15 + 1) \times 15 + (15 + 1) \times (M + C) \\
                     & = 15(d + D + M + 1) + 16 \times 15 + 16 \times (M + C)               \\
                     & = 15d + 15D + 31M + 15 + 240 + 16C                                   \\
                     & = 15d + 15D + 31M + 255 + 16C. \nonumber
                  \end{aligned}
                \end{equation}
          \item The fixed prior \(\sigma^P\) is composed of \(M\) small MLPs, each adding parameters. Following \cite{osband2023approximate, osband2023epistemic}, we use 2-layer MLPs with 5 units and bias to construct this prior network. It takes the raw input \(x \in \mathbb{R}^d\) and each MLP outputs \(\mathbb{R}^C\). Therefore, we count the parameters of this part as:
                \begin{equation}
                  \begin{aligned}
                    M & \times \big( (d+1) \times 5 + (5+1) \times 5 + (5+1) \times C \big)                                          \\
                      & = M \times \big( 5(d + 1) + 30 + 6C \big) = M \times (5d + 5 + 30 + 6C) = M \times (5d + 35 + 6C). \nonumber
                  \end{aligned}
                \end{equation}
          \item Together, EpiNet can have a large overhead as \(M\) small prior MLPs or the epinet’s hidden size grow. We can calculate the total parameters as:
                \[
                  H + 15d + 15D + 31M + 255 + 16C + M \times (5d + 35 + 6C).
                \]
        \end{itemize}
  \item \textbf{Ensemble+:}
        \begin{itemize}[leftmargin=1.2em,noitemsep]
          \item \(M\) \emph{linear} heads, each taking the hidden representation \(\tilde{x} \in \mathbb{R}^D\) as input, produce the main ensemble predictions \(\mathbb{R}^C\), and each head has the same random prior network. Therefore, we count the parameters of this part as:
                \[
                  2 \times M \times ( (D+1) \times C ) = 2MDC + 2MC.
                \]
          \item A \emph{separate} feature extractor for the \(M\) linear prior network heads to form the prior offset. Therefore, we count the parameters of this part as $H$.
          \item This leads to approximately \(2M\) last-layer transforms (main + prior), plus the potential duplication of feature extractors. We can calculate the total parameters as:
                \[
                  2H + 2 \times M \times ( (D+1) \times C ) = 2H + 2MDC + 2MC.
                \]
        \end{itemize}

  \item \textbf{Ensemble++:}
        \begin{itemize}[leftmargin=1.2em,noitemsep]
          \item There are \(M\) \emph{linear} heads without bias for the main ensemble for uncertainty estimation, each mapping \(\mathbb{R}^D \to \mathbb{R}^C\) and equipped with the same prior networks. Therefore, we calculate the parameters of this part as:
                \[
                  2 \times M \times D \times C.
                \]
          \item One more \emph{base} linear head with bias to estimate the mean. The parameters of this part are \((D+1) \times C.\)
          \item In total, this results in \((2M + 1)\) linear layers of dimension \(\mathbb{R}^D \to \mathbb{R}^C\), but each is relatively lightweight. We can calculate the total parameters as:
                \[
                  H + 2 \times M \times D \times C + (D+1) \times C = H + (2M + 1)DC + C.
                \]
        \end{itemize}

\end{itemize}

\paragraph{Computational Efficiency.}
\begin{itemize}[leftmargin=*]
  \item \textbf{EpiNet:}
        Concatenates \([x, \tilde{x}, z]\) of dimension \((d + D + M)\), driving up the input size for the epinet MLP. The fixed prior \(\sigma^P\) also has multiple small MLPs. Training/inference cost grows significantly with \(M\).
  \item \textbf{Ensemble+}:
        Combines a main network and a separate prior network, each with \(M\) linear heads. While each head is relatively cheap, maintaining two feature extractors can be more expensive than Ensemble++’s single shared representation.
  \item \textbf{Ensemble++:}
        Each ensemble head is just a \(\R^D \to \R^C\) linear map, combined additively with a base head. Training/inference overhead remains modest, as backprop only flows through linear heads plus one shared feature extractor. The stop-gradient trick can further reduce overhead.
\end{itemize}

\paragraph{Practical Implications.}
Empirical studies~\citep{li2024hyperagent} show that EpiNet’s parameter overhead often slows training and can degrade exploration. Likewise, Ensemble+ can be parameter-heavy if the prior network is large or if \(M\) grows. By contrast, Ensemble++ uses a single shared representation with relatively simple linear heads (for both ensemble and prior), yielding a smaller parameter footprint and faster training. Crucially, \emph{Ensemble++} also provides a theoretical foundation guaranteeing near-optimal linear-bandit regret with \(M = \widetilde{O}(d\log T)\), whereas EpiNet and Ensemble+ currently lack proven regret bounds.

\paragraph{Conclusion.}
In summary, EpiNet and Ensemble+ push ensemble-based methods toward richer neural function approximation but face large parameter counts and no \emph{a priori} theoretical guarantees.
Ensemble++ uses lightweight linear heads on top of a shared feature extractor—much more efficient in large-scale or real-time settings—and \emph{does} come with rigorous regret analyses for the linear bandit case.
Extending those theoretical insights to deep bandits is an ongoing research direction, but empirical results (\S\ref{sec:exp}) show strong performance of \emph{Ensemble++} relative to EpiNet and Ensemble+.

\section{Ensemble++ Algorithm Details}
\label{appendix:ensemble++_details}

Here we provide detailed derivations and design choices for the Ensemble++ algorithm.
Let $x \in \mathcal{X}$ denote the input, and $h(x; w)$ be the shared feature extractor parameterized by $w$. The extracted features are denoted by $$\tilde{x} = h(x; w).$$ The base network $\psi(\tilde{x}; b)$, parameterized by $b$, estimates the mean prediction based on the shared features.
The ensemble components $\{ \psi(\operatorname{sg}(\tilde{x}); \theta_m) \}_{m=1}^M$, parameterized by $\theta_m$, capture the uncertainty in the prediction. The stop-gradient operator $\operatorname{sg}(\cdot)$ prevents gradients from flowing through $\tilde{x}$ when computing gradients with respect to $\theta_m$, effectively decoupling the ensemble components from the shared layers.
The prior ensemble components $\{ \psi(\tilde{x}; \theta_{0, m}) \}_{m=1}^M$ are fixed throughout the learning process, incentivizing diverse exploration with prior variations in the initial stage where the data region is under-explored.
Put all together, $\theta = \{ w, b, \theta_1, \dots, \theta_M, \theta_{0, 1}, \dots, \theta_{0, M} \}$ are the model parameters. By default, we choose $\psi$ as a linear function:
\[
  \psi(\tilde{x}; \theta) = \langle \tilde{x}, \theta \rangle.
\]

the Ensemble++ model predicts via random linear combinations of the base network and ensemble components, with the prior ensemble components fixed throughout the learning process. The model is defined as:
\begin{equation}
  \label{eq:ensemble++_with_prior}
  f_{\theta}^{++}(x, \zeta_t) = \psi(\tilde{x}; b) +  \psi(\operatorname{sg}(\tilde{x}); \sum_{m=1}^M \zeta_{t,m} \theta_m) + {\operatorname{sg}(\psi(\tilde{x}; \sum_{m=1}^M \zeta_{t,m} \theta_{0, m}))},
\end{equation}
where $\zeta_t = (\zeta_{t,1}, \ldots, \zeta_{t,M})^\top$ is a random vector sampled from an sampling distribution $P_{\zeta}$.

\paragraph{Loss Function Derivation.}

Starting from the loss function $L(\theta; D)$ with symmetric auxiliary variables:

\begin{equation}
  \label{eq:ensemble++_loss_app}
  \frac{1}{2M} \sum_{m=1}^M \sum_{s=1}^N \sum_{\beta \in \{1, -1\}} \left( Y_s + \beta \rvz_{s,m} - \psi(\tilde{x}_s; b) - \beta \psi(\operatorname{sg}(\tilde{x}_s); \theta_m) - \beta \operatorname{sg} ( \psi(\tilde{x}_s; \theta_{0, m}) )\right)^2+ \Phi(\theta).
\end{equation}

Expanding the square and summing over $\beta$:

\begin{align*}
      & \sum_{\beta \in \{1, -1\}} \left( Y_s + \beta \rvz_{s,m} - \psi(\tilde{x}_s; b) - \beta \psi(\operatorname{sg}(\tilde{x}_s); \theta_m) - \beta \operatorname{sg} ( \psi(\tilde{x}_s; \theta_{0, m}) ) \right)^2 \\
  =\, & \sum_{\beta \in \{1, -1\}} \left( (Y_s - \psi(\tilde{x}_s; b)) + \beta (\rvz_{s,m} -\operatorname{sg} ( \psi(\tilde{x}_s; \theta_{0, m}) ) - \psi(\operatorname{sg}(\tilde{x}_s); \theta_m)) \right)^2          \\
  =\, & 2 \left( (Y_s - \psi(\tilde{x}_s; b))^2 + (\rvz_{s,m} - \operatorname{sg} ( \psi(\tilde{x}_s; \theta_{0, m}) ) - \psi(\operatorname{sg}(\tilde{x}_s); \theta_m))^2 \right),
\end{align*}

since the cross terms cancel out due to summing over $\beta \in \{1, -1\}$. This leads to the simplified loss function $L(\theta; D)$:
\begin{equation}
  \frac{1}{M} \sum_{m=1}^M \sum_{s=1}^N \left[ \frac{1}{2} \left( Y_s - \psi(\tilde{x}_s; b) \right)^2 + \frac{1}{2} \left( \rvz_{s,m} - \operatorname{sg} ( \psi(\tilde{x}_s; \theta_{0, m}) ) - \psi(\operatorname{sg}(\tilde{x}_s); \theta_m) \right)^2 \right] + \Phi(\theta).
\end{equation}

\paragraph{Gradient Computations.}

The gradients with respect to the shared parameters $(w,b)$ are derived solely from the base network loss:

\begin{align}
  \nabla_w L(\theta; D) & = \sum_{s=1}^N \left( \psi(\tilde{x}_s; b) - Y_s \right) \nabla_{\tilde{x}_s} \psi(\tilde{x}_s; b) \nabla_w h(A_s; w), \\
  \nabla_b L(\theta; D) & = \sum_{s=1}^N \left( \psi(\tilde{x}_s; b) - Y_s \right) \nabla_b \psi(\tilde{x}_s; b).
\end{align}

The gradients with respect to the ensemble parameters $\theta_m$ are independent of the base network:

\begin{equation}
  \nabla_{\theta_m} L(\theta; D) = \sum_{s=1}^N \left( \psi(\operatorname{sg}(\tilde{x}_s); \theta_m) - Z_{s,m} \right) \nabla_{\theta_m} \psi(\operatorname{sg}(\tilde{x}_s); \theta_m).
\end{equation}

Note that due to the stop-gradient operator $\operatorname{sg}(\cdot)$, the ensemble components do not contribute to the gradients of shared parameters.

\paragraph{Classification Loss Function.}
For classification tasks, we use the cross-entropy loss function instead of the squared loss function in \cref{eq:ensemble++_loss_app}:
\begin{align}
  L(\theta; D)
   & = \frac{1}{2M} \sum_{m=1}^M \sum_{s=1}^N \sum_{\beta \in \{1, -1\}} \operatorname{CE} \left( f^{++}(X_s, \beta e_m), [Y_s, 1-Y_s] \right)  + \Phi(\theta)
\end{align}
where $\operatorname{CE}(X, Y) = - \sum_{j} Y_j ( X_j - \log \sum_{i}\exp X_i  ) $ is the cross-entropy loss function, and $[Y_s, 1-Y_s]$ is the one-hot encoding of the label $Y_s$.

\paragraph{Hyperparameters.}
For the practical implementation of Ensemble++, we utilize a 2-layer MLP with 64 units and ReLU activation to construct the feature extractor $h(x; w)$. The ensemble size is set to $M=8$, and we use a symmetrized slack variable $\beta=0.01$ and weight decay $\lambda=0.01$ across all nonlinear bandit tasks. During training, a unified batch size of $128$ and a learning rate of $0.0001$ are employed for all tasks.
Based on the ablation studies presented in~\cref{fig:quad_ablation}, we adopt the Coordinate update distributions, Sphere reference distribution, and Sphere perturbation distribution to achieve optimal performance. Two key hyperparameters for training are the buffer capacity $C$ and the number of gradient steps $G$. As demonstrated in~\cref{fig:buffer_ablation}, we found that setting the buffer capacity to $C=10,000$ achieves strong performance, even for tasks with $T=100,000$ steps. For gradient steps $G$, task-specific optimal values are determined via parameter sweeps. In the case of synthetic bandits, such as the Quadratic Bandit and Neural Bandit, we use a smaller gradient step size of $G=1$. For tasks involving UCI datasets, we increase the gradient steps to $G=50$ in Mushroom task and $G=100$ in Shuttle task.

\paragraph{Summary.}
We have provided an overview of Ensemble++ in \cref{alg:neural_ens++}.
Through comprehensive empirical studies detailed in \cref{sec:exp,sec:add_exp}, Ensemble++ demonstrates strong performance in practice, even for complex and high-dimensional reward functions.
Moreover, we observe that Ensemble++ exhibits consistent performance in the linear setting, where the agent's regret remains largely unaffected by the size of the decision set. This finding, illustrated in \cref{fig:nonlinear_ablation}, strongly supports the effectiveness of our neural extension.
For clarity and reproducibility, we provide the codebase of Ensemble++ at \url{https://anonymous.4open.science/r/EnsemblePlus2-1E54}.

\subsection{Design of Reference and Perturbation Distributions}
\label{subsec:index_distribution}

In the Ensemble++ algorithm, we distinguish between two key distributions: the perturbation distribution $P_z$ and the reference distribution $P_{\zeta}$. While both play critical roles, they serve different purposes and operate under different theoretical constraints.

The perturbation distribution $P_z$ generates vectors $\mathbf{z}_t$ used in the incremental update of the ensemble matrix factor $\mathbf{A}_t$ through Equation~\eqref{eq:incre-matrix-factor}. These vectors must satisfy strict theoretical requirements—they need to be almost surely unit-norm, and conditionally $\frac{1}{\sqrt{M}}$-sub-Gaussian—to ensure the covariance tracking guarantees established in Lemma~\ref{lem:incre-uncertainty}.

In contrast, the reference distribution $P_{\zeta}$ is used during action selection to sample exploration vectors $\zeta_t$ for generating parameter samples according to $\theta_t = \hat{\theta}_t + \mathbf{A}_t \zeta_t$. The theoretical properties of $P_{\zeta}$ directly impact the exploration behavior and regret bounds of the algorithm. Specifically, as shown in Theorem~\ref{thm:regret-linear-Ensemble++}, the regret bound depends on the ratio $\frac{\rho(P_{\zeta})}{p(P_{\zeta})}$, where:

\begin{itemize}
  \item $\rho(P_{\zeta})$ captures the concentration properties of $P_{\zeta}$, relating to how tightly the distribution concentrates around its mean
  \item $p(P_{\zeta})$ quantifies the anti-concentration properties, which are crucial for ensuring adequate exploration in the action space
\end{itemize}

For effective implementation, the reference distribution $P_{\zeta}$ should satisfy zero-mean and unit-variance properties ($\mathbb{E}[\zeta_t] = 0$ and $\mathbb{E}[\zeta_t\zeta_t^\top] = I_M$) to maintain proper statistical interpretation when used with $\mathbf{A}_t$. The scaling factors in certain distributions (e.g., $\sqrt{M}$ for sphere distribution) ensure this variance normalization.

We consider five candidate distributions for $P_{\zeta}$, each with different theoretical guarantees for exploration and computational properties:

\begin{enumerate}
  \item \textbf{Gaussian Distribution} ($\zeta_t \sim \mathcal{N}(0, I_M)$):

  \item \textbf{Sphere Distribution} ($\zeta_t \sim \sqrt{M} \cdot \mathcal{U}(\mathbb{S}^{M-1})$):

  \item \textbf{Cube Distribution} ($\zeta_t \sim \mathcal{U}(\{1, -1\}^M)$):

  \item \textbf{Coordinate Distribution} ($\zeta_t \sim \mathcal{U}(\sqrt{M}\{\pm e_1, \ldots, \pm e_M \})$):

  \item \textbf{Sparse Distribution} ($s$-sparse random vectors):

\end{enumerate}

As shown in Table~\ref{tab:zeta_properties}, continuous distributions (Gaussian, Sphere, and Cube) generally offer more favorable $\frac{\rho(P_{\zeta})}{p(P_{\zeta})}$ ratios compared to discrete distributions like the Coordinate distribution. This translates to tighter regret constants in Theorem~\ref{thm:regret-linear-Ensemble++}. The anti-concentration property $p(P_{\zeta})$ is particularly important as it directly influences how effectively the algorithm explores the parameter space, with larger values leading to more efficient exploration.

For sampling algorithms and the detailed theoretical analysis of the properties of these distributions (isotropy, concentration, and anticoncentration), see \cref{sec:concentration}.

\section{Technical Details for Lemma~\ref{lem:incre-uncertainty} (Incremental Covariance Tracking)}
\label{sec:key-lemma-proof}

This appendix provides a detailed proof for Lemma~\ref{lem:incre-uncertainty}, which establishes that the Ensemble++, with a sufficiently large ensemble size $M$, can maintain an accurate approximation of the posterior covariance matrix incrementally. The core of the argument relies on adapting sequential Johnson-Lindenstrauss (JL) results to the context of adaptive data collection in bandits.

\paragraph{Proof Sketch}
The recursive update rule for the ensemble matrix factor $\rmA_t$ in Ensemble++, given by \cref{eq:incre-matrix-factor} in the main paper, can be expressed in terms of the inverse covariance $\mSigma_t^{-1}$ as:
\begin{align}
  \label{eq:linear-update}
  \mSigma_t^{-1}\rmA_t = \mSigma_{t-1}^{-1}\rmA_{t-1} + X_t\rvz_t^\top.
\end{align}

Our proof strategy involves two main steps:
\begin{enumerate}
  \item \textbf{Reduction to a Sequential Sum:} We first simplify the problem by analyzing the projection of \cref{eq:linear-update} onto an arbitrary direction. This transforms the matrix update into a sum of vector products, resembling the structure required by sequential JL theorems.
  \item \textbf{Application of Sequential JL and Discretization:} We then apply a sequential Johnson-Lindenstrauss theorem to bound the error in this sum. To extend this result from a single direction to all directions (i.e., to bound the operator norm of the error matrix), we employ a discretization argument over the unit sphere.
\end{enumerate}

To build intuition, we first consider a simplified multi-armed bandit (MAB) setting where the feature vectors $X_t$ are standard basis vectors. In this scenario, the covariance matrix $\mSigma_t$ becomes diagonal. The update rule in \cref{eq:linear-update}, when projected appropriately, reduces to the incremental update for the variance estimate of a single arm, as detailed in \cref{ex:bandit-approx-posterior}. The core challenge, even in this simpler MAB case, is the sequential dependence: the choice of arm $X_t$ (and thus the data $x_s$ in \cref{eq:mab-goal}) depends on past random perturbations $\{\rvz_s\}_{s<t}$. Standard JL arguments do not directly apply under such adaptivity, necessitating the use of a sequential JL theorem (\cref{thm:seqjl-full}).

\begin{example}[Approximate Posterior in a Multi-Armed Bandit]\label{ex:bandit-approx-posterior}
  Consider a multi-armed bandit with $K$ independent arms. Let the unknown mean reward of arm $i$ be $\theta_i^*$. We place a Gaussian prior on each arm's mean: $\theta_i^*\sim\mathcal{N}(\mu_0^i,(\sigma_0^i)^2)$. At each time step $t$, the algorithm selects an arm $X_t \in \{e_1, \dots, e_K\}$, where $e_i$ is the $i$-th standard basis vector. The posterior variance $(\sigma_t^i)^2$ for arm $i$ is updated via Sherman-Woodbury formula, which simplifies to:
  \[
    \frac{1}{(\sigma_t^i)^2}
    \;=\;
    \frac{1}{(\sigma_{t-1}^i)^2} + \mathbf{1}_{\{X_t=e_i\}},
  \]
  and $(\sigma_k^i)^2$ remains unchanged for $k \neq i$ if arm $k$ was not chosen (assuming unit observation noise variance for simplicity).

  Ensemble++ aims to approximate the posterior samples. It stores a factor $m_t^i\in\R^M$ for each arm $i$ such that its squared norm $\|m_t^i \|^2$ approximates the posterior variance $(\sigma_t^i)^2$. An approximate posterior sample for arm $i$ is then formed as $\mu_t^i+\langle m_t^i,\zeta\rangle$, where $\zeta\sim\mathcal{N}(0,I_M)$ is a random vector.
  To maintain $m_t^i$ efficiently, an incremental update rule is derived from the general form in \cref{eq:linear-update}. For a specific arm $i$, this update becomes:
  \begin{align}
    \label{eq:incremental_update}
    \frac{1}{(\sigma_t^i)^2}\,m_t^i
    \;=\;
    \frac{1}{(\sigma_{t-1}^i)^2}\,m_{t-1}^i
    \;+\;
    \mathbf{1}_{\{X_t=e_i\}}\,\mathbf{z}_t,
  \end{align}
  where $\mathbf{z}_t \in \R^M$ are fresh random perturbation vectors at each step $t$.
  For initialization, we set $m_0^i = \sigma_0^i \,\mathbf{z}_0^i$ for some random vector $\mathbf{z}_0^i$ (e.g., satisfying conditions in \cref{thm:seqjl-full}), so that ideally $\|m_0^i\|^2 \approx (\sigma_0^i)^2$.

  The crucial observation is that the choice of arm $X_t$ (and thus the indicator $\mathbf{1}_{\{X_t = e_i\}}$) depends on all past data and consequently on the past random vectors $\mathbf{z}_0, \dots, \mathbf{z}_{t-1}$. Let $x_{s,i} := \mathbf{1}_{\{X_s=e_i\}}$ be the indicator that arm $i$ was chosen at step $s$. Rewriting \cref{eq:incremental_update} for a fixed arm $i$ by unrolling the recursion:
  \[
    \frac{1}{(\sigma_t^i)^2}\, m_t^i
    \;=\; \sum_{s=0}^t x_{s,i}\,\mathbf{z}_s,
    \quad
    \text{while the true precision is}
    \quad
    \frac{1}{(\sigma_t^i)^2}
    \;=\; \sum_{s=0}^t x_{s,i}^2 = \sum_{s=0}^t x_{s,i}
  \]
  (since $x_{s,i}$ is 0 or 1, so $x_{s,i}^2 = x_{s,i}$). The goal is to ensure that the squared norm of the sum of random vectors approximates the sum of squares of the adaptive coefficients:
  \begin{align}
    \label{eq:mab-goal}
    \left\|\sum_{s=0}^t x_{s,i} \mathbf{z}_s\right\|^2
    \;\approx\;
    \sum_{s=0}^t x_{s,i}^2 \quad (\text{uniformly over } t\in\{0,\dots,T\} \text{ and all arms })
  \end{align}
  Standard JL arguments typically require $x_{s,i}$ to be independent of $\mathbf{z}_s$, which is violated here due to the adaptive nature of $X_t$. This motivates the need for a sequential Johnson-Lindenstrauss theorem, such as \cref{thm:seqjl-full}, which is designed to handle such sequential dependencies.
\end{example}

For the general linear bandit case, where $X_t$ are arbitrary bounded feature vectors (not just standard basis vectors), the approach is analogous but more involved. We cannot simply look at individual diagonal elements of $\mSigma_t$. Instead, we analyze the behavior of the matrix update \cref{eq:linear-update} by projecting it onto arbitrary directions $a \in \mathbb{S}^{d-1}$:
\[
  a^\top \mSigma_t^{-1}\rmA_t = a^\top \mSigma_{t-1}^{-1}\rmA_{t-1} + (a^\top X_t)\rvz_t^\top.
\]
This expression has a similar sequential sum structure. The term $a^\top X_t$ is an adaptive scalar coefficient, and $\rvz_t^\top$ is the random vector. The core idea is to show that $\|a^\top \mSigma_t^{-1}\rmA_t\|^2$ concentrates around its expectation, which is related to $a^\top \mSigma_t^{-1}a$.
The proof then requires a discretization argument (covering the unit sphere $\mathbb{S}^{d-1}$ with a finite set of representative directions) and a union bound to extend the concentration results from these discrete directions to the entire continuous space. This allows us to bound the operator norm $\norm{\mSigma_t^{-1/2} \rmA_t \rmA_t^\top \mSigma_t^{-1/2} - \mI}$, which is equivalent to the desired statement in Lemma~\ref{lem:incre-uncertainty}.

In the subsequent subsections, we rigorously formalize each of these components, starting with the sequential Johnson-Lindenstrauss tools.

\subsection{Fundamental Probability Tool: The Sequential Johnson-Lindenstrauss Theorem}
\label{sec:sequential_jl_tool}

As outlined in the proof sketch, our analysis of the incremental updates in Ensemble++ under adaptive data collection hinges on a sequential version of the Johnson-Lindenstrauss (JL) lemma. This section introduces the necessary definitions and the specific sequential JL theorem we utilize. While the original tool from \citep{li2024probability} was developed for scalar processes, our work extends these concepts to handle the high-dimensional vector processes inherent in linear bandits with ensemble methods, requiring the non-trivial discretization argument detailed in \cref{sec:discretization}.

We begin by defining some key concepts within a filtered probability space. Let $(\Omega, \mathcal{F}, \mathbb{F} = (\mathcal{F}_t)_{t \in \mathbb{N}}, \prob)$ be a complete filtered probability space, where $\mathbb{F}$ is the filtration.

\begin{definition}[Adapted Stochastic Process]
  \label{def:adapted_process}
  A stochastic process $(X_t)_{t \in I}$, where $I$ is a vector set (e.g., $\mathbb{N}$ or $\{t \in \mathbb{N}: t \ge t_0 \}$), is said to be \emph{adapted} to the filtration $\mathbb{F} = (\mathcal{F}_t)_{t \in I}$ if, for every $t \in I$, the random variable $X_t$ is $\mathcal{F}_t$-measurable.
\end{definition}

\begin{definition}[Conditionally $\sigma$-Sub-Gaussian Random Variable / Process]
  \label{def:conditional_subgaussian}
  A random variable $X \in \R$ is \emph{$\sigma$-sub-Gaussian} (for $\sigma \ge 0$) if its moment generating function satisfies:
  \[
    \E{\exp( \lambda X )} \le \exp\left( \frac{\lambda^2 \sigma^2}{2}  \right), \quad \forall \lambda \in \R.
  \]
  A stochastic process $(X_t)_{t \ge 1}$ taking values in $\R$ is \emph{conditionally $\sigma_t$-sub-Gaussian} with respect to a filtration $(\mathcal{F}_t)_{t \ge 0}$ if $X_t$ is $\mathcal{F}_t$-measurable, and for a non-negative process $(\sigma_{t-1})_{t \ge 1}$ where $\sigma_{t-1}$ is $\mathcal{F}_{t-1}$-measurable, we have:
  \[
    \E{\exp( \lambda X_t ) \mid \mathcal{F}_{t-1} } \le \exp \left( \frac{ \lambda^2 \sigma_{t-1}^2}{2}  \right), \quad \text{almost surely (a.s.)}, \quad \forall \lambda \in \R.
  \]
  If $\sigma_t = \sigma$ (a constant) for all $t$, the process is called conditionally $\sigma$-sub-Gaussian.

  For a random vector $X \in \R^M$ (or a vector process $(X_t)_{t\ge 1} \subset \R^M$), it is said to be \emph{$\sigma$-sub-Gaussian} (or conditionally $\sigma_t$-sub-Gaussian) if for every fixed vector $v \in \mathbb{S}^{M-1}$, the scalar random variable $\langle v, X \rangle$ (or the scalar process $(\langle v, X_t \rangle)_{t \ge 1}$) is $\sigma$-sub-Gaussian (or conditionally $\sigma_t$-sub-Gaussian, respectively).
\end{definition}

\begin{definition}[Almost Surely Unit-Norm]
  \label{def:unit_norm}
  A random vector $X \in \R^M$ is said to be \emph{almost surely (a.s.) unit-norm} if $\norm{X}_2 = 1$ almost surely.
\end{definition}

\begin{remark}[Properties of Perturbation Distribution $P_{\rvz}$]
  \label{rem:perturbation_properties}
  The perturbation vectors $\rvz_t \in \R^M$ used in Ensemble++ are drawn from a distribution $P_{\rvz}$. For our theoretical analysis, particularly for applying \cref{thm:seqjl-full}, these distributions are chosen and scaled such that specific properties are met.
  When \emph{normalizing} all specific distributions discussed in \cref{sec:concentration}, both the spherical distribution $\mathcal{U}(\mathbb{S}^{M-1})$ and the uniform over scaled cube $\mathcal{U}(\frac{1}{\sqrt{M}} \{1, -1\}^M)$ satisfy:
  \begin{enumerate}
    \item Each $\rvz_t$ is almost surely unit-norm (\cref{def:unit_norm}). For the discrete cube, this holds because $\norm{(\frac{\pm 1}{\sqrt{M}}, \dots, \frac{\pm 1}{\sqrt{M}})}_2^2 = \sum_{j=1}^M (\frac{1}{\sqrt{M}})^2 = M \cdot \frac{1}{M} = 1$.
    \item The process $(\rvz_t)_{t \ge 1}$ is conditionally $\sigma$-sub-Gaussian with $\sigma = \frac{1}{\sqrt{M}}$ according to \cref{def:conditional_subgaussian}.
  \end{enumerate}
\end{remark}

While our ultimate goal is to establish concentration results for adaptive processes in bandits, we first review the classical Johnson-Lindenstrauss lemma which forms the foundation of our analysis. This will help contextualize the sequential variant we subsequently employ.

The essence of geometry preservation within the context of dimensionality reduction can be mathematically formulated as the challenge of designing a probability distribution over matrices that effectively retains the norm of any vector within a specified error margin after transformation.
Specifically, for a given vector $x \in \mathbb{R}^n$, the objective is to ensure that with probability at least $1 - \delta$, the norm of $x$ after transformation by a matrix $\Pi = [\rvz_1, \dots, \rvz_n] \in \mathbb{R}^{m \times n}$ drawn from the distribution $\mathcal{D}_{\varepsilon, \delta}$ remains an $\varepsilon$-approximation of its original norm, as shown below:
\begin{equation}
  \label{eq:jl_property} %
  \mathbb{P}_{\Pi \sim \mathcal{D}_{\varepsilon, \delta}} \left( \|\Pi x\|_2^2 \in [(1-\varepsilon)\|x\|_2^2, (1+\varepsilon)\|x\|_2^2] \right) \geq 1-\delta
\end{equation}
A foundational result in this domain, the following Johnson-Lindenstrauss (JL) lemma, establishes a theoretical upper bound on the reduced dimension $m$, achievable while adhering to the above-prescribed fidelity criterion.

\begin{lemma}[JL lemma \citep{johnson1984extensions}] %
  \label{lem:jl_classic} %
  For any $0 < \varepsilon, \delta < 1/2$, there exists a distribution $\mathcal{D}_{\varepsilon, \delta}$ on $\mathbb{R}^{m \times n}$ for $m = O(\varepsilon^{-2} \log(1/\delta))$ that satisfies eq.~\eqref{eq:jl_property}.
\end{lemma}

The standard Johnson-Lindenstrauss lemma (Lemma~\ref{lem:jl_classic}) provides dimension reduction guarantees for fixed vectors (or a fixed set of vectors). However, this classical result is insufficient for our bandit setting, where vectors are chosen adaptively based on past observations. As illustrated in Example~\ref{ex:bandit-approx-posterior}, the choice of arm $X_t$ depends on all past perturbation vectors $\mathbf{z}_0, \ldots, \mathbf{z}_{t-1}$, creating a sequential dependency that violates the independence assumptions of the standard JL lemma. To address this challenge, we require a sequential version of the JL lemma that can handle these adaptive processes. The following theorem provides exactly such a tool:
\begin{theorem}[Sequential Johnson-Lindenstrauss Theorem, adapted from \citep{li2024probability}]
  \label{thm:seqjl-full}
  Fix $\varepsilon\in(0,1)$, and let $\{\mathcal{F}_t\}_{t\ge0}$ be a filtration. Consider random vectors $\{\mathbf{z}_t\}_{t\ge 0}\subset\mathbb{R}^M$ adapted to $\{\mathcal{F}_t\}_{t \ge 0}$, satisfying:
  \begin{itemize}
    \item \textbf{Initial vector $\mathbf{z}_0$:} $\mathbf{z}_0$ is $\mathcal{F}_0$-measurable, $\mathbb{E}[\|\mathbf{z}_0\|_2^2]=1$, and $\bigl|\|\mathbf{z}_0\|_2^2-1\bigr|\le \varepsilon/2$ almost surely.
    \item \textbf{Subsequent vectors $(\mathbf{z}_t)_{t\ge1}$:} For $t\ge1$, the process $\{\mathbf{z}_t\}_{t\ge1}$ is conditionally $\sqrt{c_0/M}$-sub-Gaussian (see \cref{def:conditional_subgaussian}), where $c_0 > 0$ is an absolute constant. Furthermore, each $\|\mathbf{z}_t\|_2=1$ almost surely (\cref{def:unit_norm}).
  \end{itemize}
  Let $\{x_t\}_{t\ge1}\subset \mathbb{R}$ be adapted to $\{\mathcal{F}_{t-1}\}_{t\ge1}$ and satisfy $x_t^2\le c_x$ a.s., where $c_x > 0$ is an absolute constant. For a fixed $x_0\in\mathbb{R}\setminus\{0\}$, if
  \[
    M\;\ge\;\frac{16\,c_0\,(1+\varepsilon)}{\varepsilon^2}\,\Bigl(
    \log\!\bigl(\tfrac1\delta\bigr)
    \;+\;\log\!\Bigl(1+\tfrac{c_x\,T}{x_0^2}\Bigr)
    \Bigr),
  \]
  then with probability at least $1-\delta$,
  \[
    \forall\,t=0,\dots,T:\quad
    (1-\varepsilon)\sum_{i=0}^t x_i^2
    \;\;\le\;\;\Bigl\|\sum_{i=0}^t x_i\,\mathbf{z}_i\Bigr\|_2^2
    \;\;\le\;\;(1+\varepsilon)\sum_{i=0}^t x_i^2.
  \]
\end{theorem}

\subsection{Applying Sequential JL to the Incremental Update}
\label{sec:proof}

This subsection details how the Sequential Johnson-Lindenstrauss Theorem (\cref{thm:seqjl-full}) is applied to analyze the incremental update rule of Ensemble++. The goal is to show that for any fixed direction $a \in \mathbb{S}^{d-1}$, the squared norm of the projected ensemble factor, $\|a^\top \mSigma_t^{-1} \rmA_t\|_2^2$, remains close to the true projected precision, $a^\top \mSigma_t^{-1} a$, uniformly over time.

We aim to establish bounds on the approximation error for the posterior variance. For a chosen direction $a \in \mathbb{S}^{d-1}$ and a desired precision $\varepsilon \in (0, 1)$, we define the "good event" $\mathcal{G}_t(a, \varepsilon)$ at time $t \in \mathcal{T} := \{0, 1, \ldots, T\}$ as the event where the approximate posterior variance $a^\top \rmA_t \rmA_t^\top a$ is $\varepsilon$-close (multiplicatively) to the true posterior variance $a^\top \mSigma_t a$:
\begin{align}
  \label{eq:a-eps-good-event}
  \mathcal{G}_t(a, \varepsilon) := \left\{ \abs{ a^\top \rmA_t \rmA_t^\top a - a^\top \mSigma_t a} \le \varepsilon (a^\top \mSigma_t a) \right\}.
\end{align}
The overall good event at time $t$ for all directions is then
\begin{align}
  \label{eq:eps-good-event}
  \mathcal{G}_t(\varepsilon) := \bigcap_{a\in \mathbb{S}^{d-1}} \mathcal{G}_t(a, \varepsilon).
\end{align}
Lemma~\ref{lem:incre-uncertainty} in the main paper essentially claims that $\mathcal{G}_t(1/2)$ holds with high probability for all $t \in \mathcal{T}$.

\paragraph{Reduction to Scalar Sequences for \cref{thm:seqjl-full}}
To utilize the Sequential Johnson-Lindenstrauss Theorem, we need to transform our matrix update equation into a form that involves sums of scalar coefficients multiplying random vectors. For a fixed $a \in \mathbb{S}^{d-1}$, we let $\rvs(a) = a^\top \mSigma_0^{-1/2} \rmZ_0$ and $s(a)^2 = a^\top \mSigma_0^{-1} a$.
We define short notation
$\rvz_0 := \rvs(a)^\top/s(a)$ and $x_0 := s(a)$.
For $t \ge 1$, we define $x_t = a^\top X_t$.

With these definitions, we can relate the incremental update to scalar sequences:
\begin{align*}
  a^\top \mSigma_t^{-1} \rmA_t
                          & = \underbrace{a^\top \mSigma_0^{-1/2} \rmZ_0}_{\rvs(a) = \rvz_0^\top x_0} + \sum_{i=1}^t \underbrace{  a^\top (X_i)}_{x_i} \rvz_i^\top, \\
  a^\top \mSigma_t^{-1} a & = \underbrace{a^\top \mSigma_0^{-1} a}_{x_0^2} + \sum_{i=1}^t \underbrace{ a^\top (X_i) (X_i)^\top a}_{x_i^2}
\end{align*}

These equations transform the matrix update into sequences $(x_t)_{t \ge 0}$ and $(\rvz_t)_{t \ge 0}$ that can be analyzed using \cref{thm:seqjl-full}.

\paragraph{Filtration and Measurability}
To apply \cref{thm:seqjl-full}, we need to define the filtration and verify the measurability conditions.
Let $\mathcal{H}_t$ be the $\sigma$-algebra generated from history $\left(\mathcal{X}_1, X_1, Y_1, \ldots, \mathcal{X}_{t-1}, X_{t-1}, Y_{t-1}, \mathcal{X}_t\right)$.
Denote $\mathcal{Z}_1 = \sigma(\rmZ_0)$ and $\mathcal{Z}_t = \sigma(\rmZ_0, \rvz_1, \ldots, \rvz_{t-1})$ for $t \ge 2$.
We observe the following statistical relationships:
\begin{itemize}
  \item $\rvz_t \independent (\cH_t, X_t, \cZ_t)$, while $X_{t}$ depends on $\cH_{t}, \cZ_{t}$
  \item $\rmA_{t-1} \in \sigma(\cH_t, \cZ_t)$
  \item $\mu_{t-1}, \mSigma_{t-1} \in \cH_t$
\end{itemize}

For all $t \ge \mathbb{N}$, let us define the sigma-algebra $\cF_{t} = \sigma(\cH_{t+1}, \cZ_{t+1}, X_{t+1})$. We can verify $\cF_{k} \subseteq \cF_{l}$ for all $k \le l$. Thus $\mathbb{F} = (\cF_t)_{t \in \mathbb{N}}$ is a filtration. With this construction, we can verify that $(\rvz_t)_{t \ge 0}$ is adapted to $(\cF_{t})_{t \ge 0}$ and $(x_t)_{t \ge 1}$ is adapted to $(\cF_{t-1})_{t \ge 1}$, satisfying the conditions in \cref{thm:seqjl-full}.

\begin{figure}[htbp]
  \centering
  \begin{tikzpicture}[node distance=2.2cm, auto]
    \node (empty) {};
    \node (Z0) [below of=empty] {$\rmZ_0$};
    \node (x1) [right of=empty] {$X_1$};
    \node (z1) [below of=x1] {$\rvz_1$};
    \node (x2) [right of=x1] {$X_2$};
    \node (z2) [below of=x2] {$\rvz_2$};
    \node (x3) [right of=x2] {$X_3$};
    \node (z3) [below of=x3] {$\rvz_3$};
    \node (xdot) [right of=x3] {$\ldots$};
    \node (zdot) [below of=xdot] {$\ldots$};
    \node (xt) [right of=xdot] {$X_t$};
    \node (zt) [below of=xt] {$\rvz_t$};
    \node (xtplus1) [right of=xt] {$X_{t+1}$};

    \draw[->] (Z0) -- (x1);
    \draw[->] (x1) -- (x2);
    \draw[->] (z1) -- (x2);
    \draw[->] (x2) -- (x3);
    \draw[->] (z2) -- (x3);
    \draw[->] (zt) -- (xtplus1);
    \draw[->] (x3) -- (xdot);
    \draw[->] (z3) -- (xdot);
    \draw[->] (xdot) -- (xt);
    \draw[->] (zdot) -- (xt);
    \draw[->] (xt) -- (xtplus1);

    \node[above] at (x1.north) {Time $t=1$};
    \node[above] at (x2.north) {Time $t=2$};
    \node[above] at (x3.north) {Time $t=3$};
    \node[above] at (xt.north) {Time $t$};
    \node[above] at (xtplus1.north) {Time $t+1$};
  \end{tikzpicture}
  \caption{Sequential Dependence Structure: The choice of action $X_t$ depends on past random perturbations, illustrating why standard JL results are insufficient for this setting.}
  \label{fig:dependence-graph-ensemble}
\end{figure}

\subsubsection{Prior Approximation (Time $t=0$)}
\label{sec:prior-approximation}
First, we need to ensure that the initial condition for $\rvz_0(a)$ in \cref{thm:seqjl-full} holds. Standard covering argument for the unit sphere and the distributional Johnson-Lindenstrauss lemma~(\cref{lem:jl_classic}, \citep{li2024simple}) establish that when
\begin{align}
  \label{eq:M-dist-jl}
  M \ge M_1(\varepsilon, \delta) := 256\varepsilon^{-2} (d \log 9 + \log (2 / \delta) ),
\end{align}
the initial good event for prior approximation $\mathcal{G}_0(\varepsilon/2)$ holds with probability at least $1- \delta$.

Under the event $\mathcal{G}_{0}(\varepsilon/2)$, we have:
\begin{align}
                        & (1 - \varepsilon/2) a^\top \mSigma_0 a
  \le \|a^\top \mSigma_0^{1/2} \rmZ_0\|^2 \le (1+ \varepsilon/2) a^\top \mSigma_0 a, \quad \forall a \in \mathbb{S}^{d-1}  \nonumber                                                                  \\
  \Leftrightarrow \quad & \norm{ \rmZ_0 \rmZ_0^\top - \mI} \le \varepsilon/2  \nonumber                                                                                                               \\
  \Leftrightarrow \quad & (1 - \varepsilon/2) a^\top \mSigma_0^{-1} a  \le \|a^\top \mSigma_0^{-1/2} \rmZ_0\|^2 \le (1+ \varepsilon/2) a^\top \mSigma_0^{-1} a, \quad \forall a \in \mathbb{S}^{d-1}.
  \label{eq:G-0}
\end{align}
Recall $\rvs(a) = a^\top \mSigma_0^{-1/2} \rmZ_0$ and $s(a)^2 = a^\top \mSigma_0^{-1} a$. Using the short notation $\rvz_0 = \rvs(a)^\top/s(a)$, equation \cref{eq:G-0} implies that $\abs{\|\rvz_0\|^2 - 1} \le (\varepsilon/2)$, satisfying the initial condition required by \cref{thm:seqjl-full}.

\subsubsection{Posterior Approximation (Time $t \ge 1$)}
We now verify the remaining conditions needed to apply \cref{thm:seqjl-full} for $t \ge 1$.

Notice that $x_0^2 = a^\top \mSigma_0^{-1} a \ge \inf_{a \in \mathbb{S}^{d-1}} a^\top \mSigma_0^{-1} a = s_{\min}^2$.
By the assumption of bounded features in \cref{asmp:linear}, we have $x_t^2 = (a^\top X_t)^2 \le 1$ for $t \ge 1$.
That is, the sequence $(a^\top X_t)_{t \ge 1}$ is $1$-square-bounded for any $a \in \mathbb{S}^{d-1}$.

We can also verify that $(\rvz_t)_{t \ge 1}$ is $1/\sqrt{M}$-sub-Gaussian and has unit-norm when the perturbation distribution $P_{z}$ is either the Cube $\mathcal{U}(\{ 1, -1\}^M)$ or the Sphere $\mathcal{U}(\mathbb{S}^{M-1})$.

Under the prior approximation event $\cG_{0}(\varepsilon/2)$, we can apply \cref{thm:seqjl-full} to show that for any fixed $a \in \mathbb{S}^{d-1}$,
\begin{align}
  \label{eq:good-event-E}
  \forall t \in \mathcal{T}, E_t(a, \varepsilon) := \left\{ \abs{ a^\top \mSigma_t^{-1} \rmA_t \rmA_t^\top \mSigma_t^{-1} a - a^\top \mSigma_t^{-1} a} \le \varepsilon a^\top \mSigma_t^{-1} a \right\}
\end{align}
holds with probability at least $1 - \delta$ when
\begin{align}
  \label{eq:M-single-action}
  M \ge \frac{16 (1+ \varepsilon)}{\varepsilon^2}  \left(\log \left( \frac{1}{\delta} \right) + \log \left(1 + \frac{T}{ s_{\min}^2} \right) \right).
\end{align}

\subsection{From Per-Direction to Uniform Guarantee: Variance-Aware Discretization}
\label{sec:discretization}

In \cref{sec:proof}, we established that for any fixed direction $a \in \mathbb{S}^{d-1}$, the approximate precision $\|a^\top \mSigma_t^{-1} \rmA_t\|_2^2$ is close to the true precision $a^\top \mSigma_t^{-1} a$ with high probability (event $E_t(a, \varepsilon)$ in \cref{eq:good-event-E}). However, Lemma~\ref{lem:incre-uncertainty} requires a stronger guarantee: that the approximation holds uniformly for \emph{all} directions $a \in \mathbb{S}^{d-1}$. This section details the discretization argument used to bridge this gap.

We first need some standard tools for covering the unit sphere:

\begin{lemma}[Covering Number of a Sphere~\citep{Vershynin_2012}]
  \label{lem:covering-sphere-1}
  There exists a set $\mathcal{C}_{\iota} \subset \mathbb{S}^{d-1}$ with $\left|\mathcal{C}_{\iota}\right| \leq(1+ 2 / \iota)^d$ such that for all $x \in \mathbb{S}^{d-1}$ there exists a $y \in \mathcal{C}_{\iota}$ with $\|x-y\|_2 \leq \iota$.
\end{lemma}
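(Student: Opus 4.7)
The plan is to prove this standard covering bound via the classical volume-packing argument, since a maximal packing is automatically a covering and packings are easy to bound via volumes.

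First I would construct $\mathcal{C}_\iota$ as a maximal $\iota$-separated subset of $\mathbb{S}^{d-1}$, meaning a set whose pairwise distances are all at least $\iota$ and which cannot be extended without violating this property. Such a set exists by Zorn's lemma (or by a greedy procedure since $\mathbb{S}^{d-1}$ is compact and $\iota > 0$). The first key observation is that maximality immediately yields the covering property claimed in the lemma: given any $x \in \mathbb{S}^{d-1}$, if we had $\|x - y\|_2 > \iota$ for every $y \in \mathcal{C}_\iota$, then $\mathcal{C}_\iota \cup \{x\}$ would be strictly larger and still $\iota$-separated, contradicting maximality. Hence the nearest point $y \in \mathcal{C}_\iota$ satisfies $\|x - y\|_2 \le \iota$.

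Next I would bound $|\mathcal{C}_\iota|$ by a volume-comparison argument in $\mathbb{R}^d$. Consider the open Euclidean balls $B(y, \iota/2)$ for $y \in \mathcal{C}_\iota$. Because any two distinct centers satisfy $\|y - y'\|_2 \ge \iota$, the triangle inequality forces these balls to be pairwise disjoint. On the other hand, each ball sits inside the enlarged ball $B(0, 1 + \iota/2)$, since $\|y\|_2 = 1$. Writing $V_d$ for the volume of the unit ball in $\mathbb{R}^d$ and using disjointness,
\begin{equation*}
|\mathcal{C}_\iota| \cdot V_d \left(\tfrac{\iota}{2}\right)^d = \sum_{y \in \mathcal{C}_\iota} \mathrm{vol}\bigl(B(y, \iota/2)\bigr) \le \mathrm{vol}\bigl(B(0, 1 + \iota/2)\bigr) = V_d \left(1 + \tfrac{\iota}{2}\right)^d.
\end{equation*}
Cancelling $V_d$ and simplifying gives $|\mathcal{C}_\iota| \le \bigl((1 + \iota/2)/(\iota/2)\bigr)^d = (1 + 2/\iota)^d$, which is exactly the claimed bound.

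There is no real obstacle here; the only subtlety is choosing the separation parameter $\iota$ (rather than $\iota/2$) to match the covering radius to the statement, and being careful about open versus closed balls so that disjointness is clean. The argument is self-contained and will be used downstream via a union bound over $\mathcal{C}_\iota$ combined with a variance-aware discretization to reduce the uniform-over-$\mathbb{S}^{d-1}$ guarantee in the good event $\mathcal{G}_t$ to a finite collection of scalar sequential random-projection instances controlled by \cref{thm:seq-jl}.
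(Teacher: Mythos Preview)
Your proof is correct; the paper itself does not supply a proof of this lemma, stating it only as a ``standard covering argument on sphere,'' so your volume-packing argument is exactly the kind of standard justification the paper is implicitly appealing to. There is nothing to compare against, and your reasoning (maximal $\iota$-separated set is an $\iota$-net, then disjoint balls of radius $\iota/2$ inside $B(0,1+\iota/2)$) is the textbook route to the bound $(1+2/\iota)^d$.
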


\begin{lemma}[Computing Spectral Norm on a Covering Set~\citep{Vershynin_2012}]
  \label{lem:spectral-norm-eps-net}
  Let $\mA$ be a symmetric $d \times d$ matrix, and let $\mathcal{C}_\iota$ be an $\iota$-covering of $\mathbb{S}^{d-1}$ for some $\iota \in (0, 1/2)$. Then,
  \[
    \norm{\mA}_{\text{op}} = \sup_{x \in \mathbb{S}^{d-1}} \abs{x^\top \mA x} \le \frac{1}{1- 2\iota} \sup_{x \in \mathcal{C}_{\iota}} \abs{x^\top \mA x}.
  \]
\end{lemma}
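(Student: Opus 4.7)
My plan is a standard $\iota$-net argument for the operator norm of a symmetric matrix. I would start from the variational identity $\norm{\mA} = \sup_{x \in \mathbb{S}^{d-1}} |x^\top \mA x|$, which holds for every symmetric $\mA$ by the spectral decomposition (the quadratic form is maximized in absolute value along an eigenvector of largest modulus). Since $\mathbb{S}^{d-1}$ is compact and $x \mapsto |x^\top \mA x|$ is continuous, this supremum is attained at some $x^\star \in \mathbb{S}^{d-1}$; the $\iota$-covering property (\cref{lem:covering-sphere-1}) then lets me pick $y \in \mathcal{C}_\iota$ with $\norm{x^\star - y} \le \iota$.

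The main algebraic step would be the symmetric bilinear decomposition
\[
(x^\star)^\top \mA x^\star - y^\top \mA y = (x^\star - y)^\top \mA x^\star + y^\top \mA (x^\star - y),
\]
each summand of which is bounded in absolute value by $\norm{\mA} \cdot \norm{x^\star - y} \le \iota \norm{\mA}$ via Cauchy-Schwarz together with $\norm{x^\star} = \norm{y} = 1$. This gives $|(x^\star)^\top \mA x^\star - y^\top \mA y| \le 2\iota \norm{\mA}$, and combining with the variational identity,
\[
\norm{\mA} = |(x^\star)^\top \mA x^\star| \le |y^\top \mA y| + 2\iota \norm{\mA} \le \sup_{z \in \mathcal{C}_\iota} |z^\top \mA z| + 2\iota \norm{\mA}.
\]
Solving for $\norm{\mA}$ would then produce the stated factor $(1 - 2\iota)^{-1}$.

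There is no real obstacle; the one point of care will be keeping the split of $(x^\star)^\top \mA x^\star - y^\top \mA y$ symmetric, so that each summand carries exactly one copy of $(x^\star - y)$ and the leading constant stays at $2\iota$ rather than degrading under a less careful expansion. It is worth noting that the resulting inequality is only informative for $\iota \in (0, 1/2)$, since $(1-2\iota)^{-1}$ becomes non-positive for $\iota \ge 1/2$; any application of the lemma would implicitly restrict to that range.
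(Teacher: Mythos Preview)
Your argument is correct and is the standard $\iota$-net proof of this result. The paper does not actually give a proof of this lemma; it is stated as a standard tool and invoked directly, so there is nothing further to compare.
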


\paragraph{Insufficiency of Standard Discretization}
Utilizing standard discretization for computing the spectral norm as in \cref{lem:spectral-norm-eps-net}, we could set $\iota = 1/4$ and show that
\begin{align*}
  \bigcap_{a \in \mathcal{C}_{{1}/{4}}} E_t(a, \varepsilon/2T) \subseteq \cG_t(\varepsilon).
\end{align*}
This follows because:
\begin{align*}
  \norm{ \mSigma_t^{-1/2} \rmA_t \rmA_t^\top \mSigma_t^{-1/2} - \mI}_{\text{op}}
   & = \sup_{x \in \mathbb{S}^{d-1}}
  \frac{ \abs{x^\top (\mSigma_t^{-1} \rmA_t \rmA_t^\top \mSigma_t^{-1} - \mSigma_t^{-1}) x} }{x^\top \mSigma_t^{-1} x }                                                                    \\
   & \le \frac{2}{\lambda_{\min}(\mSigma_t^{-1})} \sup_{a \in \mathcal{C}_{1/4}} \abs{ a^\top (\mSigma_t^{-1} \rmA_t \rmA_t^\top \mSigma_t^{-1} - \mSigma_t^{-1}) a }                      \\
   & \le 2\varepsilon' \frac{\sup_{a \in \mathcal{C}_{1/4}} a^\top \mSigma_t^{-1} a }{\lambda_{\min}(\mSigma_t^{-1})} \le 2 \varepsilon' \cdot \kappa(\mSigma_t^{-1}) \le 2T \varepsilon'.
\end{align*}
Then by union bound over $\mathcal{C}_{1/4}$ and plugging in $\varepsilon/2T$ to \cref{eq:M-single-action}, we would require $M \ge \Omega(d T^2 \log T)$ to ensure $\bigcap_{a \in \mathcal{C}_{{1}/{4}}} E_t(a, \varepsilon/2T)$ holds with probability at least $1-\delta$.

This result is not acceptable as the per-step computational complexity grows polynomially with the interaction steps $T$. We now introduce a variance-aware discretization approach to resolve this issue.

\paragraph{Variance-Aware Discretization}
The key contribution here is that we choose a variance-weighted norm to measure discretization error. This variance-awareness, together with a specific choice of $O(1/\sqrt{T})$-discretization error and a constant approximation error $\varepsilon$, allows us to achieve an $M = \Theta(d \log T)$ bound.

Let $\rmS_t = \rmSigma_t^{-1} \rmA_t$ and $\rmGamma_t = \rmSigma_t^{-1/2} \rmA_t$. From \cref{eq:good-event-E}, we know that the event
\[
  \forall t \in \mathcal{T}, E_t(a, \varepsilon') = \left\{ \frac{\abs{ a^\top \rmS_t \rmS_t^\top a - a^\top \mSigma^{-1}_t a} } { a^\top \mSigma^{-1}_t a} \le \varepsilon' \right\}
\]
holds with probability at least $1- \delta'$ when
\[
  M \ge \frac{16 (1+ \varepsilon')}{(\varepsilon')^2}  \left(\log \left( \frac{1}{\delta'} \right) + \log \left(1 + \frac{T}{s_{\min}^2} \right) \right).
\]

Let $\mathcal{C}_{\iota} \subset \mathbb{S}^{d-1}$ be the $\iota$-covering set from \cref{lem:covering-sphere-1}, and assume the event $\bigcap_{a \in \mathcal{C}_{\iota}} E_t(a, \varepsilon')$ holds.
Let $x \in \mathbb{S}^{d-1}$ and $y \in \mathcal{C}_{\iota}$ such that $\norm{x - y} \le \iota$.
Define $u = \rmSigma_t^{-1/2} x$ and $v = \rmSigma_t^{-1/2} y$.
We want to bound the difference between the error for $x$ and the error for $y$:
\begin{align*}
   & \frac{\abs{ x^\top \rmS_t \rmS_t^\top x - x^\top \mSigma_t^{-1} x} } { x^\top \mSigma_t^{-1} x} - \frac{\abs{ y^\top \rmS_t \rmS_t^\top y - y^\top \mSigma^{-1}_t y} } { y^\top \mSigma^{-1}_t y}                                                                                                                                                                                                                                                                \\
   & = \frac{\abs{u^\top \rmGamma_t \rmGamma_t^\top u - u^\top u}}{u^\top u} - \frac{\abs{v^\top {\rmGamma}_t \rmGamma_t^\top v - v^\top v}}{v^\top v}                                                                          = \frac{\abs{ \norm{\rmGamma_t u}^2 - \norm{u}^2 }}{\norm{u}^2} - \frac{ \abs{ \norm{\rmGamma_t v}^2 - \norm{v}^2}}{\norm{v}^2}                                                                                                       \\
   & \le \left| \frac{\norm{\rmGamma_t u}^2 }{\norm{u}^2} - \frac{ \norm{\rmGamma_t v}^2 }{\norm{v}^2} \right|                                                                     =  \underbrace{ \left| \frac{\norm{\rmGamma_t u}^2 - \norm{\rmGamma_t v}^2 }{\norm{u}^2}\right|}_{(I)}  + \underbrace{\norm{\rmGamma_t v}^2 \left|\frac{1}{\norm{u}^2} - \frac{1}{\norm{v}^2} \right|                                                                    }_{(II)}.
\end{align*}

We now bound terms $(I)$ and $(II)$ separately. Without loss of generality, assume $\norm{u} \ge \norm{v}$.
Recall $s_{\max}^2 \ge a^\top \mSigma_0^{-1} a \ge s_{\min}^2$ for all $a \in \mathbb{S}^{d-1}$.
Since $\norm{u} = x^\top \mSigma_t^{-1} x = x^\top (\mSigma_0^{-1} + \sum_{s=1}^{t} X_s X_s^\top ) x$, we have $s_{\min}^2 \le \norm{u} \le s_{\max}^2 + t$.

For term $(I)$:
\begin{align*}
  (I) & \le \frac{(\norm{\rmGamma_t u} - \norm{\rmGamma_t v}) (\norm{\rmGamma_t u} + \norm{\rmGamma_t v})}{\norm{u}^2} \le \frac{\norm{\rmGamma_t(u - v)}}{s_{\min}}\left( \frac{\norm{\rmGamma_t u}}{\norm{u}} + \frac{\norm{\rmGamma_t v}}{\norm{v}} \right) \\
      & \le \frac{\norm{\rmGamma_t} \norm{u - v}}{s_{\min}} \left( 2 \norm{\rmGamma_t}  \right) \le \frac{2 \norm{\rmGamma_t}^2 \norm{\rmSigma_t^{-1/2}} \iota}{s_{\min}} \le \frac{2 \norm{\rmGamma_t}^2 \iota \sqrt{s_{\max}^2 + t} }{s_{\min}}.
\end{align*}

For term $(II)$:
\begin{align*}
  (II) & \le \frac{\norm{\rmGamma_t v}^2}{\norm{v}^2} \frac{\norm{u}^2 - \norm{v}^2}{\norm{u}^2}
  \le \norm{\rmGamma_t}^2 \frac{\norm{u}^2 - \norm{v}^2}{\norm{u}^2}      \le \norm{\rmGamma_t}^2 \frac{(\norm{u} - \norm{v}) ( \norm{u} + \norm{v})}{\norm{u}^2}                                                   \\
       & \le \frac{2 \norm{\rmGamma_t}^2 \norm{u - v}}{s_{\min}} \le \frac{2 \norm{\rmGamma_t}^2 \norm{\rmSigma_t^{-1/2}} \iota}{s_{\min}} \le \frac{2 \norm{\rmGamma_t}^2 \iota \sqrt{s_{\max}^2 + t} }{s_{\min}}.
\end{align*}

Combining terms $(I)$ and $(II)$, we get:
\begin{align}
  \label{eq:variance-aware-spectral-norm}
  \norm{ \mSigma_t^{-1/2} \rmA_t \rmA_t^\top \mSigma_t^{-1/2} - \mI}_{\text{op}}
   & = \sup_{x \in \mathbb{S}^{d-1}}
  \frac{ \abs{x^\top (\mSigma_t^{-1} \rmA_t \rmA_t^\top \mSigma_t^{-1} - \mSigma_t^{-1}) x} }{x^\top \mSigma_t^{-1} x }      \nonumber                                                                                                          \\
   & \le \frac{4 \norm{\rmGamma_t}^2 \iota \sqrt{s_{\max}^2 + t} }{s_{\min}} + \sup_{y \in \mathcal{C}_{\iota}} \frac{ \abs{y^\top (\mSigma_t^{-1} \rmA_t \rmA_t^\top \mSigma_t^{-1} - \mSigma_t^{-1}) y} }{y^\top \mSigma_t^{-1} y } \nonumber \\
   & \le \frac{4 \norm{\rmGamma_t}^2 \iota \sqrt{s_{\max}^2 + t} }{s_{\min}} + \varepsilon'.
\end{align}

Let us set
\[
  \iota = \frac{\alpha s_{\min}}{4\sqrt{s^2_{\max} + T}},
\]
where $\alpha$ will be determined shortly.

An equivalent formulation of the norm is $\norm{\rmGamma_t}^2 = \lambda_{\max}(\rmGamma_t \rmGamma_t^\top)$. Therefore:
\begin{align*}
  \norm{ \mSigma_t^{-1/2} \rmA_t \rmA_t^\top \mSigma_t^{-1/2} - \mI}_{\text{op}} = \max\{ \lambda_{\max}(\rmGamma_t \rmGamma_t^\top) - 1, 1 - \lambda_{\min}(\rmGamma_t \rmGamma_t^\top) \}.
\end{align*}

From \cref{eq:variance-aware-spectral-norm}, we get:
\[
  \lambda_{\max}(\rmGamma_t \rmGamma_t^\top) \le \frac{1+ \varepsilon'}{1-\alpha}, \quad
  \lambda_{\min}(\rmGamma_t \rmGamma_t^\top) \ge 1- \varepsilon' - \alpha \lambda_{\max}(\rmGamma_t \rmGamma_t^\top) \ge 1- \varepsilon' - \frac{\alpha(1+ \varepsilon')}{1- \alpha}.
\]

\begin{claim}
  \label{claim:config}
  If $\frac{1+ \varepsilon'}{1-\alpha} = 1 + \varepsilon$ and $\varepsilon'  +\frac{\alpha(1+ \varepsilon')}{1- \alpha} = \varepsilon$, then
  \[
    1 - \varepsilon \le \lambda_{\min}(\rmGamma_t \rmGamma_t^\top) \le \lambda_{\max}(\rmGamma_t \rmGamma_t^\top) \le 1 + \varepsilon.
  \]
\end{claim}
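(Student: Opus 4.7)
The plan is to observe that the claim is a direct substitution into the two one-sided eigenvalue bounds already derived in the variance-aware discretization argument just above. Nothing new needs to be proved about $\rmGamma_t \rmGamma_t^\top$; the content of the claim is just algebraic bookkeeping to translate the ``free parameter'' bounds in $(\alpha, \varepsilon')$ into the clean statement in $\varepsilon$.

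The first step is to recall the upper bound
\[
\lambda_{\max}(\rmGamma_t \rmGamma_t^\top) \;\le\; \frac{1+\varepsilon'}{1-\alpha},
\]
which was derived from the spectral-norm estimate in \cref{eq:variance-aware-spectral-norm}. Substituting the first hypothesis $\frac{1+\varepsilon'}{1-\alpha} = 1+\varepsilon$ immediately yields $\lambda_{\max}(\rmGamma_t \rmGamma_t^\top) \le 1+\varepsilon$. The second step is analogous: from the lower bound
\[
\lambda_{\min}(\rmGamma_t \rmGamma_t^\top) \;\ge\; 1 - \varepsilon' - \frac{\alpha(1+\varepsilon')}{1-\alpha},
\]
the second hypothesis $\varepsilon' + \frac{\alpha(1+\varepsilon')}{1-\alpha} = \varepsilon$ gives $\lambda_{\min}(\rmGamma_t \rmGamma_t^\top) \ge 1-\varepsilon$. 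Combining these with the trivial $\lambda_{\min}(\rmGamma_t \rmGamma_t^\top) \le \lambda_{\max}(\rmGamma_t \rmGamma_t^\top)$ closes the chain.

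There is essentially no obstacle at this step; the substance of the argument already lives in the variance-aware covering bound in \cref{eq:variance-aware-spectral-norm}. The only mild care point worth flagging, which I would handle either inside the proof or in a short remark afterward, is the feasibility of the two equalities: given a target $\varepsilon \in (0,1)$, one needs to exhibit $\alpha \in (0,1)$ and $\varepsilon' > 0$ satisfying both constraints, since otherwise the ``if'' is vacuous. Eliminating $\varepsilon'$ via the first equality gives $\varepsilon' = \varepsilon(1-\alpha) - \alpha$, and substitution into the second reduces to the consistent identity $\varepsilon' = \varepsilon - \alpha(1+\varepsilon)$, so any $\alpha \in (0, \varepsilon/(1+\varepsilon))$ works and yields a valid $\varepsilon' \in (0,\varepsilon)$. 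This also shows, going back to the choice $\iota = \alpha s_{\min} / (4\sqrt{s_{\max}^2+T})$ and the application of \cref{thm:seq-jl} at accuracy $\varepsilon'$, that the required index dimension $M$ scales as in \cref{eq:M-bound} once one fixes, e.g., $\alpha = \varepsilon/(2(1+\varepsilon))$ and $\varepsilon = 1/2$, recovering the constants in the statement of \cref{lem:incre-seq-post-approx}.
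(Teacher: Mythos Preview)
Your proposal is correct and matches the paper's approach exactly: the claim is just a direct substitution of the two hypotheses into the eigenvalue bounds $\lambda_{\max}(\rmGamma_t\rmGamma_t^\top)\le (1+\varepsilon')/(1-\alpha)$ and $\lambda_{\min}(\rmGamma_t\rmGamma_t^\top)\ge 1-\varepsilon'-\alpha(1+\varepsilon')/(1-\alpha)$ derived immediately before it. Your additional feasibility discussion (showing the two constraints are equivalent and admit a one-parameter family of solutions) is correct and goes slightly beyond the paper, which simply exhibits the concrete choice $(\varepsilon',\alpha)=(1/4,1/6)$ for $\varepsilon=1/2$.
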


For our target precision $\varepsilon = 1/2$, the parameter values $(\varepsilon', \alpha) = (1/4, 1/6)$ satisfy \cref{claim:config}.
With these values, the discretization error $\iota$ becomes:
$$\iota = \frac{s_{\min}}{24 \sqrt{s^2_{\max} + T}}.$$

The covering number is $\left|\mathcal{C}_{\iota}\right| \leq (1+ 2 / \iota)^d \le (1 + (48/s_{\min}) \sqrt{s^2_{\max} + T})^d$.
By union bound and defining $\delta' = \delta /(1 + (48/s_{\min}) \sqrt{s^2_{\max} + T})^d$, we have
$$\prob\left( \bigcap_{t \in \mathcal{T}} \cG_t(1/2) \mid \cG_0(1/4) \right) \ge 1 - \delta,$$
when
\[
  M \ge M_{2}(\delta) := \frac{16 (5/4)}{(1/4)^2}  \left(d \log \left( \frac{1 + (48/s_{\min}) \sqrt{s^2_{\max} + T}}{\delta} \right) + \log \left(1 + \frac{T}{s_{\min}^2} \right) \right).
\]
This gives us a constant of $320$ multiplying the logarithmic terms.

\subsection{Putting Everything Together}
\label{sec:final_proof}

We now consolidate the requirements on the ensemble size $M$ to ensure that Lemma~\ref{lem:incre-uncertainty} holds. We want to show that with probability at least $1-\delta$, for all $t \in \{0, 1, \ldots, T\}$:
\begin{align}
  \label{eq:final-lemma-goal}
  \norm{\mSigma_t^{-1/2} \rmA_t \rmA_t^\top \mSigma_t^{-1/2} - \mI}_{\text{op}} \le \varepsilon,
\end{align}
where we target $\varepsilon = 1/2$.

When
\[
  M \ge M_3 := \max\{ M_1({1/4, \delta/{2}}), M_2( \delta / {2}) \},
\]
we have
\[
  \prob \left( \bigcap_{t \in \mathcal{T}} \mathcal{G}_t(1/2) \right) = \prob \left( \bigcap_{t \in \mathcal{T}} \mathcal{G}_t(1/2) \mid \mathcal{G}_0(1/4) \right) \prob \left( \mathcal{G}_0(1/4) \right) \ge  (1- \delta/2)^2 \ge 1 - \delta.
\]

With some calculations, we derive:
\[
  M_1(1/4, \delta/2) = 1024(d \log 9 + \log(4/\delta)),
\]
and
\[
  M_{2}(\delta/2) = 320  \left(d \log \left( \frac{2 + (96/s_{\min}) \sqrt{s^2_{\max} + T}}{\delta} \right) + \log \left(1 + \frac{T}{s_{\min}^2} \right) \right).
\]

Since the total time periods $T$ is the dominant growing term, there exists a constant $T_0$ such that $M_3 = M_{2}(\delta / 2)$ when $T > T_0$. This leads to the key insight that $M$ scales as $\tilde{\Omega}(d \log T)$, rather than polynomially with $T$.

\begin{remark}[Constants and Parameters]
  The constants involved in the bounds depend on known properties of the algorithm's components (like the perturbation distribution). The parameters $s_{\min}, s_{\max}$ are related to the initial covariance properties, while $T$ is the horizon and $d$ is the feature dimension. The analysis crucially demonstrates that the ensemble size $M$ does not need to depend polynomially on $T$, which is essential for the practical efficiency of Ensemble++.
\end{remark}

This completes the proof of Lemma~\ref{lem:incre-uncertainty}, showing that with a suitable ensemble size $M = \Theta(d \log T)$, Ensemble++ can maintain accurate posterior uncertainty estimates throughout the learning process, despite the sequential dependencies introduced by the adaptive data collection.

\section{Technical Details in Regret Analysis for \cref{thm:regret-linear-Ensemble++}}
\label{sec:regret}

\subsection{General Regret Bound}
\newcommand{\ucb}{\mathrm{U}}
\newcommand{\lcb}{\mathrm{L}}

We start by providing a general analytical framework for agents, potentially randomized, operating in generic bandit environments. Let us introduce a few necessary definitions to facilitate the understanding and analysis.
The confidence bound is used for uncertainty estimation over the true function $f^*$ given the history $\mathcal{H}_t$.
\begin{definition}[Confidence bounds]
    \label{def:confidence}
    Confidence bounds are a sequence of real-valued $\mathcal{H}_t$-measurable functions $\lcb_t(\cdot)$ and $\ucb_t(\cdot)$ for $t \in [T]$ such that, with probability at least $1-\delta$, the joint event $\mathcal{E} = \cap_{t \in [T]} \mathcal{E}_t$ holds,
    \(
    \text{where}~\mathcal{E}_t := \{ f^*(a) \in\left[\lcb_t(a), \ucb_t(a)\right], \forall a \in \mathcal{X}_t \}.
    \)
\end{definition}
The agent may not perform well unless it is well-behaved, defined by \textit{reasonableness} and \textit{optimism}. Intuitively, an agent that explores too much or too little will incur a high regret. Reasonableness and optimism are the mechanisms for controlling these potential flaws respectively.
\begin{definition}[Reasonableness]
    Given confidence bounds $\lcb_t(\cdot)$ and $\ucb_t(\cdot)$ for $t \in [T]$, a (randomized) agent is called \textit{reasonable} if it produces a sequence of functions $(\tilde{f}_t(\cdot), t \in [T])$ such that with probability at least $1 - \delta$,
    the joint event $\tilde{\mathcal{E}} = \cap_{t \in [T]} \tilde{\mathcal{E}}_t$ holds,
    \(
    \text{where}~\tilde{\mathcal{E}}_t := \{ \tilde{f}_t(a) \in\left[\lcb_t(a), \ucb_t(a)\right], \forall a \in \mathcal{X}_t \}.
    \)
\end{definition}
In short, \textit{reasonableness} ensures that the chosen action according to $\tilde{f}_t$ is close to the best action, which ensures the agent does not explore actions unnecessarily. The following \textit{optimism} guarantees that the agent explores sufficiently.
\begin{definition}[$\mathrm{p}$-optimism]
    Let $\mathrm{p}$ be a sequence of positive real numbers $(p_t, t \in [T])$. We say a (randomized) agent is $\mathrm{p}$-optimistic when it produces a sequence of functions $(\tilde{f}_t(\cdot), t \in [T])$ such that for all $t \in [T]$, $\tilde{f}_t(\cdot)$ is $p_t$-optimistic, i.e.,
    \(\mathbb{P}( \max_{a \in \mathcal{X}_t} \tilde{f}_t(a) \geq \max_{a \in \mathcal{X}_t} {f}^*(a) \mid \mathcal{H}_t ) \geq p_t.\)
\end{definition}
The generic agent satisfying the conditions on \textit{reasonableness} and \textit{optimism} has desired behavior.

Building upon these definitions, we establish a general regret bound applicable to any agent satisfying these conditions.

\begin{theorem}[General Regret Bound]
    \label{thm:general}
    Given confidence bounds as defined in Definition~\ref{def:confidence}, and assuming the agent is reasonable and \(\mathrm{p}\)-optimistic, the cumulative regret over \( T \) time steps satisfies
    \begin{align}
        \label{eq:general-regret}
        R(T) \leq \sum_{t=1}^T \frac{1}{p_t} \mathbb{E}\left[ \ucb_t(X_t) - \lcb_t(X_t) \mid \mathcal{H}_t \right] + \sum_{t=1}^T \left( \ucb_t(X_t) - \lcb_t(X_t) \right),
    \end{align}
    with probability at least \( 1 - \delta \).
\end{theorem}

\paragraph{Interpretation}

The regret bound in \eqref{eq:general-regret} decomposes into two main components:

\begin{enumerate}
    \item \textbf{Exploration-Exploitation Trade-off:} The first term scales with \( \frac{1}{p_t} \) and the expected width of the confidence bounds. A higher \( p_t \) (i.e., greater optimism) reduces this component, promoting exploration.

    \item \textbf{Confidence Bound Widths:} The second term aggregates the widths of the confidence intervals across all time steps, reflecting the uncertainty inherent in the agent's estimates.
\end{enumerate}

For the regret to be sublinear in \( T \), it is essential that the confidence bounds \( \ucb_t(a) - \lcb_t(a) \) shrink appropriately as \( t \) increases, ensuring that both terms grow slower than linearly with \( T \).

\begin{proof}
    Let $X_t = \arg\max_{a \in \mathcal{X}_t} \tilde{f}_t(a)$ be the action chosen by the agent and $A^*_t = \arg\max_{a \in \mathcal{X}_t} {f}^*(a)$ be the optimal action at time $t$.
    Let $B_t = \max_{a \in \mathcal{X}_t} \lcb_t(a)$, which is $\mathcal{H}_t$-measurable.

    Conditioned on the event $\mathcal{E} \cap \tilde{\mathcal{E}}$, both $f^*(A^*_t) \geq B_t$ and $\tilde{f}_t(X_t) \geq B_t$ hold.

    By $\mathrm{p}$-optimism and the fact that $(f^*(A^*_t) - B_t)$ is $\mathcal{H}_t$-measurable and non-negative:
    \begin{align*}
        p_t & \leq \mathbb{P}( \tilde{f}_t(X_t) - B_t  \geq f^*(A^*_t) - B_t \mid \mathcal{H}_t )                      \\
            & \stackrel{(*)}{\leq} {\mathbb{E}[ \tilde{f}_t(X_t) - B_t \mid \mathcal{H}_t ]}/{ ( f^*(A^*_t) - B_t ) },
    \end{align*}
    where $(*)$ is due to Markov's inequality.

    Rearranging and using the fact $B_t \geq \lcb_t(X_t)$ yields:
    \begin{align}
        \label{eq:optimism}
        f^*(A^*_t) - \tilde{f}_t(X_t) & \leq f^*(A^*_t) - B_t                                                         \\
                                      & \leq \frac{1}{p_t} \mathbb{E}[ \tilde{f}_t(X_t) - B_t \mid \mathcal{H}_t]     \\
                                      & \leq \frac{1}{p_t} \mathbb{E}[ \ucb_t(X_t) - \lcb_t(X_t) \mid \mathcal{H}_t].
    \end{align}

    By reasonableness, $\tilde{f}_t(X_t) \leq \ucb_t(X_t)$. Then, from the definition of confidence bounds:
    \begin{align}
        \label{eq:reasonable}
        \tilde{f}_t(X_t) - f^*(X_t) \leq \ucb_t(X_t) - \lcb_t(X_t)
    \end{align}

    Putting \eqref{eq:optimism} and \eqref{eq:reasonable} together, and then summing over the time index $t$ yields the general regret upper bound.
\end{proof}

\subsection{Proof of \cref{thm:regret-linear-Ensemble++} for linear contextual bandits}
To make the proof easy to access, we restate the core results and a few notations that are needed for the proof of the propositions.

\begin{table}[htbp]
    \centering
    \caption{The coefficients $\rho(P_{\zeta})$ and $p(P_{\zeta})$ related to reasonableness and optimism conditions. Restated from \cref{tab:zeta_properties} with evidence from \cref{sec:concentration}.}
    \resizebox{1\textwidth}{!}{%
        \begin{tabular}{@{}lccccc@{}}
            \toprule
            $P_{\zeta}$                                       &
            Gaussian
            $\mathcal{N}(0, I_M)$                             &
            Sphere
            $\sqrt{M} \mathcal{U}(\mathbb{S}^{M-1})$          &
            Cube
            $\mathcal{U}(\{1, -1\}^M)$                        &
            Coord
            $\sqrt{M}\mathcal{U} (\{ \pm e_i \}_{i \in [M]})$ &
            Sparse                                              \\
            \midrule
            $\rho(P_\zeta)$                                   &
            $\rho_1 \wedge \rho_3$                            &
            $\rho_2 \wedge \rho_3$                            &
            $\rho_2 \wedge \rho_3$                            &
            $\rho_2$                                          &
            $\rho_2$                                            \\
            \midrule
            $p(P_\zeta)$                                      &
            $ \frac{1}{4 \sqrt{e \pi}}$                       &
            $\frac{1}{2} - \frac{e^{1/12}}{\sqrt{2\pi}}$      &
            $7/32$                                            &
            $\frac{1}{2M}$                                    &
            \text{N/A}                                          \\
            \bottomrule
        \end{tabular}%
    }
    \label{tab:zeta}
\end{table}

Adapting the results from \citep{abbasi2011online,abeille2017linear}, let \(\beta_t=\sqrt{\lambda} + \sqrt{2 \log (1 / \delta)+\log \det ({\Sigma_{t-1}^{-1} }/{\lambda^d }) } \). Under \cref{asmp:linear}, we define
the confidence bounds as:
\begin{align*}
    \lcb_t(\cdot) & = (-1) \vee (\langle \mu_{t-1}, \phi(\cdot) \rangle - \beta_t \|\phi(\cdot)\|_{\Sigma_{t-1}}), \\
    \ucb_t(\cdot) & = 1 \wedge (\langle \mu_{t-1}, \phi(\cdot) \rangle + \beta_t \|\phi(\cdot)\|_{\Sigma_{t-1}})
\end{align*}

For the purpose of analysis with various reference distributions, we define slightly inflated confidence bounds:
\begin{align*}
    \lcb_t(\cdot; P_{\zeta}) & = (\langle \mu_{t-1}, \phi(\cdot) \rangle - \beta_t \rho(P_{\zeta}) \|\phi(\cdot)\|_{\Sigma_{t-1}}) \vee (-1), \\
    \ucb_t(\cdot; P_{\zeta}) & = (\langle \mu_{t-1}, \phi(\cdot) \rangle + \beta_t \rho(P_{\zeta}) \|\phi(\cdot)\|_{\Sigma_{t-1}}) \wedge 1.
\end{align*}

Here, $\rho(P_{\zeta})$ is defined via:
\begin{align*}
    \rho_1 & = O(\sqrt{M \log (M / \delta)}),          \\
    \rho_2 & = O(\sqrt{M}),                            \\
    \rho_3 & = O(\sqrt{\log (|\mathcal{X}| / \delta)})
\end{align*}
and Table \ref{tab:zeta}.

An immediate observation is that $[\lcb_t(\cdot), \ucb_t(\cdot)] \subset [\lcb_t(\cdot; P_{\zeta}), \ucb_t(\cdot; P_{\zeta})]$. Thus, $\lcb_t(\cdot; P_{\zeta})$ and $\ucb_t(\cdot; P_{\zeta})$ are also valid confidence bounds.

We consider the following functional form for Ensemble++ under linear setup: for time $t$,
\begin{equation}
    \label{eq:linear_ensemble++}
    \tilde{f}_t(a) := f_{\theta_t}(a, \zeta_t) = \langle \phi(a), \beta_{t} \rmA_{t-1}\zeta_t + \mu_{t-1} \rangle, \quad \forall a \in \mathcal{X},
\end{equation}
where the parameters include $\theta_t = (\rmA_t, \mu_t)$.

The condition on the propositions and theorem for regret analysis is when \eqref{eq:M-bound} is satisfied, that is when $M = \Theta(d \log T)$, the \cref{lem:incre-uncertainty} implies that with high probability, the good events $\mathcal{G} = \bigcap_{t = 0}^T \mathcal{G}_t$ hold jointly, where
\begin{align*}
    \mathcal{G}_{t} := \left\{ \frac{1}{2} x^\top \Sigma_t x \leq x^\top \rmA_t \rmA_t^\top x \leq \frac{3}{2} x^\top \Sigma_t x, \quad \forall x \in \mathbb{R}^d \right\}.
\end{align*}

In the following section, we discuss the proof conditioned on the joint event $\mathcal{G}$ and also the confidence event that $f^*(a) \in [\lcb_t(a), \ucb_t(a)]$ for all $t \in [T]$ and $a \in \mathcal{X}$.

\subsubsection{Proof of \cref{prop:Ensemble++-reasonable}}
\begin{proposition}
    \label{prop:Ensemble++-reasonable}
    Under linear setups in \eqref{eq:linear_ensemble++} and \eqref{eq:incre-matrix-factor},
    if \eqref{eq:M-bound} is satisfied, linear Ensemble++ is reasonable, i.e., $\forall t \in [T], \tilde{f}_t(\cdot) = f_{\theta_t}(\cdot, \zeta_t) \in [\lcb_t(\cdot; P_{\zeta}), \ucb_t(\cdot; P_{\zeta})]$ with probability at least $1- \delta$.
\end{proposition}

\begin{proof}
    From \eqref{eq:linear_ensemble++}, we derive:
    \begin{align*}
        |\tilde{f}_t(a) - \langle \mu_{t-1}, \phi(a) \rangle|
         & = |\langle \phi(a), \beta_{t} \rmA_{t-1}\zeta_t \rangle|                                                    \\
         & = \beta_t \sqrt{\phi(a)^\top \rmA_{t-1} \rmA_{t-1}^\top \phi(a)}
        \left| \left\langle \frac{\phi(a)^\top \rmA_{t-1}}{\|\phi(a)^\top \rmA_{t-1}\|}, \zeta_t \right\rangle \right| \\
         & \leq (3/2)\beta_t \sqrt{\phi(a)^\top \Sigma_{t-1} \phi(a)}
        \left| \left\langle \frac{\phi(a)^\top \rmA_{t-1}}{\|\phi(a)^\top \rmA_{t-1}\|}, \zeta_t \right\rangle \right|,
    \end{align*}
    where the last inequality is due to the good event $\mathcal{G}$.

    For compact action sets, by the Cauchy–Schwarz inequality:
    \[
        \left| \left\langle \frac{\phi(a)^\top \rmA_{t-1}}{\|\phi(a)^\top \rmA_{t-1}\|}, \zeta_t \right\rangle \right| \leq \|\zeta_t\|.
    \]

    Using the concentration properties of $P_{\zeta}$ from Section \ref{sec:concentration}, we can bound $\|\zeta_t\|$:
    \begin{itemize}
        \item For Gaussian distribution: $\|\zeta_t\| \leq O(\sqrt{M \log(M/\delta)})$ with probability at least $1-\delta$
        \item For Sphere, Cube, Coordinate, and Sparse distributions: $\|\zeta_t\| = \sqrt{M}$ by construction
    \end{itemize}

    For finite action sets $\mathcal{X}$, we leverage the concentration properties to bound:
    \[
        \mathbb{P}\left( \left| \left\langle \frac{\phi(a)^\top \rmA_{t-1}}{\|\phi(a)^\top \rmA_{t-1}\|}, \zeta_t \right\rangle \right| \leq \sqrt{\log \frac{2|\mathcal{X}|}{\delta}} \mid \mathcal{H}_{t}, \mathcal{Z}_t\right) \geq 1 - \delta,
    \]
    since $\zeta_t$ is independent of the history $\mathcal{H}_{t}$ and $\mathcal{Z}_t$.

    Combining these bounds, we have:
    \[
        |\tilde{f}_t(a) - \langle \mu_{t-1}, \phi(a) \rangle| \leq \beta_t \rho(P_{\zeta}) \|\phi(a)\|_{\Sigma_{t-1}}
    \]
    where $\rho(P_{\zeta})$ is defined in Table \ref{tab:zeta} and incorporates both the factor from the good event and the appropriate concentration bound for each distribution.

    Therefore, with probability at least $1-\delta$:
    \[
        \langle \mu_{t-1}, \phi(a) \rangle - \beta_t \rho(P_{\zeta}) \|\phi(a)\|_{\Sigma_{t-1}} \leq \tilde{f}_t(a) \leq \langle \mu_{t-1}, \phi(a) \rangle + \beta_t \rho(P_{\zeta}) \|\phi(a)\|_{\Sigma_{t-1}}
    \]

    After applying the truncation to $[-1,1]$, we conclude that $\tilde{f}_t(a) \in [\lcb_t(a; P_{\zeta}), \ucb_t(a; P_{\zeta})]$ with probability at least $1-\delta$, which establishes the reasonableness of linear Ensemble++.
\end{proof}

\subsubsection{Proof of \cref{prop:Ensemble++-optimism}}
\begin{proposition}
    \label{prop:Ensemble++-optimism}
    Under linear setups in \eqref{eq:linear_ensemble++} and \eqref{eq:incre-matrix-factor},
    if \eqref{eq:M-bound} is satisfied, linear Ensemble++ using reference distribution $P_{\zeta}$ is $p(P_\zeta)$-optimistic.
\end{proposition}

\begin{proof}
    Let $X_t = \arg\max_{a \in \mathcal{X}_t} \tilde{f}_t(a)$ and $A^*_t = \arg\max_{a \in \mathcal{X}_t} {f}^*(a)$.
    Conditioned on $\mathcal{G}$ and the confidence event:
    \begin{align*}
        \tilde{f}_t(X_t) - f^*(A^*_t)
         & \geq \tilde{f}_t(A^*_t) - f^*(A^*_t)                                                                                                                                                     \\
         & \geq \tilde{f}_t(A^*_t) - \ucb_t(A^*_t)                                                                                                                                                  \\
         & = \langle \phi(A^*_t), \beta_{t} \rmA_{t-1}\zeta_t + \mu_{t-1} \rangle - (\langle \mu_{t-1}, \phi(A^*_t) \rangle + \beta_t \|\phi(A^*_t)\|_{\Sigma_{t-1}})                               \\
         & = \langle \phi(A^*_t), \beta_{t} \rmA_{t-1}\zeta_t \rangle - \beta_t \|\phi(A^*_t)\|_{\Sigma_{t-1}}                                                                                      \\
         & = \beta_t \sqrt{\phi(A^*_t)^\top \rmA_{t-1} \rmA_{t-1}^\top \phi(A^*_t)}
        \left\langle \frac{\phi(A^*_t)^\top \rmA_{t-1}}{\|\phi(A^*_t)^\top \rmA_{t-1}\|}, \zeta_t \right\rangle - \beta_t \|\phi(A^*_t)\|_{\Sigma_{t-1}}                                            \\
         & \geq \beta_t \|\phi(A^*_t)\|_{\Sigma_{t-1}} \left( \frac{1}{\sqrt{2}}\left\langle \frac{\phi(A^*_t)^\top \rmA_{t-1}}{\|\phi(A^*_t)^\top \rmA_{t-1}\|}, \zeta_t \right\rangle - 1\right),
    \end{align*}
    where we used the good event $\mathcal{G}$ to relate $\rmA_{t-1}\rmA_{t-1}^\top$ to $\Sigma_{t-1}$.

    Therefore, the agent is optimistic if:
    \[
        \left\langle \frac{\phi(A^*_t)^\top \rmA_{t-1}}{\|\phi(A^*_t)^\top \rmA_{t-1}\|}, \zeta_t \right\rangle \geq \sqrt{2}
    \]

    We consider the conditional probability:
    \begin{align}
        \label{eq:anti-concentration-derive}
        \mathbb{P}(\tilde{f}_t(X_t) \geq f^*(A^*_t) \mid \mathcal{H}_t, \mathcal{Z}_t) & \geq \mathbb{P}\left(\left\langle \frac{\phi(A^*_t)^\top \rmA_{t-1}}{\|\phi(A^*_t)^\top \rmA_{t-1}\|}, \zeta_t \right\rangle \geq \sqrt{2} \mid \mathcal{H}_t, \mathcal{Z}_t \right)
    \end{align}

    Since $v = \frac{\phi(A^*_t)^\top \rmA_{t-1}}{\|\phi(A^*_t)^\top \rmA_{t-1}\|}$ is a fixed unit vector in $\mathbb{R}^M$ (given $\mathcal{H}_t$ and $\mathcal{Z}_t$), we need the probability $\mathbb{P}(\langle v, \zeta_t \rangle \geq \sqrt{2})$.

    By the anti-concentration properties derived in Section \ref{sec:concentration} and scaling arguments, we know that $\mathbb{P}(\langle v, \zeta_t \rangle \geq 1) \geq p(P_{\zeta})$ for the distributions in Table \ref{tab:zeta}. Since $\mathbb{P}(\langle v, \zeta_t \rangle \geq \sqrt{2}) \leq \mathbb{P}(\langle v, \zeta_t \rangle \geq 1)$, we have:
    \[
        \mathbb{P}(\tilde{f}_t(X_t) \geq f^*(A^*_t) \mid \mathcal{H}_t, \mathcal{Z}_t) \geq p(P_{\zeta})
    \]

    This establishes that linear Ensemble++ with reference distribution $P_{\zeta}$ is $p(P_{\zeta})$-optimistic.
\end{proof}

\subsection{Proof of \cref{thm:regret-linear-Ensemble++}}
\begin{proof}
    The theorem follows directly from Propositions \ref{prop:Ensemble++-reasonable}, \ref{prop:Ensemble++-optimism}, and Theorem \ref{thm:general}.

    First, by Proposition \ref{prop:Ensemble++-reasonable}, linear Ensemble++ is reasonable with respect to the inflated confidence bounds $[\lcb_t(\cdot; P_{\zeta}), \ucb_t(\cdot; P_{\zeta})]$.

    Second, by Proposition \ref{prop:Ensemble++-optimism}, linear Ensemble++ is $p(P_{\zeta})$-optimistic.

    Applying Theorem \ref{thm:general} with these properties:
    \begin{align}
        R(T) & \leq \sum_{t=1}^T \frac{1}{p(P_{\zeta})} \mathbb{E}[\ucb_t(X_t; P_{\zeta}) - \lcb_t(X_t; P_{\zeta}) \mid \mathcal{H}_t] + \sum_{t=1}^T (\ucb_t(X_t; P_{\zeta}) - \lcb_t(X_t; P_{\zeta})) \nonumber \\
             & \leq \frac{1}{p(P_{\zeta})} \sum_{t=1}^T \mathbb{E}[2\beta_t \rho(P_{\zeta}) \|\phi(X_t)\|_{\Sigma_{t-1}} \mid \mathcal{H}_t] + \sum_{t=1}^T 2\beta_t \rho(P_{\zeta}) \|\phi(X_t)\|_{\Sigma_{t-1}}
    \end{align}

    Additionally, by Azuma's inequality for the sum of bounded martingale differences (since $\ucb_t(\cdot) - \lcb_t(\cdot) \leq 2$ is bounded):
    \[
        \sum_{t \in [T]} \mathbb{E}[(\ucb_t(X_t; P_{\zeta}) - \lcb_t(X_t; P_{\zeta})) \mid \mathcal{H}_t] - (\ucb_t(X_t; P_{\zeta}) - \lcb_t(X_t; P_{\zeta})) \leq O(\sqrt{T \log (1/\delta)}),
    \]
    with probability at least $1 - \delta$.

    Thus, it suffices to bound:
    \begin{align}
        \sum_{t=1}^T 2\beta_t \rho(P_{\zeta}) \|\phi(X_t)\|_{\Sigma_{t-1}} & \leq 2\rho(P_{\zeta}) \sqrt{T\sum_{t=1}^T \beta_t^2 \|\phi(X_t)\|^2_{\Sigma_{t-1}}} \nonumber \\
                                                                           & \leq 2\rho(P_{\zeta}) \sqrt{T \cdot 2\beta_T^2 \log \frac{\det \Sigma_T}{\det \Sigma_0}}
    \end{align}
    where the first inequality is by Cauchy-Schwarz and the second applies the elliptical potential lemma (Lemma 19.4 in \citep{lattimore2020bandit} and \citep{abbasi2011improved}).

    Since
    \[ \beta_T \le \sqrt{\lambda} + \sqrt{2 \log (1 / \delta) + d \log\left(1 + \frac{T}{d\lambda}\right) } \]
    and
    \[
        \log \frac{\det \Sigma_T}{\det \Sigma_0} \le  d \log\left(1 + \frac{T}{d\lambda}\right),
    \]
    we have:
    \begin{align}
        \sum_{t=1}^T 2\beta_t \rho(P_{\zeta}) \|\phi(X_t)\|_{\Sigma_{t-1}} & \leq O\left(\rho(P_{\zeta}) \sqrt{T \cdot \left( \log (1/\delta) + d\log \frac{T}{d \lambda} \right) \cdot d\log \frac{T}{d \lambda}}\right) \nonumber \\
                                                                           & = O\left(\rho(P_{\zeta}) \sqrt{d^2 T \log^2(T)}\right) \nonumber                                                                                       \\
                                                                           & = O\left(\rho(P_{\zeta}) d \sqrt{T} \log(T)\right)
    \end{align}

    \paragraph{General case} Combining all terms, the regret bound becomes:
    \begin{align}
        R(T) & \leq O\left(\frac{\rho(P_{\zeta})}{p(P_{\zeta})} d \sqrt{T}\log(T) + \rho(P_{\zeta}) d \sqrt{T}\log(T) + \frac{1}{p(P_{\zeta})}\sqrt{T\log(1/\delta)}\right) \nonumber \\
             & = O\left(\left(\frac{\rho(P_{\zeta})}{p(P_{\zeta})} + \rho(P_{\zeta})\right) d \sqrt{T}\log(T)\right)
    \end{align}

    \paragraph{Compact action sets} Based on Table \ref{tab:zeta}, for all distributions except Coordinate, $\rho(P_{\zeta}) = O(\sqrt{d\log T})$ (since $M = \Theta(d\log T)$) and $\frac{1}{p(P_{\zeta})} = O(1)$. For the Coordinate distribution, $\frac{1}{p(P_{\zeta})} = O(M) = O(d\log T)$.

    Therefore, for Gaussian, Sphere, and Cube distributions:
    \[
        R(T) = O(d^{3/2}\sqrt{T}\log^{3/2}T)
    \]

    And for the Coordinate distribution:
    \[
        R(T) = O(d^{5/2}\sqrt{T}\log^{5/2}T)
    \]

    \paragraph{Finite action sets} For finite action sets $|\mathcal{X}|$, we can achieve an improved regret bound. Recall from Table \ref{tab:zeta} that for Gaussian, Sphere, and Cube distributions, $\rho(P_{\zeta}) = \rho_1 \wedge \rho_3$ or $\rho_2 \wedge \rho_3$, where $\rho_3 = O(\sqrt{\log (|\mathcal{X}| / \delta)})$.

    When $|\mathcal{X}|$ is not too large, specifically when $\log|\mathcal{X}| \ll d\log T$, we have $\rho_3 \ll \rho_1$ and $\rho_3 \ll \rho_2$. In this case:
    \begin{align}
        \rho(P_{\zeta}) & = O(\sqrt{\log (|\mathcal{X}| / \delta)})
    \end{align}

    Substituting this into our regret bound:
    \begin{align}
        R(T) & = O\left(\left(\frac{\sqrt{\log (|\mathcal{X}| / \delta)}}{p(P_{\zeta})} + \sqrt{\log (|\mathcal{X}| / \delta)}\right) d \sqrt{T}\log(T)\right) \\
             & = O\left(\frac{\sqrt{\log |\mathcal{X}|}}{p(P_{\zeta})} \cdot d \sqrt{T}\log(T)\right)
    \end{align}

    For Gaussian, Sphere, and Cube distributions, where $\frac{1}{p(P_{\zeta})} = O(1)$, this gives:
    \[
        R(T) = O(d\sqrt{T\log|\mathcal{X}|}\log(T))
    \]

    This represents a significant improvement over the bound for compact action sets when $|\mathcal{X}|$ is small, as the dependence on dimension changes from $O(d^{3/2})$ to $O(d)$.

    This completes the proof.
\end{proof}

\section{Sampling, Isotropy, Concentration and Anti-concentration}
\label{sec:concentration}

This appendix analyzes various probability distributions used in the Ensemble++ algorithm, focusing on their isotropy, concentration, and anti-concentration properties. We begin by clarifying the relationship between two key distributions in our framework.

In Ensemble++, we utilize two related yet distinct distributions:

\begin{enumerate}
    \item \textbf{Reference Distribution ($P_{\zeta}$)}: This is the base distribution from which we sample M-dimensional vectors $\zeta \in \mathbb{R}^M$. We analyze several options for this distribution, including Gaussian, Sphere, Cube, Coordinate, and Sparse.

    \item \textbf{Perturbation Distribution ($P_{\rvz}$)}: This distribution is derived from $P_{\zeta}$ (which exhibits $1$-sub-Gaussianity) through a normalization process. For a vector $\zeta$ independently sampled from $P_{\zeta}$, the corresponding perturbation vector $\rvz$ is obtained as $\rvz = {\zeta}/{\|\zeta\|_2}.$
\end{enumerate}
This normalization ensures that all perturbation vectors $\rvz$ have a unit norm (i.e., $\|\rvz\|_2 = 1$). This property is a crucial requirement for our theoretical analysis, specifically for \cref{lem:incre-uncertainty}. The characteristics of the reference distribution $P_{\zeta}$ directly influence the ensemble size $M$ required to achieve our theoretical guarantees.

The choice of reference distribution affects both statistical efficiency and computational properties of our algorithm. Different reference distributions lead to different constants in our theoretical bounds while offering various practical advantages.

\begin{definition}[Isotropic]
    \label{def:iso}
    A distribution $P$ over $\mathbb{R}^M$ is called isotropic if $\mathbb{E}_{X \sim P}[X_i X_j] = \delta_{ij}$, i.e.,
    $\mathbb{E}_{X \sim P}[X X^\top] = I$. Equivalently, $P$ is isotropic if $\mathbb{E}_{X \sim P}[\langle X, x \rangle^2] = \|x\|^2$, for all $x \in \mathbb{R}^M$.
\end{definition}

The isotropy property (\cref{def:iso}) is used for the update distribution and in proving \eqref{eq:incre-matrix-factor}. The sub-Gaussianity (\cref{def:conditional_subgaussian}) in concentration property is used for perturbation distributions and proving \cref{lem:incre-uncertainty}. The concentration and anti-concentration properties are used for reference distributions and in the discussion on the reasonableness condition (\cref{prop:Ensemble++-reasonable}) and optimism condition (\cref{prop:Ensemble++-optimism}).

We now analyze each distribution case by case.

\subsection{Sphere Distribution $P_{\zeta} = \mathcal{U}(\sqrt{M}\mathbb{S}^{M-1})$}

\begin{algorithm}[htbp]
    \caption{ Sampling from $\mathcal{U}(\sqrt{M} \mathbb{S}^{M-1})$}
    \begin{algorithmic}[1]
        \REQUIRE Number of ensemble members $M$
        \STATE Sample vector $v$: $v_i \sim \mathcal{N}(0,1)$ for $i = 1, \ldots, M$
        \STATE Construct vector: $\xi = \sqrt{M} v / \|v\|$
        \STATE \textbf{Return} $\xi$
    \end{algorithmic}
\end{algorithm}

\textbf{Isotropy}.
By the rotational invariance of the sphere distribution, for any fixed orthogonal matrix $Q$:
\[
    \langle \zeta, x \rangle \sim \langle Q\zeta, x \rangle = \langle \zeta, Q^\top x \rangle, \quad \forall x \in \mathbb{R}^d.
\]
Then, for any fixed $x$, we can select $M$ orthogonal matrices $Q_1, \ldots, Q_M$ to rotate $x$ such that $Q_i^\top x = \|x\| e_i$ where $e_i$ is the $i$-th coordinate vector.
With this construction, for any fixed $x$:
\[
    M \mathbb{E}[\langle \zeta, x \rangle^2] = \mathbb{E}[\sum_{i = 1}^M \langle \zeta, x_i \rangle^2] = \mathbb{E}[\|x\|^2 \sum_{i = 1}^M \zeta_i^2] = M \|x\|^2
\]
and hence $\mathbb{E}[\langle \zeta, x \rangle^2] = \|x\|^2$, which is the definition of isotropy.

\textbf{Concentration}. By definition, $\|\zeta\| = \sqrt{M}$.
For a random variable $\zeta \sim \mathcal{U}(\mathbb{S}^{M-1})$ and any fixed $v \in \mathbb{S}^{M-1}$, the inner product follows the transformed Beta distribution:
\[
    \langle \zeta, v \rangle \sim 2 \operatorname{Beta}\left(\frac{M-1}{2}, \frac{M-1}{2}\right)-1.
\]
As shown in \citet{skorski2023bernstein,li2024probability}, $P_{\zeta} = \mathcal{U}(\sqrt{M}\mathbb{S}^{M-1})$ is $1$-sub-Gaussian.
For a finite action set $\mathcal{A}$, using the concentration of Beta random variables with union bound:
\[
    \mathbb{P}\left(\forall a \in \mathcal{A}, \langle \zeta, \phi(a) \rangle \leq \|\phi(a)\| \sqrt{\log \frac{2 |\mathcal{A}|}{\delta}} \right) \geq 1- \delta.
\]

\textbf{Anti-concentration}.
Let's rewrite the problem in terms of the incomplete Beta function:

Given:
\[ X \sim \text{Beta}\left(\frac{M-1}{2}, \frac{M-1}{2}\right) \]

We want:
\[ \text{P}\left(\langle \zeta, v \rangle \ge 1 \right) = \text{P}\left(2X - 1 > \frac{1}{\sqrt{M}}\right) = \text{P}\left(X > \frac{1}{2} + \frac{1}{2\sqrt{M}}\right). \]

\begin{theorem}
    For all $d \ge 2$, the random variable $X \sim \text{Beta}\left(\frac{d-1}{2}, \frac{d-1}{2}\right)$ has the following anti-concentration behavior:
    \[
        \text{P}\left(X > \frac{1}{2} + \frac{1}{2\sqrt{d}}\right) \ge \frac{1}{2} - \frac{e^{1/12}}{\sqrt{2\pi}}.
    \]
\end{theorem}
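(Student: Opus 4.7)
The plan is to exploit the symmetry of the Beta$(\alpha,\alpha)$ distribution (with $\alpha=(d-1)/2$) around $1/2$ and bound the probability of the short interval $(1/2,\,1/2+1/(2\sqrt d)]$ by a rectangle of height equal to the mode density. Concretely, substitute $Y:=2X-1$, which is symmetric around $0$ on $(-1,1)$ with density
\begin{equation*}
    f_Y(y)=\frac{1}{B(\alpha,\alpha)\,2^{2\alpha-1}}\bigl(1-y^{2}\bigr)^{\alpha-1}.
\end{equation*}
Then $P(X>1/2+1/(2\sqrt d))=P(Y>1/\sqrt d)=\tfrac12-P(0<Y\le 1/\sqrt d)$, and since $f_Y$ is maximized at $y=0$, the trivial estimate
\begin{equation*}
    P\!\left(0<Y\le \tfrac{1}{\sqrt d}\right)\le \frac{f_Y(0)}{\sqrt d}
\end{equation*}
reduces the whole theorem to proving $f_Y(0)/\sqrt d\le e^{1/12}/\sqrt{2\pi}$.

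Next, I would convert $f_Y(0)$ into a Gamma-ratio via Legendre's duplication formula $\Gamma(2\alpha)=2^{2\alpha-1}\pi^{-1/2}\Gamma(\alpha)\Gamma(\alpha+\tfrac12)$. This collapses $B(\alpha,\alpha)\,2^{2\alpha-1}=\sqrt{\pi}\,\Gamma(\alpha)/\Gamma(\alpha+1/2)$, so with $\alpha=(d-1)/2$ and $\alpha+1/2=d/2$,
\begin{equation*}
    \frac{f_Y(0)}{\sqrt d}=\frac{\Gamma(d/2)}{\sqrt{\pi d}\,\Gamma((d-1)/2)}.
\end{equation*}
The remaining task is the explicit bound
\begin{equation*}
    \frac{\Gamma(d/2)}{\Gamma((d-1)/2)}\le \sqrt{d/2}\,\cdot e^{1/12}\qquad(d\ge 2),
\end{equation*}
which plugged in gives exactly $e^{1/12}/\sqrt{2\pi}$ as required.

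For this last bound I would use Robbins's refined Stirling inequality $\sqrt{2\pi}\,x^{x-1/2}e^{-x}\le\Gamma(x)\le\sqrt{2\pi}\,x^{x-1/2}e^{-x}e^{1/(12x)}$ for the numerator, and the lower form for the denominator. After cancelling the $\sqrt{2\pi}$ and the $e^{-1/2}$ factors, one is left with the algebraic factor $\sqrt{d/2}\cdot(1+1/(d-1))^{(d-2)/2}\cdot e^{1/(6d)}$. The inequality $\ln(1+1/(d-1))\le 1/(d-1)$ gives $(1+1/(d-1))^{(d-2)/2}\le e^{(d-2)/(2(d-1))}\le e^{1/2}$, which combines with the $e^{-1/2}$ to leave only the error factor $e^{1/(6d)}\le e^{1/12}$ for $d\ge 2$. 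I would finish by separately checking the base case $d=2$ in closed form (where $X$ is uniform on $[0,1]$) to make sure no step silently required $d$ large.

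The main obstacle is packaging the Stirling step cleanly: one needs the refined Robbins bracket on $\Gamma$ (not just asymptotic Stirling) so that the residual error term is exactly $e^{1/(6d)}$, and one needs the crude inequality $(d-2)/(2(d-1))\le 1/2$ rather than the sharper $1/2-1/(2(d-1))$ — otherwise one picks up a spurious $1/d$ correction that would have to be absorbed and would not yield the clean constant $e^{1/12}$. Everything else is either symmetry, the density-at-the-mode trick, or the duplication formula.
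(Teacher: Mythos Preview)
Your proposal is correct and follows essentially the same route as the paper: symmetry around $1/2$, then bounding the probability of the short interval by (mode density)$\times$(length), then Stirling-type bounds on the resulting Gamma expression; your use of Legendre's duplication formula to reduce to the ratio $\Gamma(d/2)/\Gamma((d-1)/2)$ is a slightly cleaner packaging of the same estimate the paper obtains by applying Stirling directly to $B\!\left(\tfrac{d-1}{2},\tfrac{d-1}{2}\right)=\Gamma\!\left(\tfrac{d-1}{2}\right)^2/\Gamma(d-1)$.

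One correction: for $d=2$ the parameters are $\alpha=1/2$, so $X\sim\mathrm{Beta}(1/2,1/2)$ is the \emph{arcsine} distribution, not uniform, and $(1-y^2)^{\alpha-1}$ is \emph{minimized} at $y=0$; hence the mode bound genuinely fails there (the paper's own proof has the same unacknowledged gap). Your instinct to check $d=2$ separately is right, and it goes through: $P\!\left(X>\tfrac12+\tfrac{1}{2\sqrt2}\right)=1-\tfrac{2}{\pi}\arcsin\!\sqrt{\tfrac12+\tfrac{1}{2\sqrt2}}=1-\tfrac{2}{\pi}\cdot\tfrac{3\pi}{8}=\tfrac14$, which exceeds $\tfrac12-e^{1/12}/\sqrt{2\pi}\approx 0.0668$.
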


\begin{remark}
    We did not find existing literature that provides such anti-concentration results for Beta distributions.
\end{remark}

\begin{proof}
    Using the incomplete Beta function $I_x(a, b)$, this probability can be expressed as:
    \[ \text{P}\left(X > \frac{1}{2} + \frac{1}{2\sqrt{d}}\right) = 1 - I_{\left(\frac{1}{2} + \frac{1}{2\sqrt{d}}\right)}\left(\frac{d-1}{2}, \frac{d-1}{2}\right) \]

    For the regularized incomplete Beta function $I_x(a, b)$:
    \[ I_x(a, b) = \frac{B(x; a, b)}{B(a, b)} \]
    where $B(x; a, b)$ is the incomplete Beta function and $B(a, b) := B(1; a, b)$ is the complete Beta function.

    For $a = b = \frac{d-1}{2}$, the complete Beta function is:
    \[ B\left(\frac{d-1}{2}, \frac{d-1}{2}\right) = \frac{\Gamma\left(\frac{d-1}{2}\right) \Gamma\left(\frac{d-1}{2}\right)}{\Gamma(d-1)} \]

    For the incomplete Beta function with $x = \frac{1}{2} + \frac{1}{2\sqrt{d}}$ and $a = b = \frac{d-1}{2}$:
    \begin{align*}
        B\left(\frac{1}{2} + \frac{1}{2\sqrt{d}}; \frac{d-1}{2}, \frac{d-1}{2}\right)
         & = \int_0^{\frac{1}{2} + \frac{1}{2\sqrt{d}}} t^{\frac{d-3}{2}} (1-t)^{\frac{d-3}{2}} \, dt
    \end{align*}

    Since $f(t) = t^{\frac{d-3}{2}} (1-t)^{\frac{d-3}{2}}$ is symmetric at $t=1/2$ in the interval $[0, 1]$:
    \begin{align*}
        B\left(\frac{1}{2} + \frac{1}{2\sqrt{d}}; \frac{d-1}{2}, \frac{d-1}{2}\right)
         & = \frac{1}{2} B\left( \frac{d-1}{2}, \frac{d-1}{2}\right) + \int_{\frac{1}{2}}^{\frac{1}{2} + \frac{1}{2\sqrt{d}}} t^{\frac{d-3}{2}} (1-t)^{\frac{d-3}{2}} \, dt.
    \end{align*}

    As $f(t)$ achieves its maximum at $t=1/2$, we can upper bound the incomplete Beta function:
    \begin{align}
        \label{eq:incomplete-B-upper}
        \int_{\frac{1}{2}}^{\frac{1}{2} + \frac{1}{2\sqrt{d}}} t^{\frac{d-3}{2}} (1-t)^{\frac{d-3}{2}} \, dt \le \left( \frac{1}{4} \right)^{\frac{d-3}{2}} \left( \frac{1}{2\sqrt{d}} \right) = \left( \frac{1}{2} \right)^{{d-3}} \left( \frac{1}{2\sqrt{d}} \right).
    \end{align}

    For the complete Beta function, we use Stirling's approximation for the Gamma function, which provides a strict lower bound \citep{nemes2015error}:
    \[
        \Gamma(z) \geq \sqrt{2 \pi} z^{z-\frac{1}{2}} e^{-z},
    \]
    and an upper bound \citep{gronwall1918gamma}:
    \[
        \Gamma(z) \leq \sqrt{2 \pi} z^{z-\frac{1}{2}} e^{-z+\frac{1}{12 z}}
    \]
    for all $z > 0$.

    This gives us the lower bound:
    \[
        B\left(\frac{d-1}{2}, \frac{d-1}{2}\right) \ge \frac{\sqrt{2 \pi}((d-1)/2)^{d-2} e^{-(d-1)}}{(d-1)^{d-\frac{3}{2}} e^{-d + 1 +\frac{1}{12 (d-1)}}} = \sqrt{2\pi} \left( \frac{1}{2} \right)^{d-2} (d-1)^{-1/2} e^{-\frac{1}{12(d-1)}}.
    \]

    Since $e^{-\frac{1}{12(d-1)}} \ge e^{-1/12}$ whenever $d \ge 2$:
    \begin{align}
        \label{eq:complete-B-lower}
        B\left(\frac{d-1}{2}, \frac{d-1}{2}\right) \ge \sqrt{2\pi}e^{-1/12} \left( \frac{1}{2} \right)^{d-2} \frac{1}{\sqrt{d}}.
    \end{align}

    Combining \eqref{eq:incomplete-B-upper} and \eqref{eq:complete-B-lower}:
    \begin{align*}
        I_{\left(\frac{1}{2} + \frac{1}{2\sqrt{d}}\right)}\left(\frac{d-1}{2}, \frac{d-1}{2}\right)
         & \le \frac{1}{2} + \frac{2 e^{1/12} ( \frac{1}{2\sqrt{d}})}{\sqrt{2\pi} \frac{1}{\sqrt{d}}} \\
         & = \frac{1}{2} + \frac{e^{1/12}}{\sqrt{2 \pi}},
    \end{align*}

    Therefore:
    \begin{align*}
        \text{P}\left(X > \frac{1}{2} + \frac{1}{2\sqrt{d}} \right) \ge \frac{1}{2} - \frac{e^{1/12}}{\sqrt{2\pi}} \approx 0.0668.
    \end{align*}
\end{proof}

\subsection{Cube Distribution $P_{\zeta} = \mathcal{U}(\{1, -1\}^M)$}

\begin{algorithm}[htbp]
    \caption{ Sampling from $\mathcal{U}(\{-1, 1\}^M)$}
    \begin{algorithmic}[1]
        \REQUIRE Number of ensemble members $M$
        \STATE Sample vector $\xi$: $\xi_i \sim \mathcal{U}(\{-1, 1\})$ for $i = 1, \ldots, M$
        \STATE \textbf{Return} $\xi$
    \end{algorithmic}
\end{algorithm}

\textbf{Isotropy}. Easy to verify by definition.

\textbf{Concentration}. By definition, $\|\xi\| = \sqrt{M}$.
We sample the random vector $\zeta$ by independently sampling each entry from $\zeta_{i} \sim \mathcal{U}(\{1, -1\})$ for $i \in [M]$.
Then, for any $v \in \mathbb{S}^{M-1}$, by independence:
\begin{align*}
    \mathbb{E}[\exp(\lambda \langle v, \zeta \rangle)] = \prod_{i=1}^M \mathbb{E}[\exp(\lambda v_i \zeta_i)]
    \le
    \prod_{i=1}^M \exp(\lambda^2 v^2_i) = \exp(\lambda^2 \sum_{i} v^2_i).
\end{align*}
The inequality is due to the moment generating function of the Rademacher distribution \citep{wainwright2019high}.
This confirms that $P_{\zeta} = \mathcal{U}(\{1, -1\}^M)$ is $1$-sub-Gaussian.
For a finite action set $\mathcal{A}$, from the sub-Gaussian property:
\[
    \mathbb{P}\left(\forall a \in \mathcal{A}, \langle \zeta, \phi(a) \rangle \leq \|\phi(a)\| \sqrt{\log \frac{2 |\mathcal{A}|}{\delta}} \right) \geq 1- \delta.
\]

\textbf{Anti-concentration}.
Using the anti-concentration result from \citet{hollom2023tight}, for any fixed unit vector $v$ in $\mathbb{R}^M$:
\[
    \mathbb{P}(\langle \zeta, v \rangle \geq 1) \ge 7/32 \approx 0.21875.
\]

\subsection{Gaussian Distribution $P_{\zeta} = \mathcal{N}(0, I_M)$}

\begin{algorithm}[htbp]
    \caption{ Sampling from $\mathcal{N}(0, I_M)$}
    \begin{algorithmic}[1]
        \REQUIRE Number of ensemble members $M$
        \STATE Sample vector $\xi$: $\xi_i \sim \mathcal{N}(0,1)$ for $i = 1, \ldots, M$
        \STATE \textbf{Return} $\xi$
    \end{algorithmic}
\end{algorithm}

\textbf{Isotropy}. Easy to verify by definition.

\textbf{Concentration}.
The concentration property comes from the Chernoff bound for standard Gaussian random variables with the union bound. For any $\alpha>0$:
\[
    \mathbb{P}(\|\zeta\| \leq \alpha \sqrt{M}) \geq \mathbb{P}(\forall 1 \leq i \leq M, |\zeta_i| \leq \alpha) \geq 1 - M \mathbb{P}(|\zeta_i| \geq \alpha).
\]

Standard concentration inequality for Gaussian random variables gives, $\forall \alpha>0$:
\[
    \mathbb{P}(|\zeta_i| \geq \alpha) \leq 2 e^{-\alpha^2 / 2}.
\]

With $\alpha=\sqrt{2 \log \frac{2 M}{\delta}}$:
\[
    \|\zeta\| \leq \sqrt{2 M \log \frac{2 M}{\delta}},\quad \text{with probability at least}~ 1 - \delta.
\]

For a finite action set $\mathcal{A}$:
\[
    \mathbb{P}\left(\forall a \in \mathcal{A}, \langle \zeta, \phi(a) \rangle \leq \|\phi(a)\| \sqrt{\log \frac{2 |\mathcal{A}|}{\delta}} \right) \geq 1- \delta.
\]

\textbf{Anti-concentration}. For any fixed unit vector $v$ in $\mathbb{R}^M$, $\langle \zeta, v \rangle \sim \mathcal{N}(0, 1)$:
\begin{align*}
    \mathbb{P}(\langle \zeta, v \rangle \geq 1) = \frac{1}{2} \operatorname{erfc}\left(\frac{1}{\sqrt{2}}\right) \geq \frac{1}{4 \sqrt{e \pi}} \approx 0.0856
\end{align*}

\subsection{Coordinate Distribution $P_{\zeta} = \mathcal{U}(\sqrt{M}\{\pm e_1, \ldots, \pm e_M \})$}

\begin{algorithm}[htbp]
    \caption{ Sampling from $\mathcal{U}(\sqrt{M} \{\pm e_1, \ldots, \pm e_M\})$}
    \begin{algorithmic}[1]
        \REQUIRE Number of ensemble members $M$
        \STATE Sample index: $i \sim \mathcal{U}(\{1, \ldots, M\})$
        \STATE Sample sign: $s \sim \mathcal{U}(\{-1, 1\})$
        \STATE Construct vector: $\xi = s \sqrt{M} e_i$ where $e_i$ is the $i$-th standard basis vector
        \STATE \textbf{Return} $\xi$
    \end{algorithmic}
\end{algorithm}

\textbf{Isotropy}. By definition:
\begin{align*}
    \mathbb{E}[\zeta \zeta^\top] = \frac{1}{2M} \sum_{i= 1}^M 2M e_{i} e_i^\top = I.
\end{align*}

\textbf{Concentration}. By definition, $\|\zeta\| = \sqrt{M}$.

\textbf{Anti-concentration}.
\begin{align*}
    \mathbb{P}(\langle \zeta, v \rangle \geq 1) & = \frac{1}{2M} \sum_{j \in [M]} \left(\mathbf{1}\left \{v_j \geq \frac{1}{\sqrt{M}} \right \} + \mathbf{1}\left\{-v_j \geq \frac{1}{\sqrt{M}} \right\}\right) \\
                                                & = \frac{1}{2M} \sum_{j \in [M]} \mathbf{1}\left\{|v_j| \geq \frac{1}{\sqrt{M}}\right\} \geq \frac{1}{2M},
\end{align*}
where the last inequality uses the fact that for any $v \in \mathbb{R}^M$ with $\|v\| = 1$, there always exists at least one entry $j \in [M]$ with $|v_j| \geq \frac{1}{\sqrt{M}}$.

\subsection{Sparse Distribution $P_{\zeta}$}

\begin{algorithm}[htbp]
    \caption{ Sampling from $s$-sparse random vector}
    \begin{algorithmic}[1]
        \REQUIRE Number of ensemble members $M$, sparsity $s$
        \STATE Sample sign vector: $\omega_i \sim \mathcal{U}(\{-1, 1\})$ for $i = 1, \ldots, M$
        \STATE Construct a set $\mathcal{S}$ by randomly picking $s$ elements from $\{1, \ldots, M\}$ without replacement
        \STATE Let $\eta_i = 1$ for $i \in \mathcal{S}$ and $\eta_{i'} = 0$ for $i' \in \{1, \ldots, M\} \setminus \mathcal{S}$
        \STATE Construct vector $\xi$: $\xi_i = \omega_i \cdot \eta_i \cdot \sqrt{\frac{M}{s}}$
        \STATE \textbf{Return} $\xi$
    \end{algorithmic}
\end{algorithm}

\begin{definition}[$s$-sparse distribution]
    \label{def:sparse-jl}
    The sparse vector is in the form $\zeta = \sqrt{\frac{M}{s}} \eta \odot \omega$ where $P_{\omega}:= \mathcal{U}(\{1, -1\}^M)$, and $\eta$ is independently and uniformly sampled from all possible $s$-hot vectors, where an $s$-hot vector has exactly $s$ non-zero entries with value $1$. This construction was introduced by \citet{kane2014sparser}.
\end{definition}

\textbf{Isotropy}. By definition:
\begin{align*}
    \mathbb{E}[\zeta_j \zeta_k] = \frac{M}{s} \mathbb{E}[\eta_j \eta_k] \mathbb{E}[\omega_j \omega_k].
\end{align*}

For $j \neq k$, $\mathbb{E}[\omega_j \omega_k] = 0$ since the signs are independent. For $j = k$, $\mathbb{E}[\omega_j^2] = 1$.

For $\eta$, we have $\mathbb{E}[\eta_j^2] = \frac{s}{M}$ since each coordinate has probability $\frac{s}{M}$ of being selected in the $s$-hot vector. For $j \neq k$, $\mathbb{E}[\eta_j \eta_k] = \frac{s(s-1)}{M(M-1)}$ since this is the probability that both coordinates are selected.

Therefore:
\begin{align*}
    \mathbb{E}[\zeta_j \zeta_k] & = \frac{M}{s} \mathbb{E}[\eta_j \eta_k] \mathbb{E}[\omega_j \omega_k] \\
                                & = \frac{M}{s} \mathbb{E}[\eta_j \eta_k] \delta_{jk}                   \\
                                & = \begin{cases}
                                        \frac{M}{s} \cdot \frac{s}{M} = 1 & \text{if } j = k    \\
                                        0                                 & \text{if } j \neq k
                                    \end{cases}             \\
                                & = \delta_{jk}
\end{align*}

Thus, the sparse distribution in \cref{def:sparse-jl} is indeed isotropic.

\textbf{Concentration}. By construction, $\|\zeta\| = \sqrt{M}$.

\textbf{Anti-concentration}. The anti-concentration behavior of sparse distributions depends on the sparsity parameter $s$ and requires further investigation for precise bounds.

\subsection{Summary of Distribution Properties}

Table~\ref{tab:distribution-properties} summarizes the key properties of each reference distribution $P_\zeta$ discussed in this appendix:

\begin{table}[htbp]
    \centering
    \caption{Summary of Reference Distribution Properties}
    \label{tab:distribution-properties}
    \begin{tabular}{lcccc}
        \hline
        \textbf{Distribution} & \textbf{Isotropy} & \textbf{Sub-Gaussian} & \textbf{Anti-concentration} & \textbf{Computational Benefits} \\
        \hline
        Sphere                & Yes               & $c_0 = 1$             & $\geq 0.0668$            & Standard implementation         \\
        Cube                  & Yes               & $c_0 = 1$             & $\geq 0.21875$           & Simple sampling                 \\
        Gaussian              & Yes               & $c_0 = 1$             & $\geq 0.0856$            & Standard implementation         \\
        Coordinate            & Yes               & Not characterized          & $\geq 1/(2M)$               & Sparse matrix operations        \\
        $s$-sparse            & Yes               & Not characterized          & Not characterized     & $O(s)$ non-zero entries         \\
        \hline
    \end{tabular}
\end{table}

These properties directly influence the behavior of the derived perturbation distribution $P_z$ and ultimately affect the exploration-exploitation trade-off in the Ensemble++ algorithm. The choice of distribution can be tailored to the specific requirements of the application, balancing theoretical guarantees with computational efficiency.

\section{In-depth Empirical and Ablation Studies}
\label{sec:add_exp}

In this section, we dive into the intricacies of each evaluation testbed. Through a comprehensive set of empirical results, we'll further illuminate the benefits afforded by Ensemble++.  All experiments are conducted on P40 GPUs to maintain processing standardization.

\subsection{Additional Experiments on Linear Bandit}
\label{ap:exp_linear}

We begin by examining Linear Ensemble++ Sampling in linear bandits. In this section, we focus on studying the impact of perturbation and reference distributions, and we provide detailed results under varying numbers of ensembles $M$.

\paragraph{Environment Settings.}
We use the action feature set $\mathcal{X}$ to denote the set of features ${\phi(a): a \in \mathcal{A}}$ induced by action set $\mathcal{A}$ and feature mapping $\phi(\cdot)$.
We build two linear bandit environments with different action distribution as follow:
\begin{itemize}
  \item \textbf{Finite-action Linear Bandit}:  We construct the finite set $\mathcal{X}$ by uniformly sampling a set of action features from the range $[-1/ \sqrt{5}, 1/ \sqrt{5}]^d$ where $d$ is the ambient dimension of the linear reward function. This environment builds upon prior research~\cite{russo2018learning}.  We vary the action size $|\mathcal{X}|$ over a set of $\{100, 1000, 10000\}$, and the ambient dimension across $\{10, 50\}$.
  \item \textbf{Compact-action Linear Bandit}: Let the action feature set $\mathcal{X} = \mathbb{S}^{d-1}$ be the unit sphere. In this environment, we vary the ambient dimension $d$ over a set of $\{10, 50, 100\}$.
\end{itemize}
In both bandits, the reward for feature $X_t \in \mathbb{R}^d$ is computed as $r_t = X_t^\top \theta + \epsilon$, where $\theta \sim \mathcal{N}(0, 10\mathrm{I})$ is drawn from the multivariate Gaussian prior distribution, and $\epsilon \sim \mathcal{N}(0, 1)$ is an independent additive Gaussian noise term. At each step $t$, only the reward from the chosen feature $X_t$ is discernible. To ensure robust results, each experiment is executed  1000 time steps and repeated 200 times.

\paragraph{Impact of Reference and Perturbation Distributions.}
We investigated all 25 combinations of perturbation and reference distribution under different scales of the linear bandit environments and numerous \#ensembles $M$. As depicted in~\cref{fig:linear_res_detailed1,fig:linear_res_detailed2,fig:linear_res_detailed3}, the outcomes across diverse problem scales corroborate each other. \textbf{The use of a Gaussian reference distribution significantly enhances performance when the $M$ is relatively small, such as when $M$ is 2 or 4.}
As the \#ensembles $M$ grows, all combinations show an analogous performance under varying problem scales. However, it is worth noting that for extremely large $M$, such as 512 or 1024, combinations involving the Coordinate perturbation and Coordinate reference distribution significantly underperform compared to other combinations.
Given that Coordinate distributions are used in the Ensemble+, the results prompt a compelling argument. Linear Ensemble++  Sampling equipped with a continuous reference distribution presents a superior performance, suggesting its potential for surpassing traditional Linear Ensemble Sampling.
These findings strongly support the superior advantage of our sampling method, validating our theoretical analysis.

\paragraph{Analysis of Computational Efficiency.}
We delve deeper into the effects of varying  \#ensembles $M$ within Linear Ensemble++ Sampling. We assess its performance across different combinations of perturbation and reference distributions using an assortment of $M \in \{4, 8, 16, 32, 64, 128, 256, 512, 1024\}$. The outcomes, visualized in \cref{fig:linear_res_detailed4,fig:linear_res_detailed5}, are consistent with the findings illustrated in \cref{fig:linear_res_detailed1,fig:linear_res_detailed2,fig:linear_res_detailed3}. We observe that for large $M$, the Coordinate perturbation and Coordinate reference distributions degrade performance, indicating that the sampling method employed by Ensemble+ lacks efficiency. However, when Linear Ensemble++ Sampling utilizes Gaussian or Sphere reference distributions, it achieves satisfactory performance, comparable to Thompson Sampling with small $M$.

\begin{blockquote-grey}
\begin{remark}[Limitation of \cref{thm:regret-linear-Ensemble++}.]
  Notice that \cref{thm:regret-linear-Ensemble++} suggest that when $M \ge O(d \log T)$, the regret bound of Linear Ensemble sampling would increase with factor $M^{3/2}$, which contradicts with our empirical evidence in \cref{fig:linear_res_detailed1,fig:linear_res_detailed2,fig:linear_res_detailed3,fig:linear_res_detailed4,fig:linear_res_detailed5}.
\end{remark}
\end{blockquote-grey}
\begin{blockquote-orange}
\begin{remark}[Good prediction of \cref{thm:regret-linear-Ensemble++}.]
  Our empirical evidence in \cref{fig:linear_res_detailed1,fig:linear_res_detailed2,fig:linear_res_detailed3,fig:linear_res_detailed4,fig:linear_res_detailed5} confirms the \cref{thm:regret-linear-Ensemble++} in finite decision set setting for continuous-support reference distributions: when $M$ is larger then a threshold ${O}(d \log T)$, the regret is independence on $M$.
\end{remark}
\end{blockquote-orange}

\begin{figure}[htbp]
  \centering
  \setlength{\abovecaptionskip}{0.cm}
  \subfigure[$d = 10 \quad |\mathcal{X}| = 100$]{
    \includegraphics[width=0.9\linewidth]{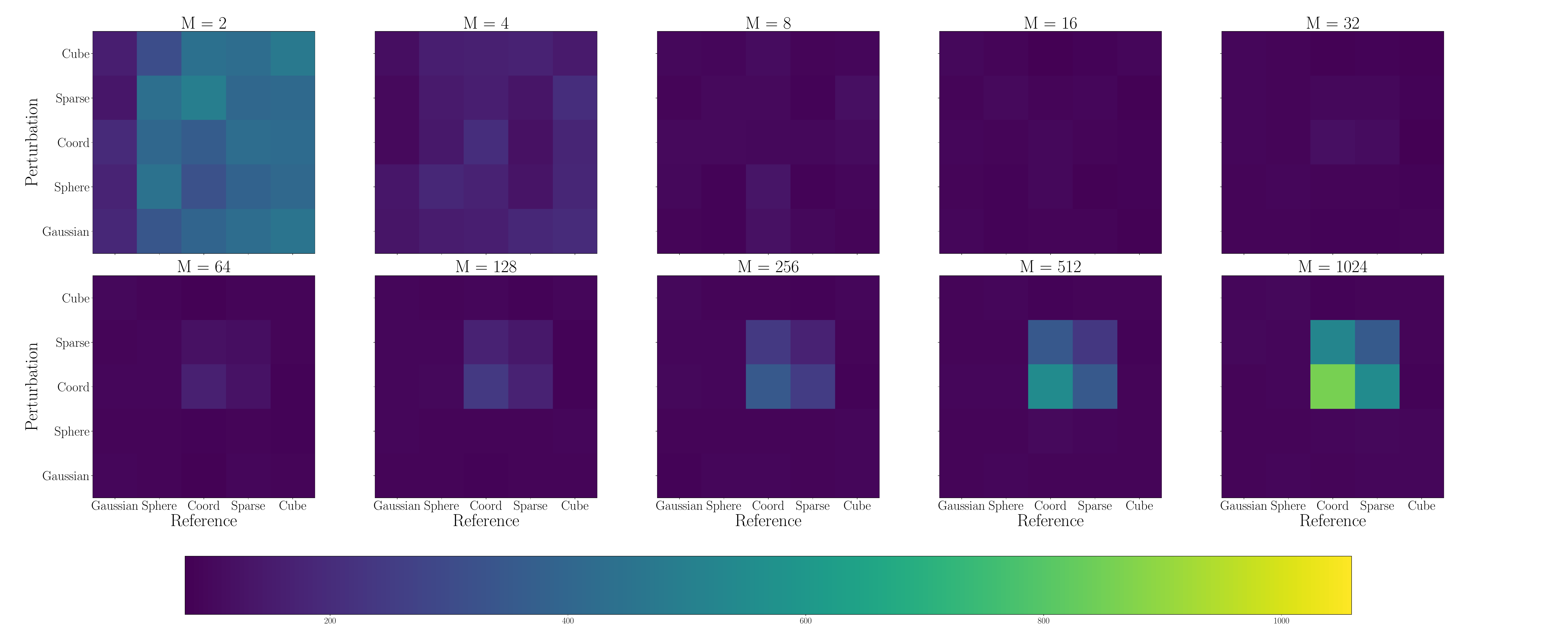}}

  \subfigure[$d = 10 \quad |\mathcal{X}| = 1000$]{
    \includegraphics[width=0.9\linewidth]{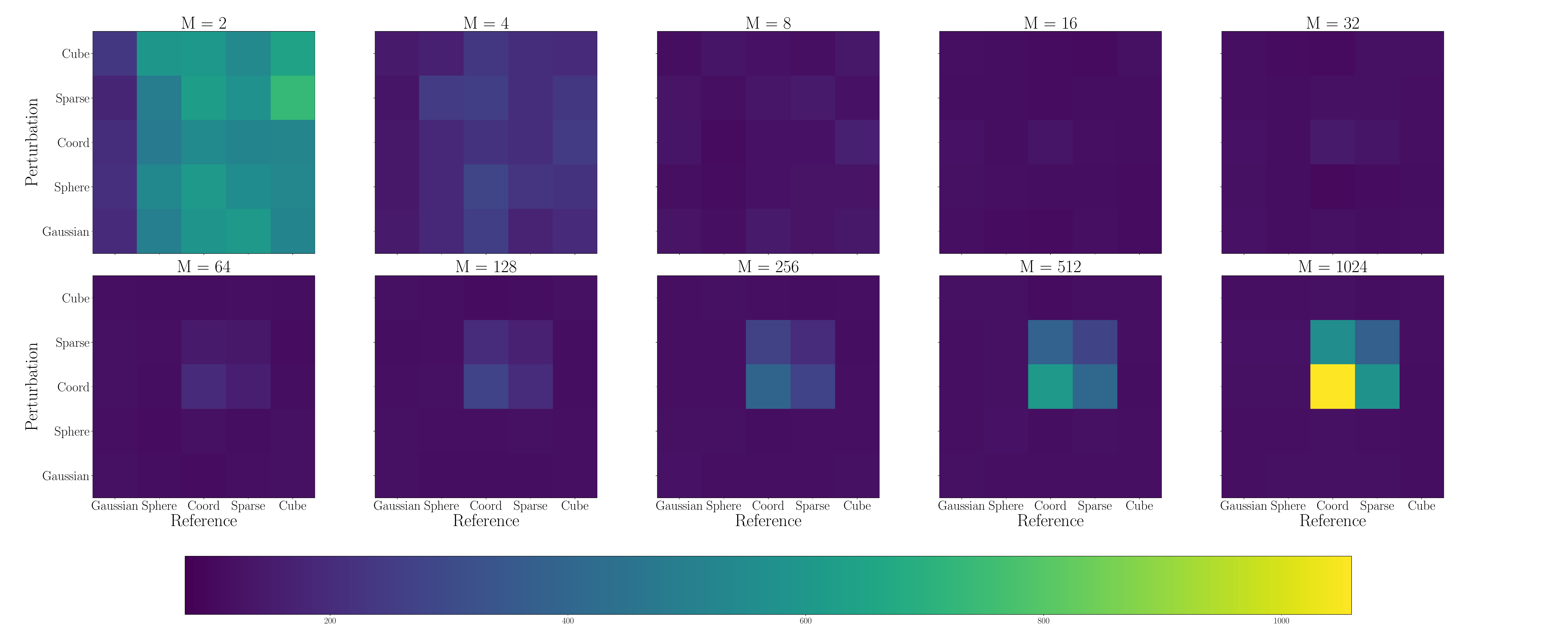}}

  \subfigure[$d = 10 \quad |\mathcal{X}| = 10000$]{
    \includegraphics[width=0.9\linewidth]{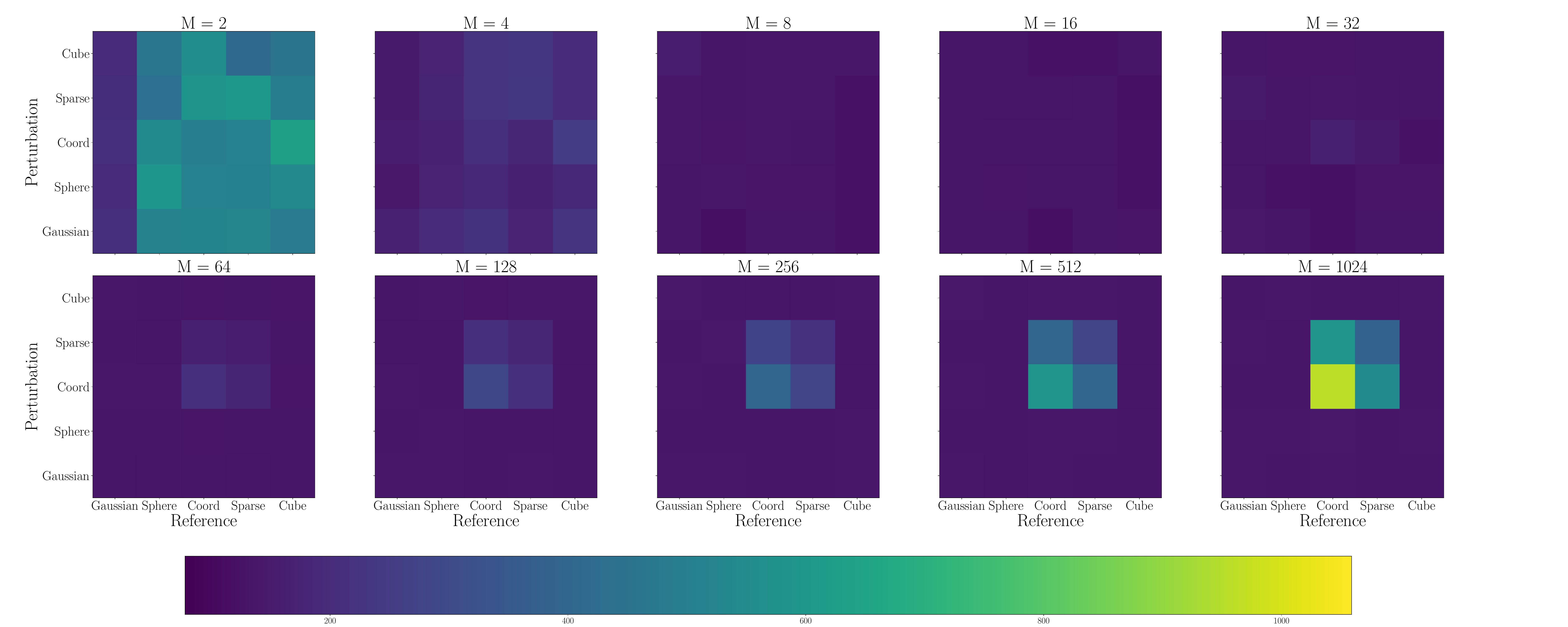}}
  \caption{Results on the combinations of perturbation and reference distribution in Finite-action Linear Bandit under action dimension $d=10$. A deeper color signifies lower accumulated regret and hence superior performance.  Gaussian reference distribution significantly enhances performance.}
  \label{fig:linear_res_detailed1}
\end{figure}

\begin{figure}[htbp]
  \centering
  \setlength{\abovecaptionskip}{0.cm}
  \subfigure[$d = 50 \quad |\mathcal{X}| = 100$]{
    \includegraphics[width=0.9\linewidth]{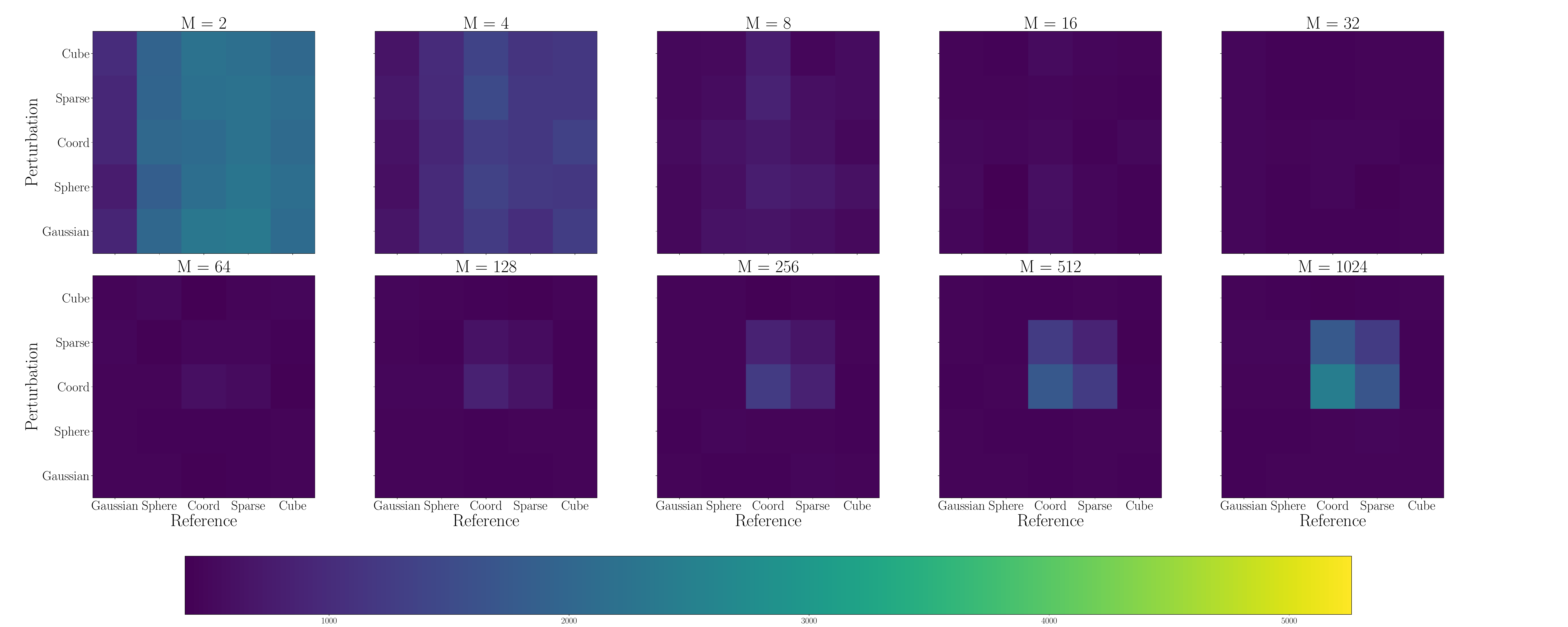}}

  \subfigure[$d = 50 \quad |\mathcal{X}| = 1000$]{
    \includegraphics[width=0.9\linewidth]{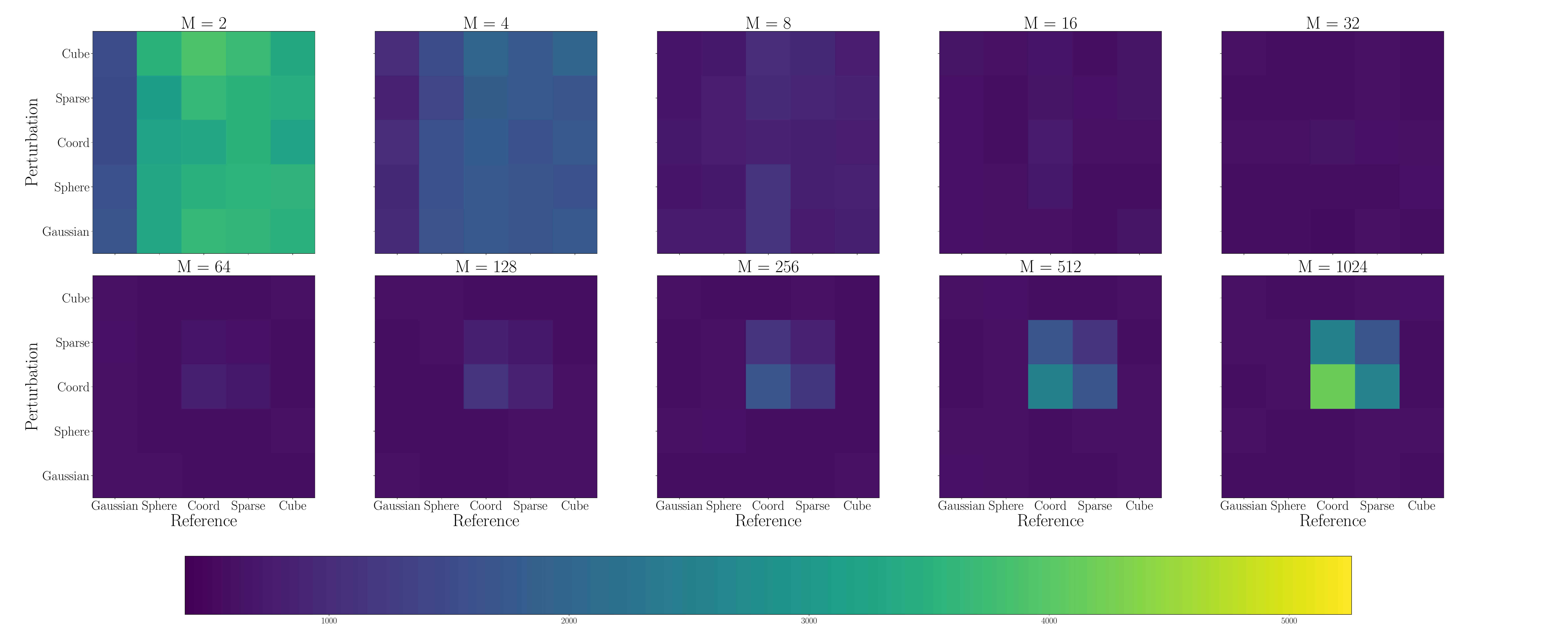}}

  \subfigure[$d = 50 \quad |\mathcal{X}| = 10000$]{
    \includegraphics[width=0.9\linewidth]{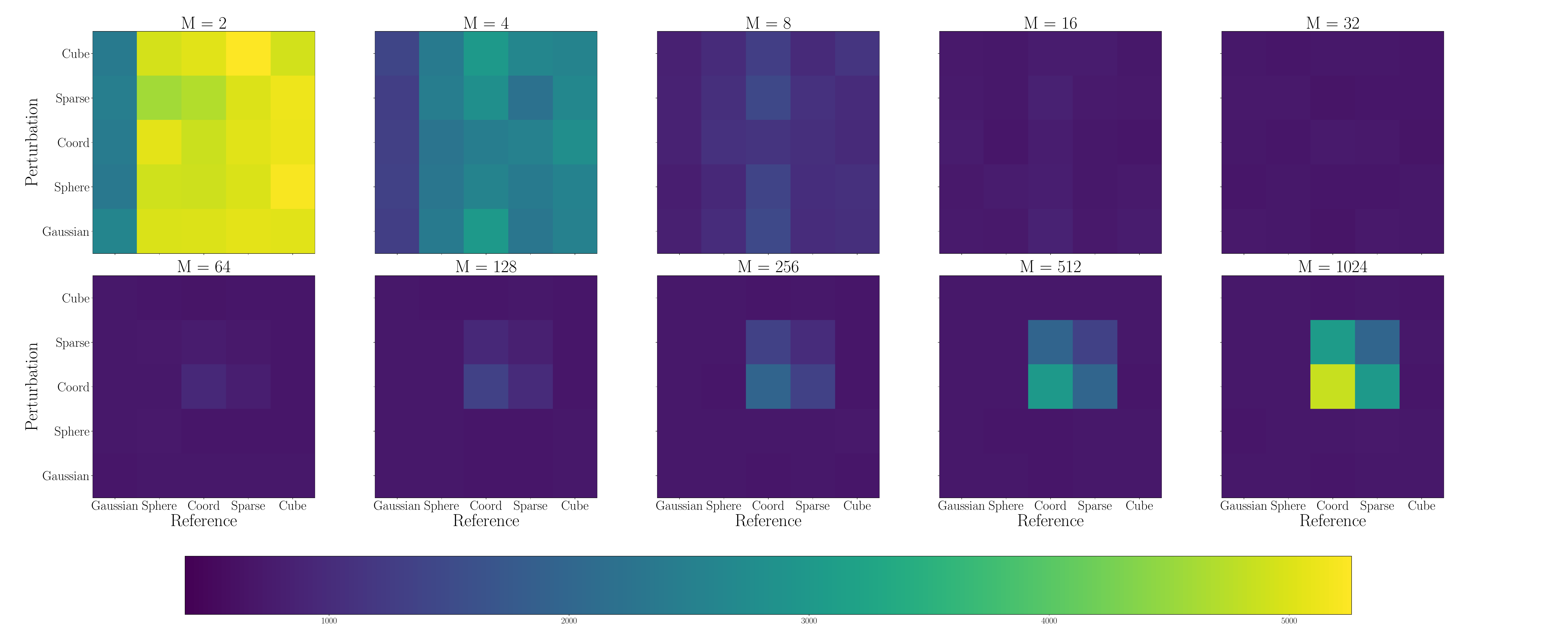}}
  \caption{Results on the combinations of perturbation and reference distribution in Finite-action Linear Bandit under action dimension $d=50$. A deeper color signifies lower accumulated regret and hence superior performance. Gaussian reference distribution significantly enhances performance.}
  \label{fig:linear_res_detailed2}
\end{figure}

\begin{figure}[htbp]
  \centering
  \setlength{\abovecaptionskip}{0.cm}
  \subfigure[$d = 10$]{
    \includegraphics[width=0.73\linewidth]{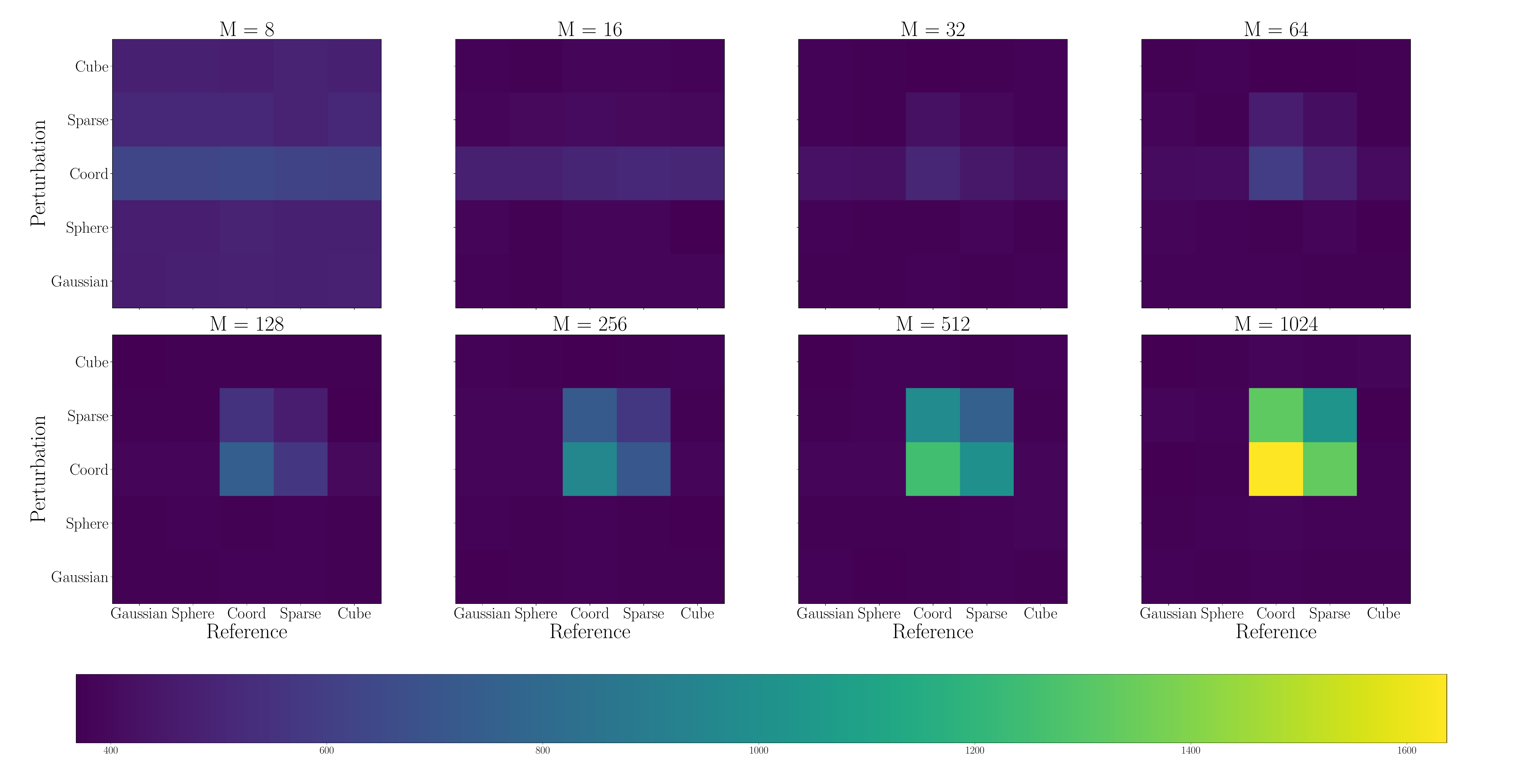}}

  \subfigure[$d = 50$]{
    \includegraphics[width=0.73\linewidth]{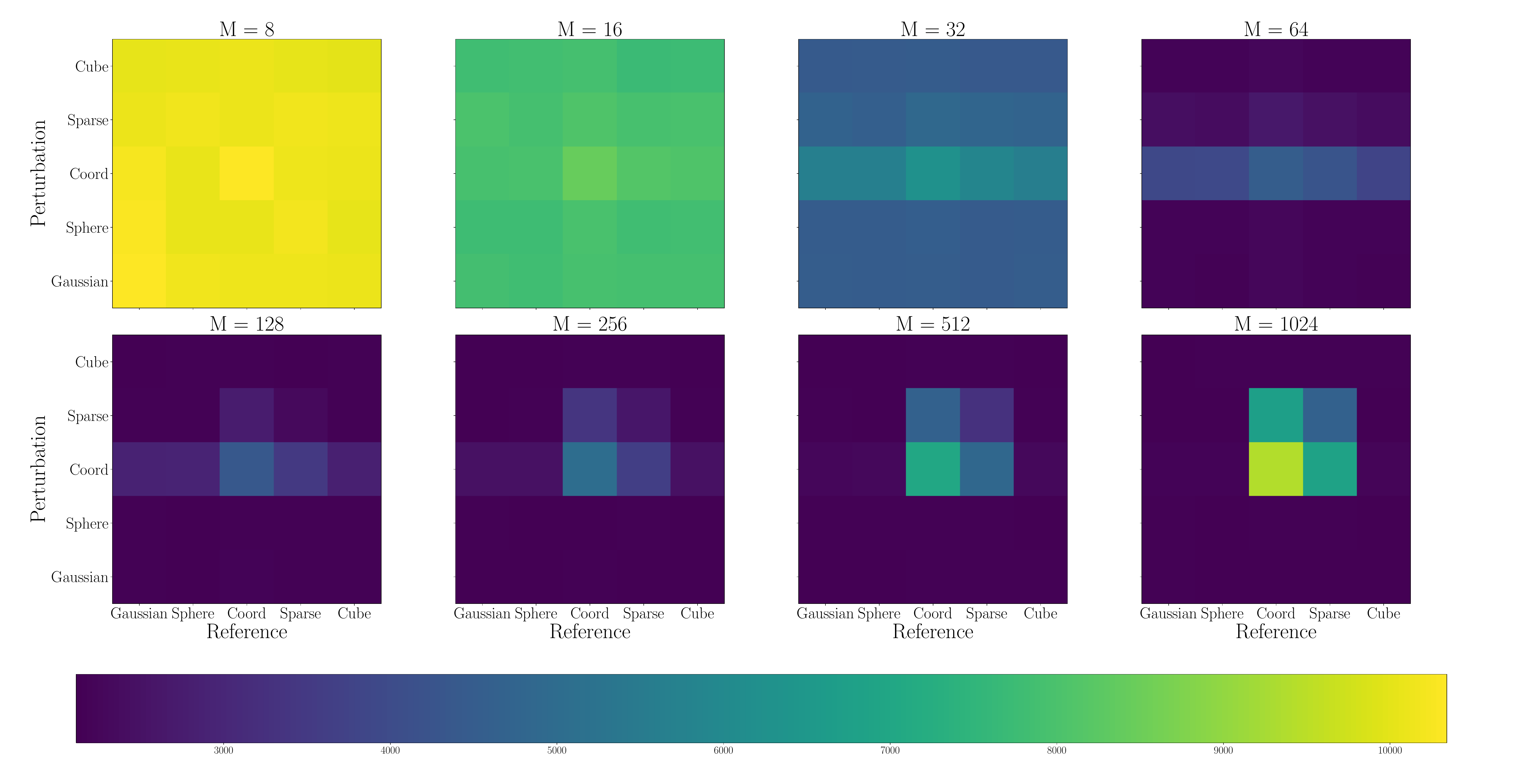}}

  \subfigure[$d = 100$]{
    \includegraphics[width=0.73\linewidth]{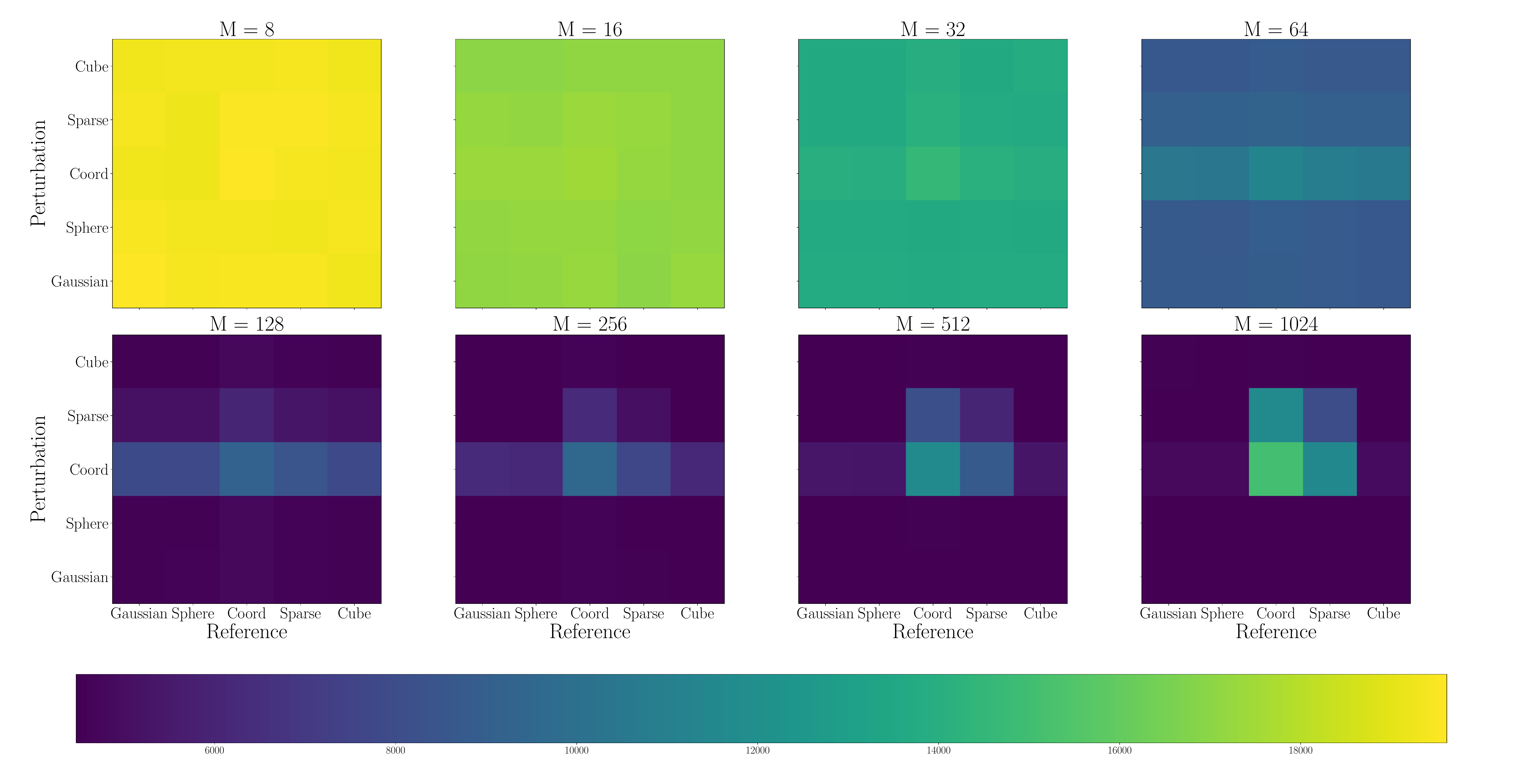}}
  \caption{Results on the combinations of perturbation and reference distribution in Compact-action Linear Bandit. A deeper color signifies lower accumulated regret and hence superior performance. Gaussian reference distribution significantly enhances performance.}
  \label{fig:linear_res_detailed3}
\end{figure}

\begin{figure}[htbp]
  \centering
  \setlength{\abovecaptionskip}{0.cm}
  \subfigure[$d = 10 \quad |\mathcal{X}| = 100$]{
    \includegraphics[width=0.75\linewidth]{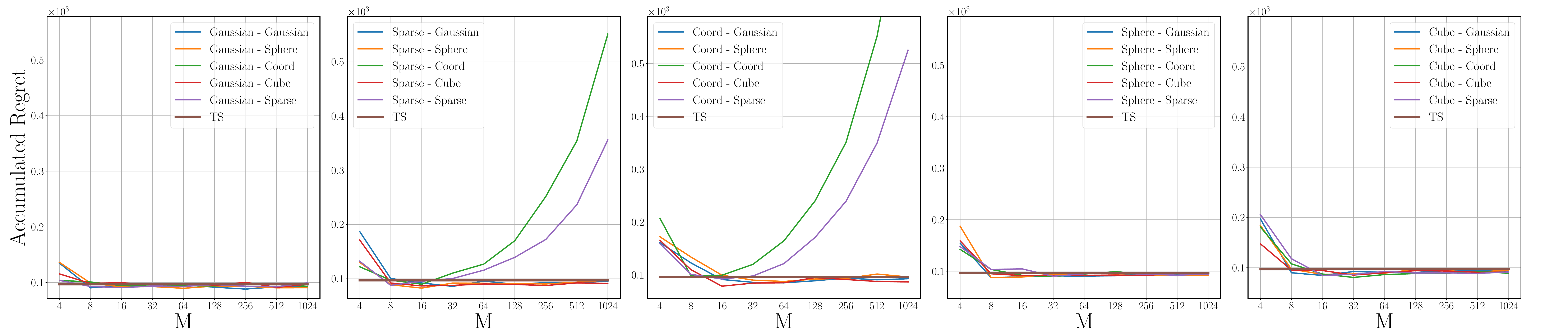}}

  \subfigure[$d = 10 \quad |\mathcal{X}| = 1000$]{
    \includegraphics[width=0.75\linewidth]{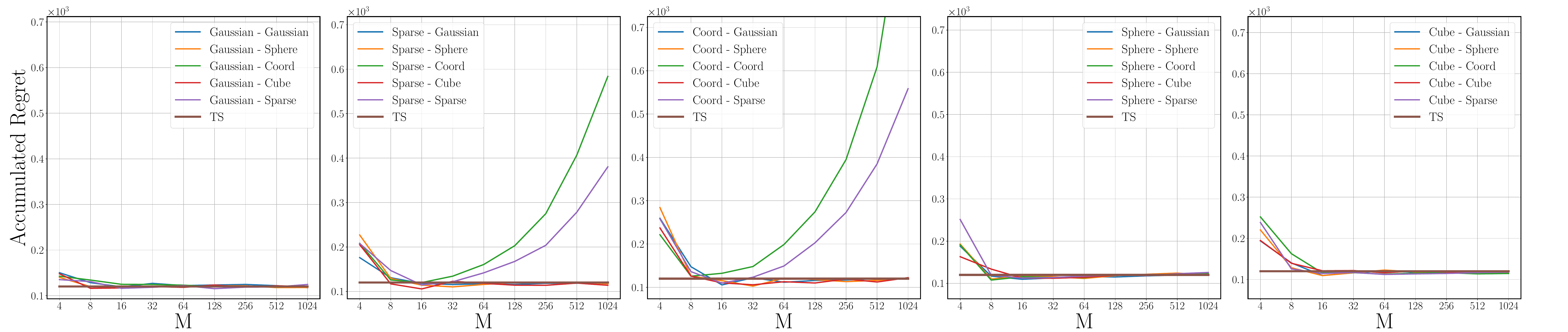}}

  \subfigure[$d = 10 \quad |\mathcal{X}| = 10000$]{
    \includegraphics[width=0.75\linewidth]{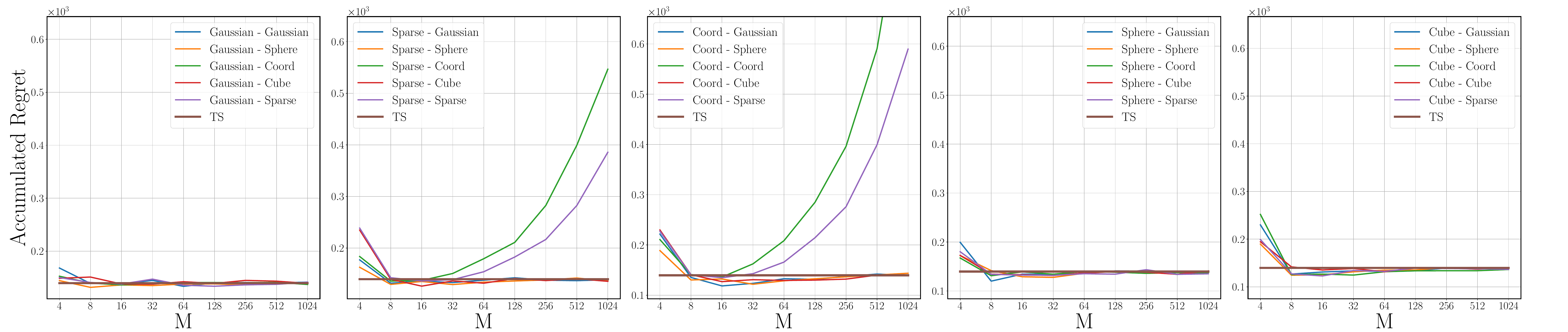}}

  \subfigure[$d = 50 \quad |\mathcal{X}| = 100$]{
    \includegraphics[width=0.75\linewidth]{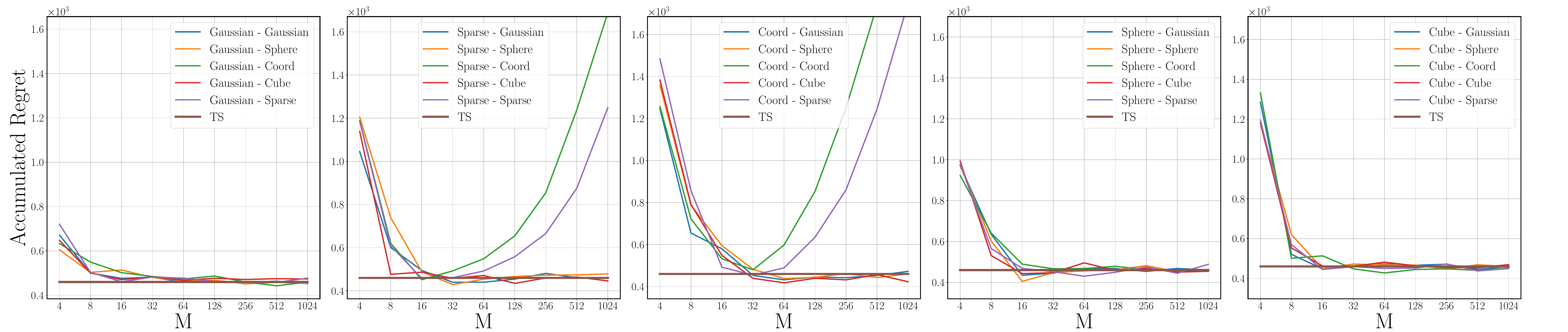}}

  \subfigure[$d = 50 \quad |\mathcal{X}| = 1000$]{
    \includegraphics[width=0.75\linewidth]{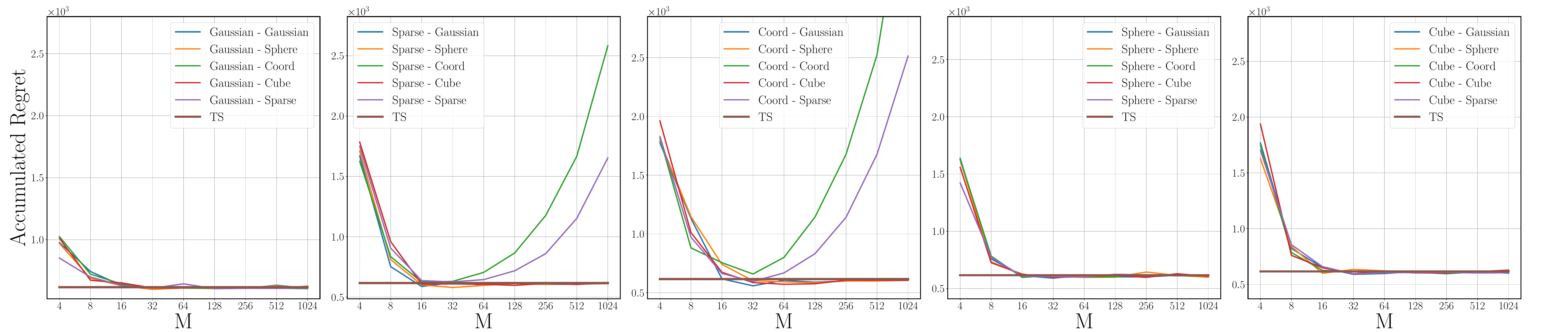}}

  \subfigure[$d = 50 \quad |\mathcal{X}| = 10000$]{
    \includegraphics[width=0.75\linewidth]{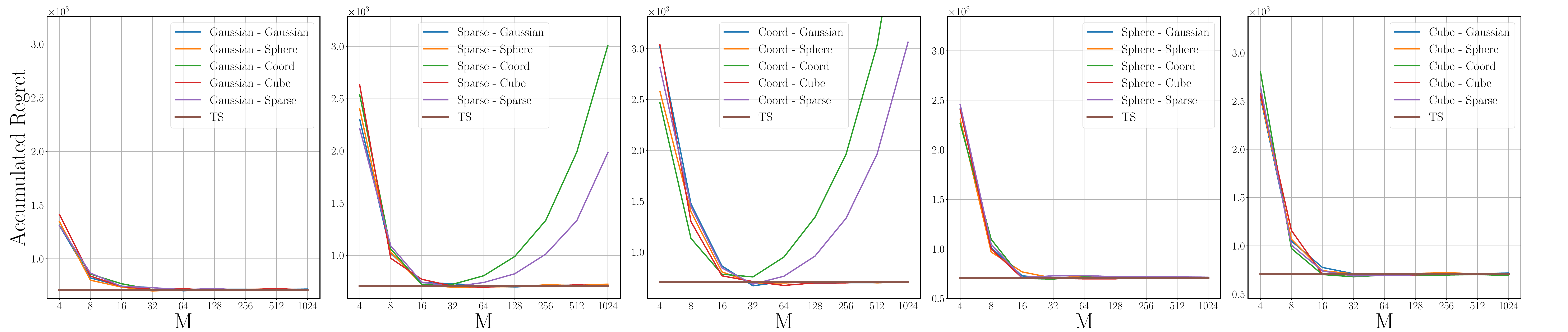}}
  \caption{Results on regret under various \#ensembles $M$ in Finite-action Linear Bandit. The label $A-B$ indicates that Ensemble++ uses A as the reference distribution and B as the perturbation distribution. Ensemble++ with Gaussian or Sphere  reference distribution could achieve comparable performance  with that of Thompson sampling under same $M$ for different action spaces $|\mathcal{X}|$.}
  \label{fig:linear_res_detailed4}
\end{figure}

\begin{figure}[htbp]
  \centering
  \setlength{\abovecaptionskip}{0.cm}
  \subfigure[$d = 10$]{
    \includegraphics[width=0.9\linewidth]{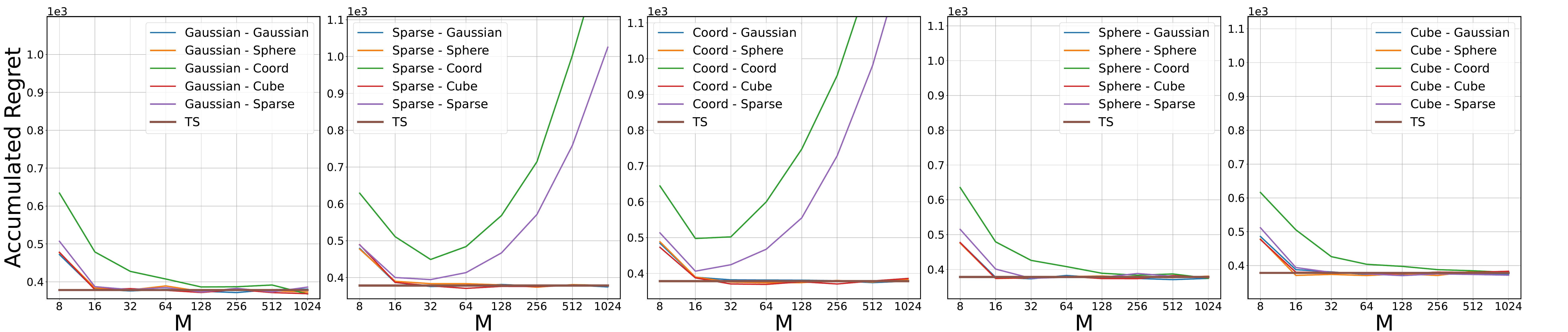}}

  \subfigure[$d = 50$]{
    \includegraphics[width=0.9\linewidth]{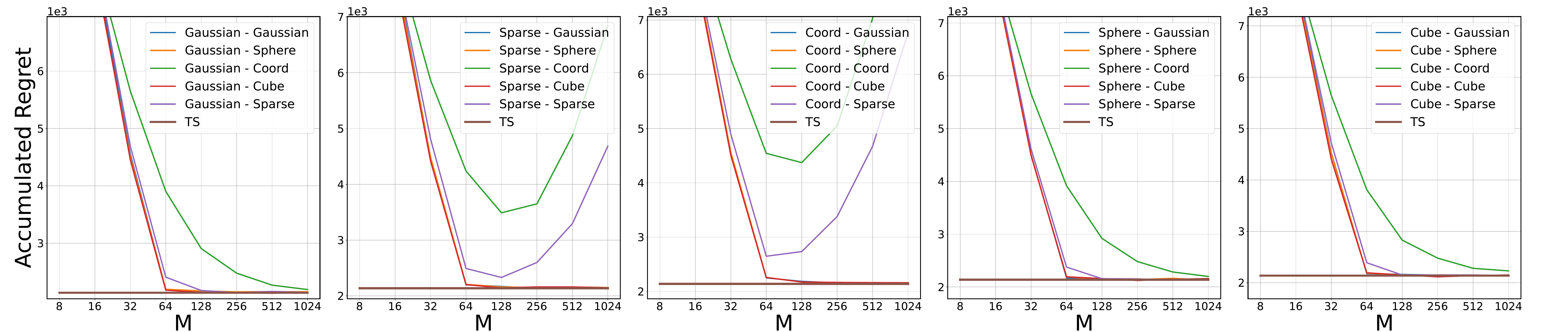}}

  \subfigure[$d = 100$]{
    \includegraphics[width=0.9\linewidth]{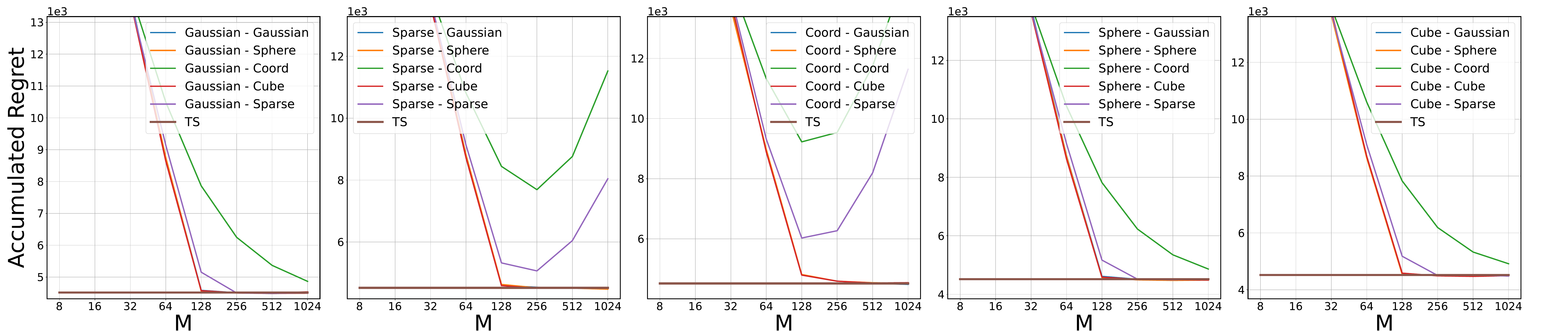}}

  \caption{Results on regret under various \#ensembles $M$ in Compact-action Linear Bandit. The label $A-B$ indicates that Ensemble++ uses A as the reference distribution and B as the perturbation distribution. Ensemble++ with Gaussian or Sphere  reference distribution could achieve comparable performance  with that of Thompson sampling under small $M$.}
  \label{fig:linear_res_detailed5}
\end{figure}

\newpage
\subsection{Additional Experiments on Nonlinear Bandit}
\label{ap:exp_nonlinear}

We conduct more comprehensive comparison of Ensemble++ with several baselines that utilize approximate posterior sampling across a wide range of nonlinear bandits. %

\paragraph{Environments Settings.}

We formulate several nonlinear contextual bandit environments, with rewards generated by nonlinear functions in each.

\begin{itemize}
  \item \textbf{Quadratic Bandit}: Its reward generation mechanism is built on a quadratic function, expressed as $f_1(x) = 10^{-2}(x^\top \Theta \Theta^\top x)$. Here, $x \in \mathbb{R}^d$ stands for the action, while $\Theta \in \mathbb{R}^{d \times d}$ is a matrix filled with random variables originating from $\mathcal{N}(0, 1)$. This task is used as the testbed in \cite{zhou2020neural}.
  \item \textbf{Vector Quadratic Bandit}: Its reward generation mechanism is built on a different quadratic function, expressed as $f_2(x) = 10(x^\top \theta)^2$. Here, $a \in \mathbb{R}^d$ stands for the action, while $\theta \in \mathbb{R}^d$ is a vector filled with random variables generated from a uniform distribution over the unit ball. This task is utilized as the testbed in \cite{zhou2020neural,xu2022langevin}.
  \item \textbf{Neural Bandit}: This bandit employs a nonlinear neural network built on 2-layer MLPs with 50 units and ReLU activations, producing two output logits. We apply the softmax function with a temperature parameter $p=0.1$ to the two output logits to obtain probabilities. Subsequently, we use binomial sampling based on the second probability to generate the reward. The temperature parameter $p$ is used to control the signal-to-noise ratio. This task is used as the testbed in \cite{osband2022neural,osband2023epistemic}.
  \item \textbf{UCI Dataset}: Following prior works~\citep{riquelme2018deep,kveton2020randomized}, we conduct contextual bandits with $N$-class classification using the UCI datasets~\citep{asuncion2007uci} Mushroom and Shuttle. Specifically, given a data feature $x \in \mathbb{R}^d$ in the dataset, we construct context vectors for $N$ arms, such as $x^{(1)} = (x, 0, \cdots, 0), \cdots, x^{(N)} = (0, 0, \cdots, x) \in \mathbb{R}^{Nd}$. Only the arm $x^{(j)}$ where $j$ matches the correct class of this data $x$ has a reward of 1, while all other arms have a reward of 0.
  \item \textbf{Online Hate Speech Detection}: The motivation, problem formulation and environment setups of the automated content moderation task are detailed in \cref{sec:cm}.
\end{itemize}

In all tasks except the \textbf{Neural Bandit}, the original reward $r$ is disrupted by additive Gaussian noise $\epsilon$ drawn from $\mathcal{N}(0, 0.1)$. In the \textbf{Neural Bandit}, we use the temperature parameter $p$ to introduce noise into the reward. For the first three tasks, we set the action dimension $d$ to 100 and generate a total of 1000 candidated actions, randomly sampling 50 actions in each round. Each experiment is repeated with 10 distinct random seeds to ensure robust results.

\paragraph{Comparison Results with Baselines.}
We set the Sphere reference distribution, Coordinate update distribution, and Sphere perturbation distribution for Ensemble++ to compare with baselines. When comparing with Ensemble+~\citep{osband2018randomized} and EpiNet~\citep{osband2023epistemic}, we use the same hyperparameters, such as prior scale, learning rate, and batch size. Additionally, we employ the same network backbone for feature extraction to ensure fairness. As shown in~\cref{fig:nonlinear_baseline_more}(a) and (b), \textbf{Ensemble++ achieves sublinear regret and consistently outperforms these baselines across all tasks, demonstrating superior data efficiency.}

For comparison with LMCTS~\citep{xu2022langevin}, we use its official implementation\footnote{\url{https://github.com/devzhk/LMCTS}\label{fn:lmcts}} to ensure credible results. As illustrated in~\cref{fig:nonlinear_baseline_more}(c), Ensemble++ consistently outperforms LMCTS. Notably, LMCTS uses the entire buffer data to update the network per step, which incurs significant computational costs. In contrast, \textbf{Ensemble++ achieves better performance with bounded computational steps, requiring only a minibatch to update the network.} 
As LMCTS was already demonstrated in its original work to outperform classical baselines (e.g., LinearUCB\citep{chu2011contextual}, NeuralUCB\citep{zhou2020neural}), we omit redundant validation of this result to avoid duplicating prior findings.
These findings highlight the effective exploration and computational efficiency of Ensemble++.

\begin{figure}[htbp]
  \centering
  \subfigure[Comparison results with Ensemble+~\citep{osband2018randomized}]{
    \includegraphics[width=0.98\linewidth]{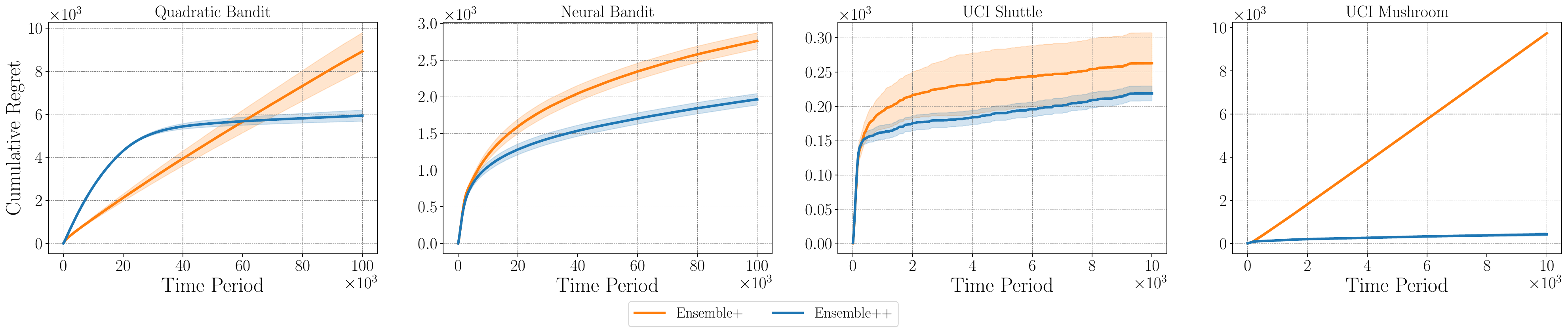}}

  \subfigure[Comparison results with EpiNet~\citep{osband2023epistemic}]{
    \includegraphics[width=0.98\linewidth]{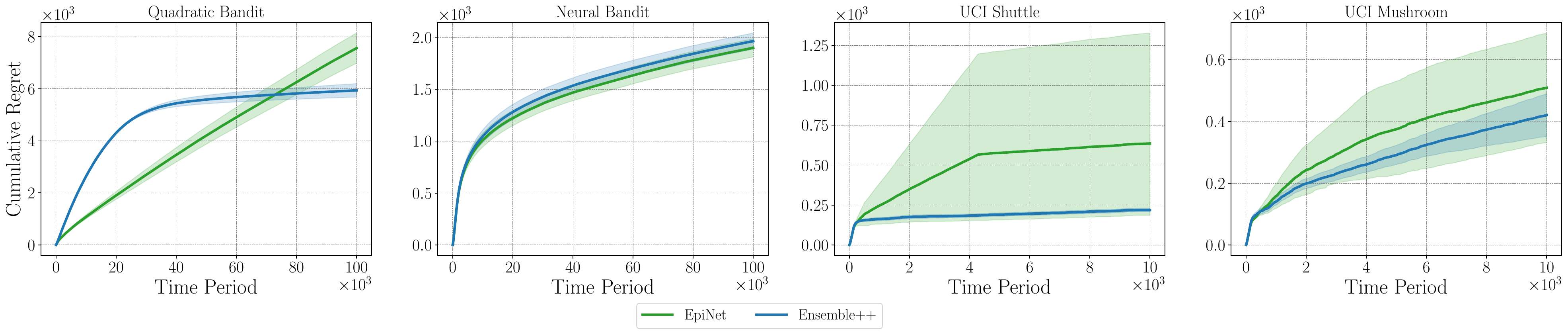}}

  \subfigure[Comparison results with LMCTS~\citep{xu2022langevin}]{
    \includegraphics[width=0.78\linewidth]{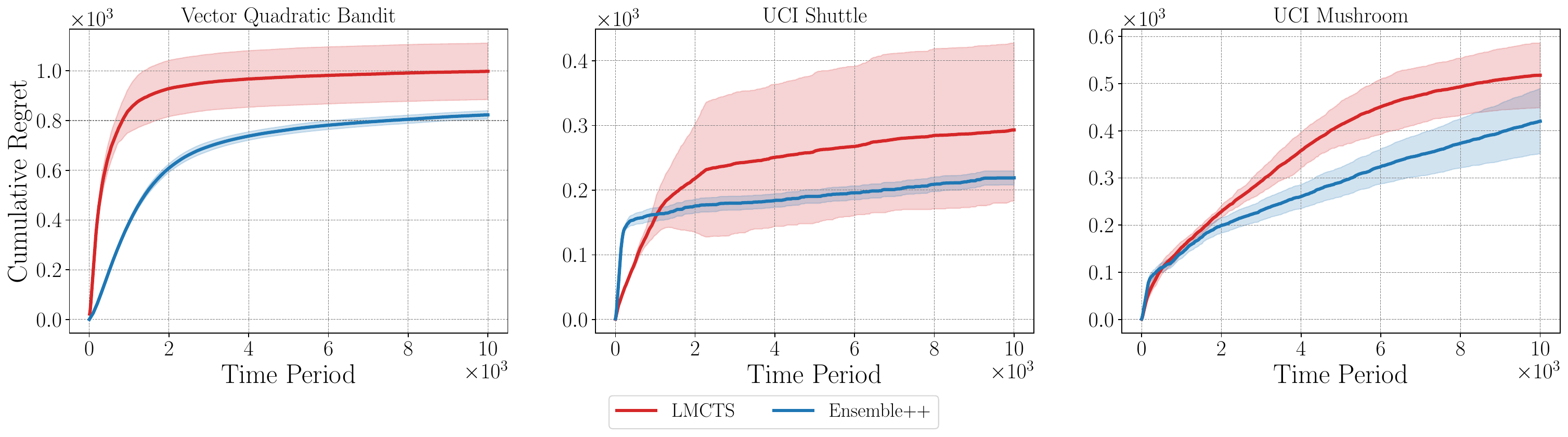}}
  \caption{Results on different bandits with various baselines. Ensemble++ could achieve better performance compared to other methods.}
  \label{fig:nonlinear_baseline_more}
\end{figure}

\paragraph{Additional Comparison on Trade-off between Regret and Computation.}
We have demonstrated that Ensemble++ can achieve sublinear regret with moderate computational cost in the Quadratic Bandit, as shown in~\cref{fig:nonlinear_frontier}. Here, we further investigate the frontier relationship between regret and computation in the Neural Bandit. As shown in~\cref{fig:nonlinear_frontier_neural}, we observe similar findings: Ensemble++ achieves minimal cumulative regret with the lowest computational cost. These results substantiate the scalability and efficiency of Ensemble++ when combined with neural networks.

\paragraph{Ablation Studies on Storage Requirement.}
As discussed in~\cref{sec:ensemblepp:neural}, Ensemble++ does not require storing the entire history. We tested different buffer capacity over 100,000 time steps (\cref{fig:buffer_ablation}). Using smaller buffer leads to only a slight performance decrease, and Ensemble++ maintains comparable performance even with buffer capacity significantly smaller than the total time horizon.

\begin{figure}[htbp]
  \centering
  \begin{minipage}[t]{0.45\textwidth}
    \centering
    \includegraphics[width=6cm]{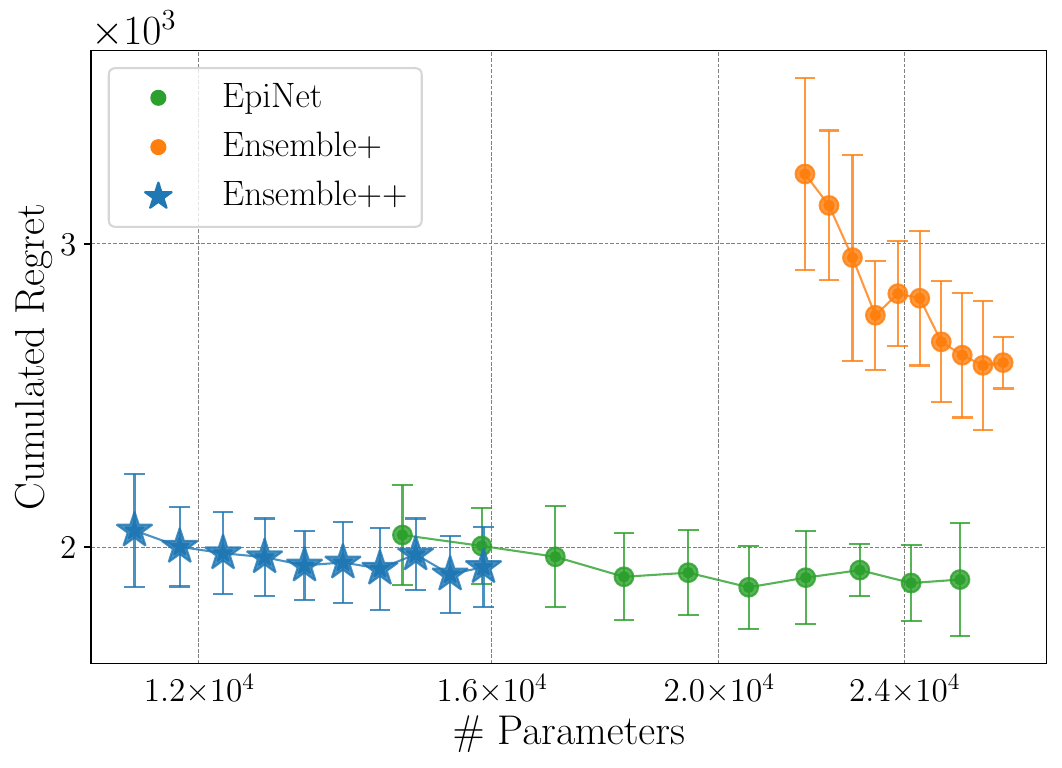}
    \caption{The regret-computation trade-off in Neural Bandit. Ensemble++ beats the SOTA baselines, e.g. Ensemble+ and EpiNet.}
    \label{fig:nonlinear_frontier_neural}
  \end{minipage}
  \hspace{5mm}
  \begin{minipage}[t]{0.48\textwidth}
    \centering
    \includegraphics[width=6.5cm]{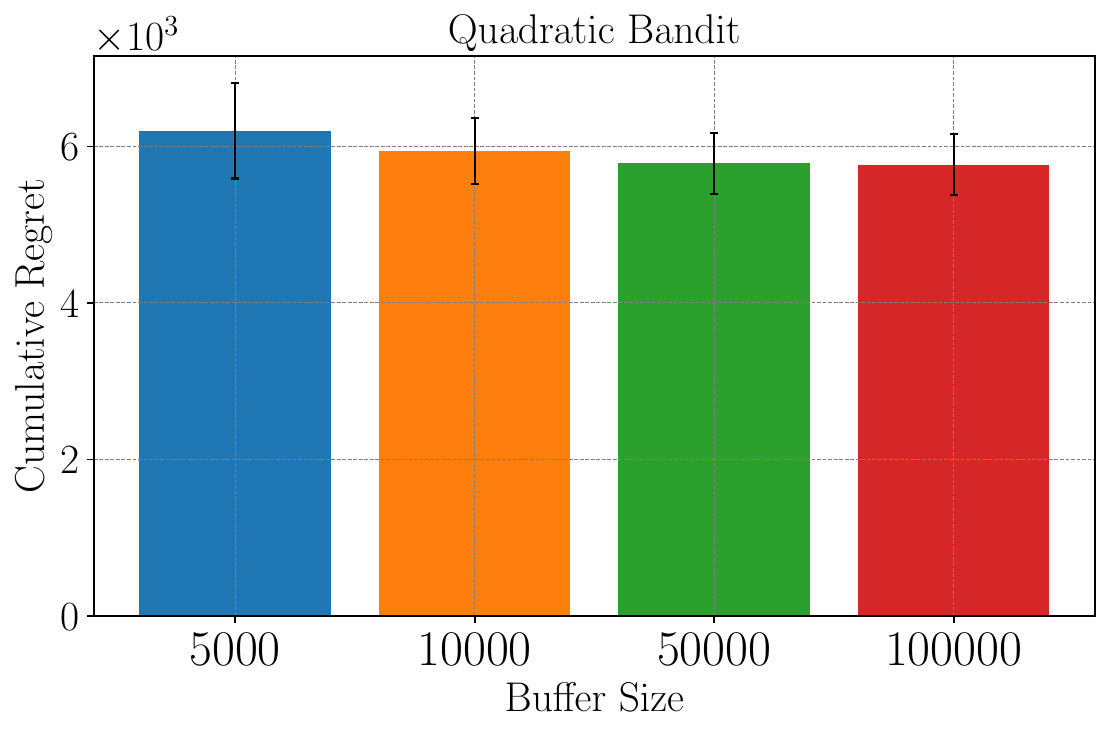}
    \caption{ Performance of Ensemble++ under varying replay buffer capacity $C$.}
    \label{fig:buffer_ablation}
  \end{minipage}
\end{figure}

\paragraph{Ablation Study on Update, Reference and Perturbation Distribution.}
To further evaluate the impact of different design of distributions, we perform an ablation study on the Quadratic Bandit.
When fixing the Sphere reference distribution, we find that discrete update distributions such as Coordinate, Cube, and Sparse achieve similar better performance, as shown in~\cref{fig:quad_ablation}(a). Conversely, when fixing the Coordinate update distribution, continuous reference distributions like Sphere and Gaussian also yield comparable better performance, as depicted in~\cref{fig:quad_ablation}(b).
Regarding the perturbation distribution, our findings indicate that it does not significantly influence performance when the neural network is involved in Ensemble++. This is evidenced in~\cref{fig:quad_ablation}(c), where all different perturbation distributions achieve similar performance.

\begin{figure}[htbp]
  \centering
  \setlength{\abovecaptionskip}{0.cm}
  \subfigure[Results on update distribution.]{
    \includegraphics[width=0.3\linewidth]{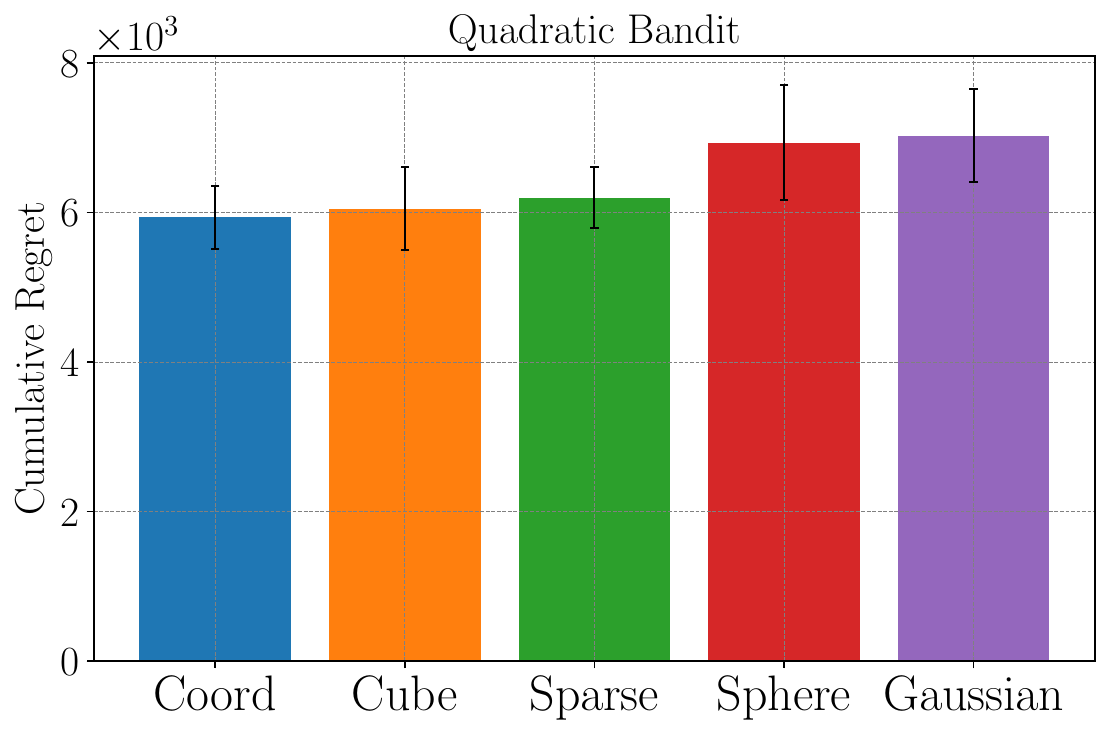}}
  \hspace{1mm}
  \subfigure[Results on reference distribution.]{
    \includegraphics[width=0.3\linewidth]{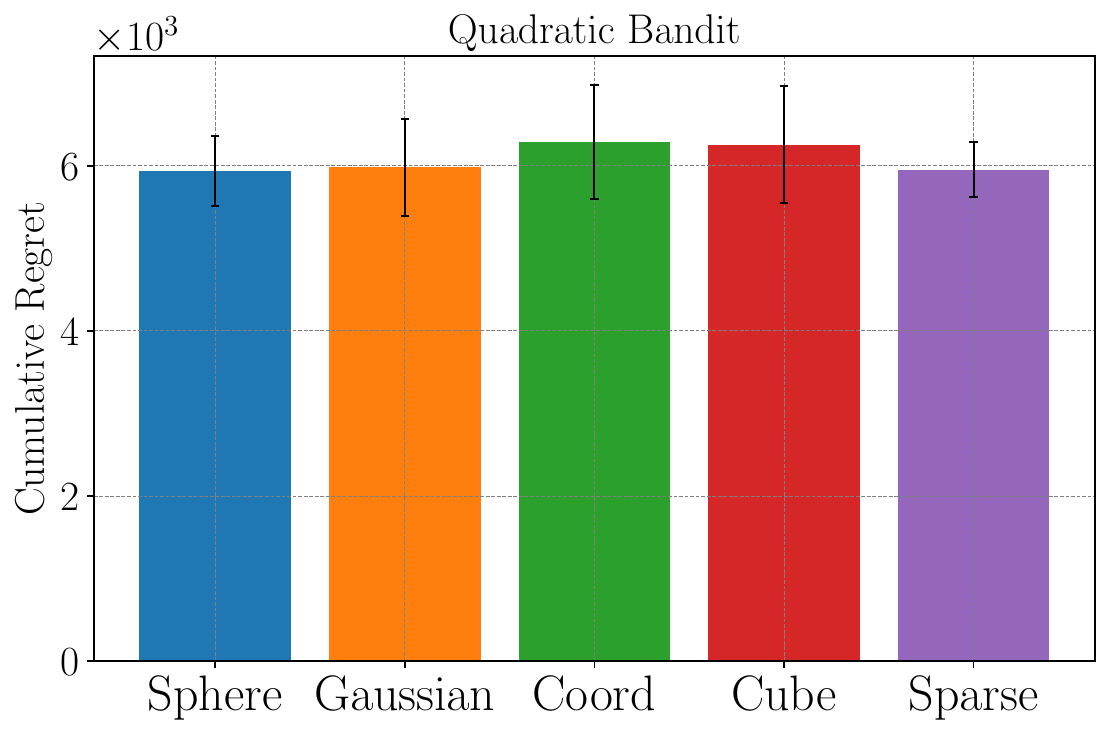}}
  \hspace{1mm}
  \subfigure[Results on perturbation distribution.]{
    \includegraphics[width=0.3\linewidth]{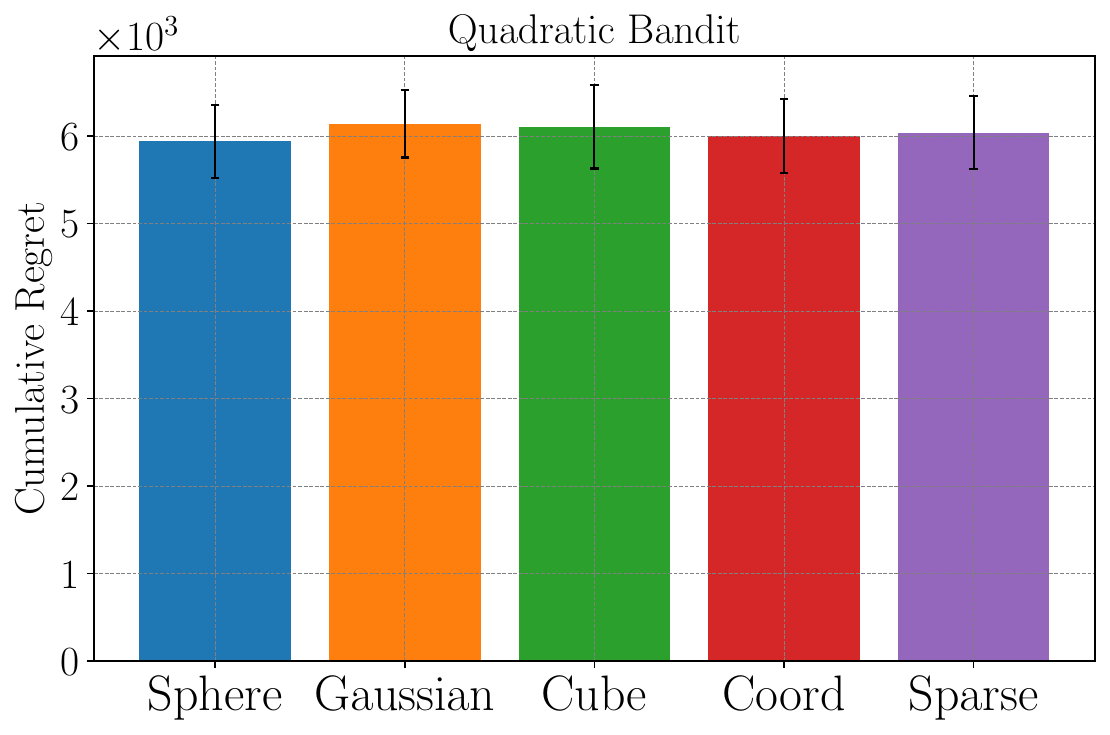}}
  \caption{Ablation studies about different distributions on the Quadratic Bandit.}
  \label{fig:quad_ablation}
\end{figure}

\section{Ensemble++ in Real-World Decision-Making: A Case Study on Content Moderation}
\label{sec:cm}

This section demonstrates how \emph{Ensemble++} can be integrated with large \emph{foundation models} (e.g.\ GPTs) to address real-time decision-making under uncertainty. We focus on the high-stakes domain of \textbf{content moderation} on social media platforms, where rare or borderline hateful content arises frequently. By fusing GPT-style feature extraction with Ensemble++, uncertainty-driven sampling selectively allocates human review to ambiguous posts.
This yields a scalable, adaptive pipeline that reduces moderator workload while improving overall safety and accuracy.

\subsection{Motivation: Content Moderation in Real-Time}
\label{sec:motivation-moderation}

Modern social-media platforms handle a vast volume of user-generated content every second, creating a \emph{critical need} for automated moderation~\citep{gorwa2020algorithmic, roberts2019behind}. Historically, human reviewers manually inspected each post to detect policy violations. However, as platforms like Facebook~\citep{meta2024community}, Twitter~\citep{x2024rules}, and Reddit~\citep{reddit2024automoderator} expanded to hundreds of millions of users, \emph{fully manual moderation} became infeasible. Consequently, \emph{AI-driven moderation systems} emerged, often leveraging \emph{foundation models}~\citep{openai2023gpt4moderation} (large pretrained language or vision models) for \emph{real-time} filtering.

Despite robust performance on the distributions seen during training, these large models often face \textbf{high uncertainty} in \emph{novel} or \emph{rare} content: emergent slang, subtle or borderline hate speech, or newly formed harassment styles~\citep{markov2023holistic}. A purely \emph{deterministic} policy (e.g., the model's single best guess) can err severely by
\begin{itemize}
  \item \textbf{over-blocking} benign content (harmful to user experience), or
  \item \textbf{under-blocking} hateful material (a safety hazard).
\end{itemize}
Hence, \textbf{human feedback} remains indispensable for correcting the system, especially on ambiguous or boundary cases. The key dilemma is \emph{when} to rely on human reviewers (which yields better learning but increases workload) versus \emph{auto-removing} content (which saves labor but risks higher error).

\paragraph{Human-AI Collaboration.}
\Cref{fig:content-moderation} depicts a \emph{human-in-the-loop} moderation pipeline:
\begin{enumerate}[leftmargin=*]
  \item A new post \(x_t\) arrives.
  \item The AI system either \emph{auto-removes} it or \emph{requests a human review}.
  \item If reviewed, a moderator provides a corrective label \(y_t\) (e.g., "hate" or "benign"), and the AI system updates its internal policy.
  \item Over time, decisions become more accurate, reducing human intervention.
\end{enumerate}
Balancing \emph{exploitation} (avoiding unnecessary reviews) and \emph{exploration} (gathering feedback on uncertain content) is central to improving moderation quality while minimizing reviewer workload. This tension aligns well with a \emph{contextual bandit} formulation.

\begin{figure}[ht]
  \centering
  \includegraphics[width=0.9\linewidth]{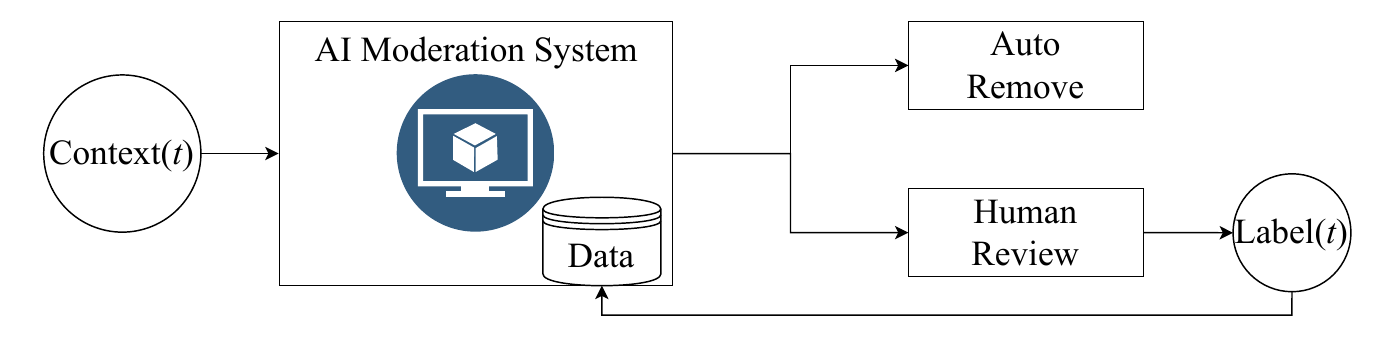}
  \caption{Real-time decision-making pipeline for content moderation. At each time $t$, the AI moderation system receives a post $x_t$, decides to \textbf{auto-remove} or \textbf{request human review}, then obtains feedback (if reviewed) to update its policy. This setup inherently involves uncertainty about \emph{borderline} or \emph{novel} content.}
  \label{fig:content-moderation}
\end{figure}

\subsection{Contextual Bandit Formulation}
\label{sec:cb-formulation}

We model moderation as a \textbf{contextual bandit}~\citep{wang2005bandit, langford2007epoch}:
\begin{itemize}
  \item \textbf{Context} \(x_t\): the textual (or multimedia) representation of the post at time $t$.
  \item \textbf{Action set} \(\mathcal{A}_t = \{\text{auto-remove},~\text{human-review}\}\).
  \item \textbf{Reward} \(r_t \in \R\): quantifying correctness vs.\ cost. For example:
        \begin{itemize}[leftmargin=1.3em]
          \item $+1$ for correctly publishing benign content,
          \item $-0.5$ for inadvertently publishing hateful content,
          \item $+0.5$ for blocking any post (safer fallback, but potentially suboptimal if content was benign).
        \end{itemize}
\end{itemize}
At each step, the agent chooses an action based on the context \(\{x_1,\dots,x_{t-1}\}\) and partial knowledge gained so far. Crucially, the agent must \emph{explore} suspicious or uncertain contexts (requesting reviews) to learn from human labels, while \emph{exploiting} confident predictions (auto-removing) to conserve human effort.

This \emph{exploration-exploitation} trade-off, typical of contextual bandits, poses a significant challenge for large-scale moderation pipelines. As we show next, \emph{foundation models alone} are not sufficient to address this adaptivity, motivating the need for an ensemble-based approach like \emph{Ensemble++}.

\subsection{Challenges of Foundation Models in Online Decision-Making}
\label{sec:foundation-models}

Large foundation models (e.g.\ GPT series) have shown remarkable generalist capability but lack
intrinsic \textbf{uncertainty modeling and adaptive exploration}~\citep{krishnamurthy2024large}.
Indeed, even top-tier LLMs can fail in multi-armed bandit or contextual-bandit scenarios
if not provided with explicit "memory" or "sampling" mechanisms~\citep{krishnamurthy2024large}.
Hence, \textbf{foundation models alone} often struggle in large-scale, real-time moderation because:
\begin{itemize}[leftmargin=1em]
  \item \textbf{Uncertainty Estimation.}
        Large models do not, by default, provide robust estimates of \emph{how uncertain} they are
        on out-of-distribution content. As a result, they can incur high misclassification rates for
        novel forms of hate speech, rapidly changing memes, or new harassment tactics.
  \item \textbf{Incremental Adaptation at Scale.}
        The moderation stream is both continuous and high-volume. We need an approach that updates
        quickly (in near-constant or modest cost per step) to keep pace with new data, while preserving
        strong overall performance.
\end{itemize}

\textbf{In short}, to address \emph{rare} or \emph{emerging} forms of hateful content,
a model must \textbf{actively explore} uncertain contexts and incorporate \emph{human corrections}
with minimal overhead. This is exactly where \textbf{Ensemble++} comes in.

\subsection{GPT-Ensemble++ for Content Moderation}
\label{sec:gpt-ensemble++-arch}

We now introduce \textbf{GPT-Ensemble++}, adapting the \emph{Ensemble++} agent (cf.~\cref{sec:ensemblepp:neural}) to text-based moderation scenarios with a foundation model backbone.

\paragraph{1. LLM Feature Extractor.}
We define $\phi(x; w)$, mapping a post $x$ into $\R^d$ using a GPT-2 (or Pythia14m) backbone, with $w$ either \emph{frozen} or \emph{partially finetuned}. This captures context and semantic cues.

\paragraph{2. Ensemble++ Decision Head.}
For each action \(a\), we define a base parameter \(b^a \in \R^d\) and an ensemble factor
\(\mathbf{A}^a \in \R^{d\times M}\).
At each time step, we sample a random reference vector \(\zeta \sim P_{\zeta}\subset \R^M\) (e.g.\ Gaussian).
Then the \textit{action-value} is:
\[
  f_{\theta}(x, \zeta)[a]
  \;=\;
  \bigl\langle \phi(x; w),\, b^a\bigr\rangle
  \;+\;
  \bigl\langle \operatorname{sg}[\phi(x; w)],\, \mathbf{A}^a \,\zeta \bigr\rangle.
\]
Hence, the agent picks
\(\arg\max_{a} f_{\theta}(x,\zeta)[a]\).
Drawing \(\zeta\) each round fosters \emph{randomized} (Thompson-like) exploration around uncertain
or borderline posts.

\paragraph{3. Incremental Updates.}
If the system chooses \texttt{human-review} for a post \(\,x_t\),
we obtain a corrective label \(\,y_t\) (hate vs.\ free) which implies a reward \(r_t\).
As describe in \cref{alg:neural_ens++}, we then update \(\,\theta=\{(b^a,\mathbf{A}^a),~w\}\)
using the \emph{symmetrized} objective with bounded gradient steps.
This step yields a fast, incremental refinement of the policy, allowing GPT-Ensemble++
to adapt quickly whenever new borderline cases arise in production.

\subsection{Experiments: Hate-Speech Detection}
\label{sec:mod-experiments}

\paragraph{Dataset and Setup.}
We employ a hate-speech dataset\footnote{\url{https://huggingface.co/datasets/ucberkeley-dlab/measuring-hate-speech}} of about $135\text{k}$ posts, each assigned a continuous "hate" score. Thresholding at 0.5 yields "hate" vs.\ "free." At round $t$, the agent sees $x_t$ (text), chooses \textbf{publish} ($A_t=1$) or \textbf{block} ($A_t=2$), and receives:
\[
  \begin{cases}
    +1   & \text{ if publishes a free post}, \\
    -0.5 & \text{ if publishes a hate post}, \\
    +0.5 & \text{ if blocks any post}.
  \end{cases}
\]
We embed text with GPT-2 or Pythia14m in either frozen or partially finetuned mode,
then feed into Ensemble++ or baselines.

\paragraph{Comparative Baselines.}
We consider:
\begin{enumerate}[leftmargin=1.5em]
  \item \textbf{Greedy}: A single LLM-based classifier with no ensemble factor, i.e.\ $\mathbf{A}^a=0$,
  \item \textbf{Ensemble+}~\citep{osband2018randomized}: multiple ensemble heads jointly upated,
  \item \textbf{Ensemble++} (ours): separated ensemble updates plus partial or full LLM finetuning.
\end{enumerate}
We also vary \textbf{frozen vs.\ finetuned} embeddings $w$ for GPT-based models.

\subsubsection{Results and Analysis}

\begin{figure}[t]
  \centering
  \includegraphics[width=0.56\linewidth]{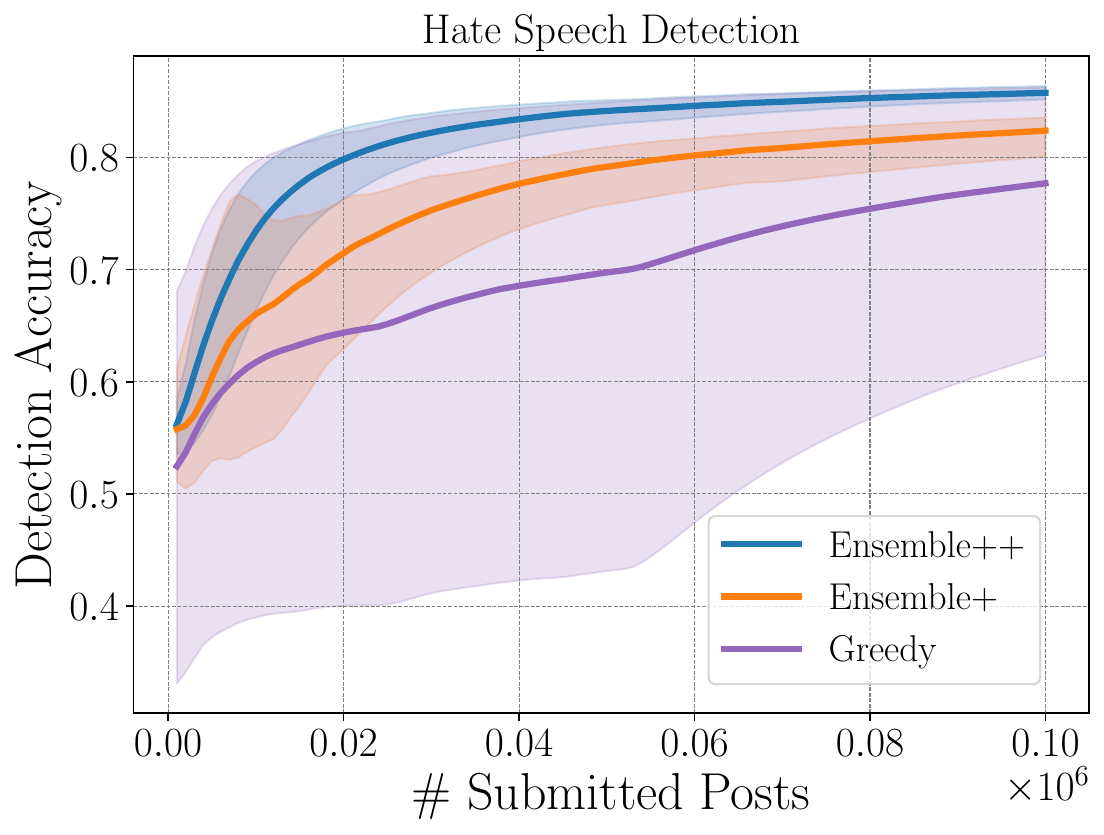}
  \caption{\emph{Detection accuracy} over time in hateful vs.\ free moderation, averaged across random seeds, as the number of submitted posts increases. \textbf{Ensemble++} (blue) outperforms Greedy (purple) and Ensemble+ (orange) with lower variance.}
  \label{fig:content-mod-curve}
\end{figure}

\paragraph{Uncertainty-Aware Gains.}
\cref{fig:content-mod-curve} shows that \emph{Ensemble++} significantly outperforms Greedy in cumulative reward, clarifying borderline expressions faster and reducing error variance.

\paragraph{Frozen vs.\ Finetuned.}
In \cref{fig:llm_res}, together with Ensemble++, full finetuning of GPT-based features yields further gains compared to frozen embeddings. This suggests that \emph{active adaptation} of the LLM backbone is crucial for handling evolving content.

\paragraph{Reduced Human Overhead.}
Although not depicted, Ensemble++ quickly pinpoints which posts are certain vs.\ borderline, leading to $\sim 80\%$ fewer "human-review" actions after $10^4$ steps compared to naive or deterministic triggers (e.g., Greedy).

\begin{figure}[htbp]
  \centering
  \subfigure[\small  Pythia14m ]{%
    \includegraphics[width=0.47\linewidth]{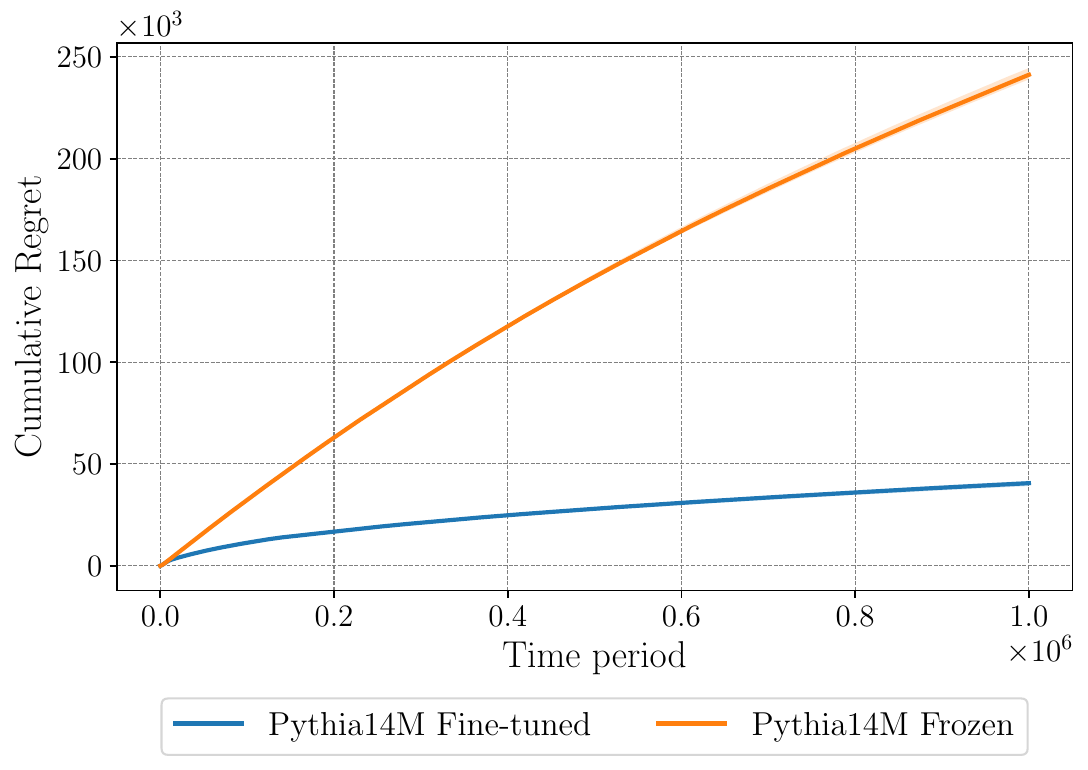}}
  \hspace{0.8em}
  \subfigure[\small GPT-2]{%
    \includegraphics[width=0.47\linewidth]{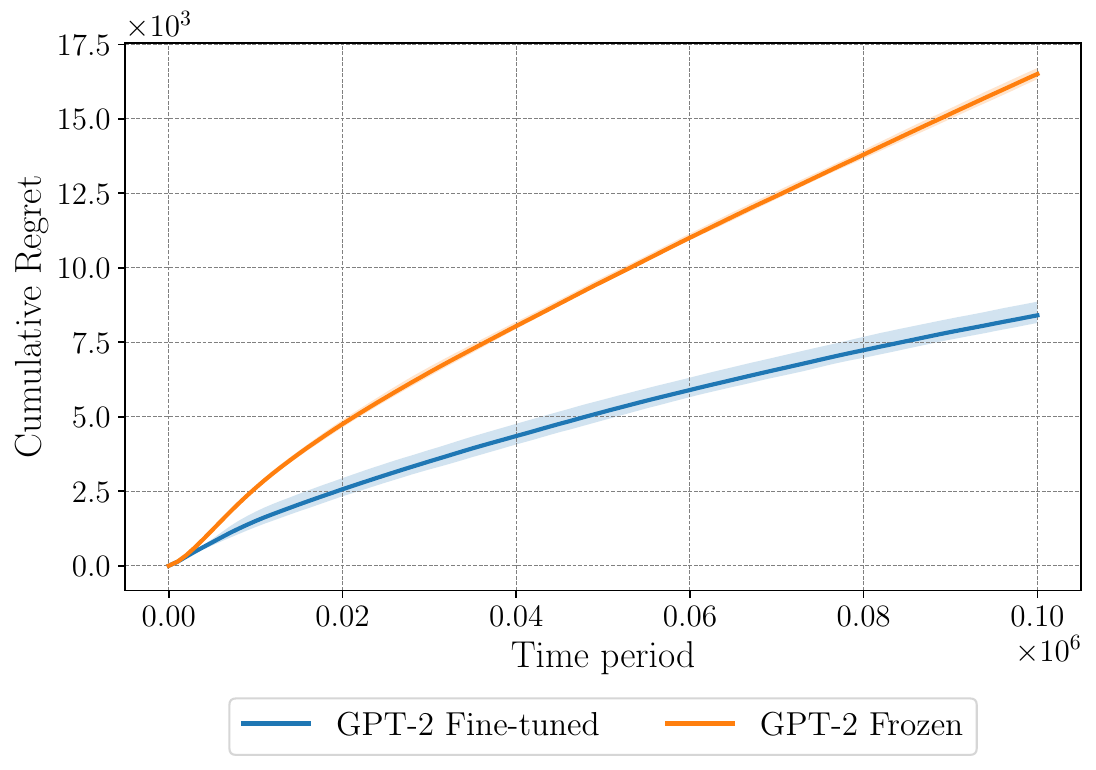}}
  \caption{Ablation in hateful-content moderation.
    (a,b) Fully finetuning yields stronger improvements in uncertain areas than freezing GPT (Pythia14m) backbone.}
  \label{fig:llm_res}
\end{figure}

\subsection{Conclusions \& Implications}
\label{sec:mod-conclusion}

In this chapter, we showed how \textbf{Ensemble++} can be integrated with \emph{foundation models} like GPT-2 for large-scale content moderation—a domain rife with domain shifts, ambiguous inputs, and costly feedback. Our key findings:
\begin{itemize}[leftmargin=1.5em]
  \item \emph{Uncertainty quantification}:\enspace
        Ensemble++  better identifies borderline or novel forms of hate speech, enabling more selective human intervention.
  \item \emph{Incremental adaptation}:\enspace
        The ensemble updates per step remain bounded, even with partial LLM finetuning.
  \item \emph{Reduced moderator workload}:\enspace
        By focusing reviews on genuinely uncertain posts, Ensemble++ drastically cuts human oversight needs.
\end{itemize}

Overall, these results highlight \emph{Ensemble++} as a powerful approach for real-time, uncertain tasks in industrial settings where foundation models alone lack the uncertainty-awareness needed for adaptive exploration.

Potential future directions include:
\begin{itemize}[leftmargin=1.5em]
  \item \emph{Multi-modal moderation}:\enspace
        Extending Ensemble++ to handle multimedia content (e.g., images, videos) for more comprehensive moderation.
  \item \emph{Adversarial robustness}:\enspace
        Investigating how Ensemble++ can adapt to adversarial attacks or adversarial examples in moderation tasks.
  \item \emph{Scalable crowdsourcing}:\enspace
        Integrating Ensemble++ with scalable human-in-the-loop systems to further reduce human moderation costs.
\end{itemize}

\end{document}